\documentclass{article}

% if you need to pass options to natbib, use, e.g.:
     \PassOptionsToPackage{round, compress}{natbib}
% before loading neurips_2025

% ready for submission
\usepackage[preprint]{neurips_2025}

% to compile a preprint version, e.g., for submission to arXiv, add add the
% [preprint] option:
%     \usepackage[preprint]{neurips_2025}

% to compile a camera-ready version, add the [final] option, e.g.:
%     \usepackage[final]{neurips_2025}

% to avoid loading the natbib package, add option nonatbib:
%    \usepackage[nonatbib]{neurips_2025}

\usepackage[utf8]{inputenc} % allow utf-8 input
\usepackage[T1]{fontenc}    % use 8-bit T1 fonts
\usepackage{hyperref}       % hyperlinks
\usepackage{url}            % simple URL typesetting
\usepackage{booktabs}       % professional-quality tables
\usepackage{amsfonts}       % blackboard math symbols
\usepackage{nicefrac}       % compact symbols for 1/2, etc.
\usepackage{xcolor}         % colors

%\documentclass{article}
%
%
%% if you need to pass options to natbib, use, e.g.:
%%     \PassOptionsToPackage{numbers, compress}{natbib}
%% before loading neurips_2024
%
%
%% ready for submission
%\usepackage[round]{natbib}
%\usepackage{neurips_2024}
%
%
%% to compile a preprint version, e.g., for submission to arXiv, add add the
%% [preprint] option:
%%     \usepackage[preprint]{neurips_2024}
%
%
%% to compile a camera-ready version, add the [final] option, e.g.:
%%     \usepackage[final]{neurips_2024}
%
%
%% to avoid loading the natbib package, add option nonatbib:
%%    \usepackage[nonatbib]{neurips_2024}
%
%
%\usepackage[utf8]{inputenc} % allow utf-8 input
%\usepackage[T1]{fontenc}    % use 8-bit T1 fonts
%\usepackage{hyperref}       % hyperlinks
%\usepackage{url}            % simple URL typesetting
%\usepackage{booktabs}       % professional-quality tables
%\usepackage{amsfonts}       % blackboard math symbols
%\usepackage{nicefrac}       % compact symbols for 1/2, etc.
%%\usepackage{microtype}      % microtypography  % uncomment before final version
%\usepackage{xcolor}         % colors
%
%
%
%% My packages
\usepackage{caption}
\setlength\tabcolsep{2.4pt}
\usepackage{lipsum} 
\usepackage{multirow}
\usepackage{graphicx}
\usepackage{amsmath}
\usepackage{amssymb}
\usepackage{amsthm}
\usepackage{algorithm}
\usepackage{algpseudocode}
\usepackage{bbm}
\usepackage{cancel}
\newcommand{\wh}{\widehat}
\newcommand{\wt}{\widetilde}
\newcommand{\pk}{{\varphi}}

\newcommand{\pr}{\mathbb{P}}
\newcommand{\E}{\mathbb{E}}
\newcommand{\tbl}[1]{\caption{#1}}

%\newcommand{\tblue}{\textcolor{blue}}
   % make all \tblue{} black
%\newcommand{\tred}{\textcolor{red}}
\newcommand{\tred}{{}}   % make all \tred{} black
 \newcommand{\indep}{{\perp\!\!\!\!\perp}} 

\newcommand{\XX}{\textsf{X}}

\newcommand{\supp}{\textrm{Appendix}}
\newtheorem{lemma}{Lemma}
\newtheorem{theorem}{Theorem}
\newtheorem{assumption}{Assumption}

\title{Evaluating and Learning Optimal Dynamic Treatment Regimes under Truncation by Death}

% The \author macro works with any number of authors. There are two commands
% used to separate the names and addresses of multiple authors: \And and \AND.
%
% Using \And between authors leaves it to LaTeX to determine where to break the
% lines. Using \AND forces a line break at that point. So, if LaTeX puts 3 of 4
% authors names on the first line, and the last on the second line, try using
% \AND instead of \And before the third author name.

\author{%
  Sihyung Park \\
%  \thanks{Use footnote for providing further information
%    about author (webpage, alternative address)---\emph{not} for acknowledging
%    funding agencies.} \\
  Department of Statistics\\
  North Carolina State University\\
  Raleigh, NC 27695 \\
  \texttt{spark52@ncsu.edu} \\
  % examples of more authors
  \And
   Wenbin Lu \\
   Department of Statistics\\
   North Carolina State University\\
   Raleigh, NC 27695 \\
   \texttt{wlu4@ncsu.edu} \\
  \AND
   Shu Yang \\
   Department of Statistics\\
   North Carolina State University\\
   Raleigh, NC 27695 \\
   \texttt{syang24@ncsu.edu} \\
  % \And
  % Coauthor \\
  % Affiliation \\
  % Address \\
  % \texttt{email} \\
  % \And
  % Coauthor \\
  % Affiliation \\
  % Address \\
  % \texttt{email} \\
}

\begin{document}

\maketitle

\begin{abstract}
Truncation by death, a prevalent challenge in critical care, renders traditional dynamic treatment regime (DTR) evaluation inapplicable due to ill-defined potential outcomes. 
We introduce a principal stratification-based method, focusing on the always-survivor value function. We derive a semiparametrically efficient, multiply robust estimator for multi-stage DTRs, demonstrating its robustness and efficiency. 
Empirical validation and an application to electronic health records showcase its utility for personalized treatment optimization.
\end{abstract}

\section{Introduction}

The implementation of evidence-based treatment strategies has grown in importance in healthcare, with dynamic treatment regimes (DTRs; \citealp{dtrrobins1997}) being a key component. 
DTRs can identify optimal treatment strategies even when data is collected from suboptimal policies. Extending beyond traditional static decision rules, DTRs offer personalized, multi-stage treatment sequences that adapt to evolving patient characteristics and treatment histories. Crucially, they facilitate individualized care by determining the right interventions at the right time, tailored to each patient's needs. This personalized approach is essential, particularly in chronic conditions, e.g., alcohol and drug abuse \citep{chronicmurphy2007}, AIDS (\citealp{dtrrobins1989aids}), cancer (\citealp{cancerzhao2009}), diabetes (\citealp{DTRchakraborty2014}), that are characterized by their prolonged duration, often involving repeated cycles of remission and exacerbation and necessitating ongoing medical intervention. Early works proposed g-computation \citep{dtrrobins1997}, inverse probability weighting (IPW; \citealp{robins2000marginal}), Q-learning (\citealp{watkins1992q}) and A-learning (\citealp{dtrmurphy2003}) to evaluate average clinical benefits -- value functions -- of treatment policies, while more recent literature employs doubly robust forms of value search methods (\citealp{zhang2013robust}) and outcome weighted learning (\citealp{owlliu2018}) to mitigate error due to potential model misspecification.

While statistical methods for estimating optimal DTRs have proliferated, their applicability is often limited by the requirement of well-defined potential outcomes. 
In studies, truncation by death is a common challenge, which arises when death precludes subsequent data collection thereby leaving outcomes of interest undefined. 
For example, 15\% of sepsis patients in the Medical Information Mart for Intensive Care III (MIMIC-III) database experience death, causing the outcomes at the final 48-hour time point undefined (\citealp{relive}). 
\tred{For individuals underwent truncation by death, the counterfactual outcomes are not simply missing but ill-defined. This distinguishes such cases from conventional missingness, as it is generally not appropriate to impute an outcome value when it is inherently tethered to a mortality event. }
Moreover, truncation by death creates non-comparable treatment groups because individuals who survived with treatment may have died without it, thus rendering the population-level value function ill-defined (\citealp{rubin2006causal}). 

Despite its fundamental difference, estimators designed for censoring, such as inverse probability of censoring weighting (\citealp{ipcwrobins2000}), are misapplied in practice to address truncation by death.
While utilizing time-to-death as the primary endpoint and constructing optimal DTRs to maximize survival-related value functions, e.g., $t$-year survival probability (\citealp{tyeardtr}) or restricted mean survival time (\citealp{relive}), is another prevalent strategy, there are situations where improving quality of life, functional status, or disease-specific symptoms takes precedence.
In these cases, existing approach may potentially neglect patient well-being.
Therefore, specialized methods designed to capture treatment effects beyond survival that explicitly account for truncation by death are essential for obtaining desired treatment strategies.

To overcome the difficulties, principal stratification (\citealp{ps1}, \citealp{ps2}) provides a valuable approach. Instead of estimating treatment effects across the entire population, this method concentrates on the always-survivor stratum, a latent subgroup characterized by survival under all treatment assignments. 
By focusing on the always-survivor stratum, which is not affected by truncation, we ensure that the potential outcomes and consequently the associated value function are well-defined. 
Techniques for identifying the always-survivor value function are presented in single-decision contexts (\citealp{chu23c}, \citealp{GROSSI2025}); however, extending these methods to multi-stage DTRs presents significant challenges, as they require navigating complex decision sequences, managing time-varying confounding, and modeling delayed or cumulative treatment effects under possibly correlated variables across time points.

\paragraph{Contributions}
This research introduces a methodology for estimating the always-survivor value function for optimal multiple-decision DTRs. 
	As a foundational step, we introduce a theoretical framework for applying principal stratification to multi-stage decision problems.
	Based on this framework, our methodology defines a well-defined estimand under truncation by death and proposes an efficient and robust method to estimate it. 
	Our contributions are as follows.
	\begin{enumerate}
		\item We define the always-survivor value function, which is well-defined and identifiable from the observed data (Section 3).
		\item We derive the efficient influence function and semiparametric efficiency bound of the estimand. Based on these results, we propose a multiply robust (MR), locally efficient estimator for always-survivor value function (Section 4). 
		    Section 5 demonstrates multiply robust off-policy learning using the proposed estimator.
		\item We empirically validate the theoretical properties of MR estimator in various nuisance model specification scenarios (Section 6). Across settings, the MR estimator consistently demonstrates robustness to nuisance model misspecification. We show MR estimator can facilitate decision-making for high-risk patients group by applying it to MIMIC-III database (Section 7).
	\end{enumerate}

Finally, we refer the reader to the \supp\text{ }for more details on proof of theorecial results (Section \ref{App:A1}) and technical description (Section \ref{App:A2}).

\section{Background and related work}
\paragraph{Notation}
Let $k = 1, \dots, K$ denote the sequence of decision points, and let $Z_k$ represent the generic observed variable of interest in our setting at time point $k$.
Following standard notation in dynamic treatment regimes, let $X_k$ be the pre-treatment covariates observed at time $k$, and $A_k$ be the treatment assignment at time $k$, made after observing $X_k$. 
Let $C_k$ and $S_k$ denote the censoring and survival indicators at time $k$, respectively. If a trajectory experiences dropout after observing $X_k$, we set $C_k = 1$ and consider $S_k$ and all variables at time points $(k+1)$ and beyond as unobserved. Otherwise, we set $C_k = 0$, and $S_k$ is observable. If a trajectory experiences death at time $k$, we set $S_k = 0$, and all variables at time points greater than $k$ are unobserved. Otherwise, we set $S_k = 1$, and the variables at the next stage are observed. These definitions ensure that the data are subject to monotone censoring and truncation by death. After $K$ treatments, we observe the outcome $Y$ if no censoring or death event occurred. Without loss of generality, we assume that larger values of $Y$ indicate better outcomes.

We utilize the potential outcome framework to define counterfactuals. Bar notation is used to denote a vector of variables up to a certain time. For instance, the sequence of treatment assignments up to time $k$ is represented as $\bar a_k = (a_1, \dots, a_k)$. We omit the subscript when $k = K$. We denote $Z_k^{\bar a_k} = Z_k^{a_1 \cdots a_k}$ as the potential outcome of $Z_k$ had it received treatments $a_1$ through $a_k$. Define $H_k = \{\bar X_k, \bar A_{k-1}\}$, and let ${\bf0}_k$ and ${\bf1}_k$ be zero and one-vector of dimension $k$, respectively.

To simplify the methodological development, we focus on the case with $K = 2$ decision points, noting that the results generalize readily to multiple decision points. The observed data are then represented as:
$
O = \{ X_1, A_1, C_1, (1-C_1)S_1, (1-C_1)S_1X_2, (1-C_1)S_1A_2, (1-C_1)S_1C_2, (1-C_1)(1-C_2)S_1S_2, (1-C_1)(1-C_2)S_1S_2Y \}.
$
We assume causal consistency, a standard assumption in causal inference, which implies consistent treatment effects on each unit and no interference between units. \tred{For example, since the severity of Sepsis is unlikely to be transmitted or influenced between patients, causal consistency is expected to hold in this case.}
	\begin{assumption}[Causal consistency] \label{A1} \
		$Z_1 = Z_1^{A_1}$ and $Z_2 = Z_2^{A_1A_2}.$ 
	\end{assumption}

For the value function to be identifiable, sequential randomization is typically assumed. Assumption \ref{AA} states this by ensuring the absence of unobserved confounding at each time point in treatment assignments.
	
	\begin{assumption}[Sequential randomization] \label{AA}
			$\{Y^{a_1a_2}, X_2^{a_1}\} \indep A_k \mid H_{k}$ for all $a_k$, $k=1,2$.
	\end{assumption}

Define the deterministic two-stage treatment policy $\pi = (\pi_1, \pi_2)$, where $\pi_1: \mathcal{X}_1 \rightarrow \mathcal{A} = \{0, 1\}$ maps the space of baseline covariates to the treatment space, and $\pi_2: \mathcal{H} \rightarrow \mathcal{A}$ maps the space of variables $\{X_1, A_1, X_2\}$ available at the second stage to the treatment space. For simplicity, let $\pi(\bar{x}) =  (\pi_1(X_1), \pi_2(H_2^{\pi}))$ denote the treatment assignments consistent with the policy, where $Z_1^\pi = Z_1^{\pi_1(X_1)}$ and $Z_2^\pi = Z_2^{\pi(\bar{X})}$ are the potential outcomes of $Z$ consistent with $\pi$.
	
Dynamic treatment regimes traditionally aim to find the optimal policy $\pi^*$ that maximizes $V(\pi) = \mathbb{E}[Y^\pi]$, the expected outcome under $\pi$, subject to Assumptions \ref{A1}, \ref{AA}, and treatment assignment positivity.

\paragraph{Principal stratification}	
Principal stratification classifies data into distinct latent strata defined by a principal stratification variable $U$, which is a combination of post-treatment counterfactuals. 
Among these strata, the always-survivor stratum is characterized by survival under all treatment assignments. Since this stratum is not affected by death, potential outcomes and, consequently, the value function are well-defined within it.

While principal stratification is a popular method for addressing treatment compliance and truncation by death (\citealp{ps3}), its traditional framework is limited to single-decision point problems. 
\cite{chu23c} proposed a multiply robust estimator for the always-survivor value under censoring and truncation by death; yet, their approach is inherently designed for single-decision scenarios and does not account for the complex time-varying correlation structure inherent in DTRs.
\cite{GROSSI2025} introduced principal stratification approaches that incorporate longitudinal survival indicators. However, their frameworks remain restricted to a single decision point immediately following the baseline.  
	
We extend principal stratification to multiple decision points, classifying data into latent groups based on the final-stage survival indicator $S_2$. Let $U = (S_2^{00}, S_2^{01}, S_2^{10}, S_2^{11})$ %be the principal stratification variable, 
that results in sixteen latent strata, with notable groups described in Table \ref{table:strata}. Our focus is on evaluating and optimizing policies using the always-survivor value function
	$
	V_{\text{AS}}(\pi) = \E\left[ Y^\pi | U = 1111 \right]
	$,
	rather than the traditional value function $\E[ Y^\pi ]$ that is ill-defined in the presence of death events.
	
\begin{table}
	\tbl{Description of Selected Principal Strata.}
{\footnotesize	\begin{tabular}{crrrcrrr}
	$U$  & Stratum type    & Description \\ 
	\hline
	1111 & Always-survivor & Patients who would survive regardless of the treatment assignments \\
	0111 & Protectable     & Patients who would survive if treated in either decision points, die otherwise. \\
	\vdots & \vdots & \vdots \\
	1000 & Defier          & Patients who would die if treated in either decision points, survive otherwise. \\
	0000 & Never-survivor  & Patients who would die regardless of the treatment assignments.   
	\end{tabular} }
	\label{table:strata}
\end{table}

Since principal strata are defined using counterfactuals, they are inherently latent. Therefore, it is essential to identify the always-survivor value function from observed data. We adopt monotonicity and principal ignorability assumptions, following \cite{ps3}.
\begin{assumption}[Monotonicity] \label{A2} \
	$S_2^{11} \ge S_2^{10}, S_2^{11} \ge S_2^{01}, S_2^{10} \ge S_2^{00}, S_2^{01} \ge S_2^{00}, S_1^1 \ge S_1^0, S_1^{a_1} \ge S_2^{a_1a_2}$, 
	and $C_2^{a_1a_2} \ge C_1^{a_1}$ 
	almost surely.
\end{assumption}
\begin{assumption} \label{A:S2} 
	$\E[g(X_2^{a_1})|X_1,\bar S^{a_1a_2}={\bf 1}_2]=\E[g(X_2^{a_1})|X_1,S_1^{a_1}=1]$ for an integrable $g$.
\end{assumption}
\begin{assumption}[Principal ignorability] \ \label{A4} \ 
	For $u_1,u_2,u_3 \in \{0, 1\}$,
	
	(i) $\E[Y^{01} \mid \bar X^0, U=1111] = \E[Y^{01} \mid \bar X^0, U=u_11u_31]$, 
		$\E[Y^{10} \mid \bar X^1, U=1111] = \E[Y^{10} \mid \bar X^1, U=u_1u_211]$, and 
		$\E[Y^{11} \mid \bar X^1, U=1111] = \E[Y^{11} \mid \bar X^1, U=u_1u_2u_31]$.
			  
    (ii) $\E[g(X_2^1) \mid X_1, U=1111] = \E[g(X_2^1) \mid X_1, U=u_1u_2u_31]$ and
		 $\E[g(X_2^0) \mid X_1, U=1111] = \E[g(X_2^0) \mid X_1, U=u_11u_31]$ for an integrable $g$.
\end{assumption}
\tred{Monotonicity is a standard assumption which states that for any given patient, their survival status under treatment would be no worse than their survival status had they not received the treatment \citep{sommer1991estimating, follmann2006augmented}. It is often plausible in studies where providers cannot assign inferior treatments, and it automatically holds when only the fully treated units survive.}
Assumption \ref{A:S2} implies that the mean of a function of $X_2$ given past data does not depend on future survival.
Principal ignorability (Assumption \ref{A4}) allows us to identify the distribution of outcomes within a principal stratum using an observed stratum, given covariates and treatment assignments; 
\tred{an obvious example of principal ignorability occurs when the characteristics of the always-survivor stratum align with those of the observed survivor population.} 
In our specific case, mean ignorability is sufficient, representing a significantly less stringent condition. 
Under Assumptions \ref{A1}, \ref{A2}, and \ref{A:S2}, Assumption \ref{A4} simplifies to $\mathbb{E}[Y \mid \bar{X}, \bar{A}, U = 1111] = \mathbb{E}[Y \mid \bar{X}, \bar{A}, S_2 = 1]$ and $\E[g(X_2) \mid X_1, A_1, U = 1111] = \E[g(X_2) \mid X_1, A_1, S_2 = 1]$; the conditional means of always-survivors match those of observed survivors.

\begin{table}[]
\tbl{List of notation.}
{\footnotesize
\begin{tabular}{ll}
%\hline
Symbol                       & Description                                                                                                                                                       \\ \hline
$\pi$                        & A deterministic treatment policy.                                                                                                                                 \\
$X_k$                        & Baseline (if $k=1$) or intermediate covariates (if $k=2,\dots,K$).                                                                                                 \\
$A_k$                        & Treatment indicator at $k$. $1$ for treatment, $0$ for control.                                                                                                   \\
$C_k$                        & Censorship indicator at $k$. $1$ for missing, $0$ for observed.                                                                                                   \\
$S_k$                        & Survival indicator at $k$. $1$ for survival, $0$ for death.                                                                                                       \\
$Y$                          & Outcome of interest.                                                                                                                                              \\
$H_k$                        & History up to $k$. i.e., $\{\bar X_k, \bar A_{k-1}\}$.                                                                                                            \\
$U$                          & Principal stratification variable. i.e., $S_2^{00}S_2^{01}S_2^{10}S_2^{10}$.                                                                                      \\
$e_k^{\bar a_k}(\bar x_k)$   & Propensity score at $k$ defined as $\pr(A_k=a_k \mid \bar x_k, \bar a_{k-1} \bar C_{k-1}={\bf0}_{k-1}, \bar S_{k-1}={\bf1}_{k-1})$.                               \\
$c_k^{\bar a_k}(\bar x_k)$   & Censoring probability at $k$ defined as $\pr(C_k=0 \mid \bar x_k, \bar a_k, \bar C_{k-1}={\bf0}_{k-1}, \bar S_{k-1}={\bf1}_{k-1})$.                               \\
$p_k^{\bar a_k}(\bar x_k)$   & Survival probability at $k$ defined as $\pr(S_k=1 \mid \bar x_k, \bar a_k, \bar C_k={\bf0}_k, \bar S_{k-1}={\bf1}_{k-1})$.                                        \\
${\pk}_k^{\bar a_k}(\bar x_k)$ & Joint propensity-censoring probability at $k$ defined as $e_k^{\bar a_k}(\bar x_k)c_k^{\bar a_k}(\bar x_k)$. \\
$\mu_2^{a_1a_2}(\bar X)$     & Outcome regression model defined as $\E[ Y | \bar X, (a_1,a_2), \bar C = (0,0), \bar S = (1,1) ]$.                                                                \\
$m_{\mu_2}^{a_1a_2}(X_1)$    & Outcome regression model at $k=1$ defined as $\E[ \mu_2^{a_1a_2}(\bar X){\bf1}\{A_2=a_2\} | X_1, a_1, C_1=0, S_1=1 ]$.                           \\
$m_{p_2}^{a_1a_2}(X_1)$      & Eventual survival probability at $k=1$ defined as $\E[ p_2^{a_1a_2}(\bar X) | X_1, a_1, C_1=0, S_1=1 ]$.                                                          \\ %\hline
\end{tabular}}
\label{table:notation}
\end{table}

\section{Nonparamteric identification}
We define propensity score $e^{\bar a_k}_k(\bar x_k) = \pr(A_k=a_k \mid \bar x_k, \bar a_{k-1} \bar C_{k-1}={\bf0}_{k-1}, \bar S_{k-1}={\bf1}_{k-1})$, censoring probability $c^{\bar a_k}_k(\bar x_k) = \pr(C_k=0 \mid \bar x_k, \bar a_k, \bar C_{k-1}={\bf0}_{k-1}, \bar S_{k-1}={\bf1}_{k-1})$, and survival probability $p_k^{\bar a_k}(\bar x_k) = \pr(S_k=1 \mid \bar x_k, \bar a_k, \bar C_k={\bf0}_k, \bar S_{k-1}={\bf1}_{k-1})$ for $k=1,\dots,K$.
Often, it is more convenient to jointly model $e_k$ and $c_k$. We denote this combined model as ${\pk}_k^{\bar a_k}(\bar x_k) = e_k^{\bar a_k}(\bar x_k) c_k^{\bar a_k}(\bar x_k)$.
Additionally, we define nuisance models: 
$\mu_2^{a_1a_2}(\bar X) = \E[ Y | \bar X, \bar A = (a_1,a_2), \bar C = (0,0), \bar S = (1,1) ], ~
m_{p_2}^{a_1a_2}(X_1) = \E[ p_2^{a_1a_2}(\bar X) | X_1, A_1=a_1, C_1=0, S_1=1 ], ~
m_{\mu_2}^{a_1a_2}(X_1) = \E[ \mu_2^{a_1a_2}{\bf1}\{A_2=a_2\}(\bar X) 
    | X_1, A_1=a_1, C_1=0, S_1=1 ].$
For a fixed policy \(\pi\), the outcome model under $\pi$ is given by $\mu_2^\pi(\bar{X}) = \mu_2^{\pi(\bar X)}(\bar{X})$. Similarly, we define $e_k^{\pi}$, $c_k^{\pi}$, $p_k^{\pi}$, $m_{p_2}^\pi$, and $m_{\mu_2}^\pi$ for $k = 1, 2$. Table \ref{table:notation} summarizes the notation.

A modified sequential randomization is introduced.		

\begin{assumption}[Sequential randomization] \label{A3}
	For $k=1,2$, 
	(i) $\{S_1^{a_1}, S_2^{a_1a_2}, C_1^{a_1}, C_2^{a_1a_2}\} \indep A_k \mid H_{k}$, 
	(ii) $Y^{a_1a_2} \indep A_k \mid \{H_{k}, S_2^{a_1a_2}=1\}$, 
	$X_2^{a_1} \indep A_k \mid \{H_{k}, S_1^{a_1}=1\}$.
\end{assumption}
Assumption \ref{A3}(i) indicates that counterfactual indicators are independent of treatment assignment, conditional on the history. Assumption \ref{A3}(ii) implies that, among potential survivors, treatment assignment depends on counterfactual (intermediate) outcomes only through the history. 
This holds by design in Sequential, Multiple Assignment, Randomized Trials (SMARTs). In observational studies, it holds when all confounders of outcomes and missingness indicators are observed.

In addition, we assume positivity and bounded outcome models.
\begin{assumption} \  \label{A8}
	For some $\epsilon > 0$ and $L<\infty$, we have, for almost all $\bar x$, $\bar a$, 
	\;\textnormal{(i)} (Positivity)
			$e_k^{\bar a_k}(\bar x), c_k^{\bar a_k}(\bar x), p_k^{\bar a_k}(\bar x) \ge \epsilon$, $k=1,2$, and 
	\;\textnormal{(ii)} (Bounded mean)
			$|\mu_2^{a_1a_2}(\bar x)|, |m_{\mu_2}^{a_1a_2}(x_1)| < L$.
\end{assumption}
Positivity ensures that propensity scores, censoring and survival probabilities are strictly positive.  
In practice, both assumptions are generally expected to hold, as typically the outcome of interest, for example, the sequential organ failure score in the MIMIC-III dataset, is inherently bounded.

We further assume that censoring occurs according to the missing at random (MAR) mechanism.
\begin{assumption} \ \label{A5}
	For $k=1,2$ and $a_1, a_2, a_1', a_2' \in \{0,1\}$,  
	\;\textnormal{(i)} $S_2^{a_1a_2} \indep C_2^{a_1'a_2'} \mid H_2,$~ $\bar S^{a_1a_2} \indep \bar C^{a_1'a_2'} \mid X_1$,
	\;\textnormal{(ii)} $Y^{a_1a_2} \indep \bar C_k^{\bar a_k} | \{H_{k}^{\bar a_{k}}, S_k^{\bar a_k}=1\}$,
				$X_2^{a_1} \indep C_1^{a_1} | \{X_1, S_1^{a_1}=1\}$.
\end{assumption}
Assumption \ref{A5} states that, conditional on the history, censoring is non-informative when estimating survival probabilities, and when estimating outcomes among potential survivors. 
With the addition of Assumption \ref{A5}, Assumption \ref{A4} simplifies to $\mathbb{E}[Y \mid \bar{X}, \bar{A}, U = 1111] = \mathbb{E}[Y \mid \bar{X}, \bar{A}, C_2 = 0, S_2 = 1]$ and $\E[g(X_2) \mid X_1, A_1, U = 1111] = \E[g(X_2) \mid X_1, A_1, C_2 = 0, S_2 = 1]$, respectively, thereby identifying the always-survivor distributions of outcomes via observed data.
	
In the first theorem, we identify the always-survivor value function with no restriction imposed on the distribution of the variables other than principal ignorability.
	\begin{theorem} \label{thm:1}
		For a fixed policy $\pi = (\pi_1, \pi_2)$, under assumptions \ref{A1} and \ref{A2}-\ref{A5}, the always-survivor value is identified as
	\begin{align}
		&V_\textnormal{AS}(\pi) =  
			\frac{\E[p_1^0(X_1) m_{p_2}^{00}(X_1) m_{\mu_2}^{\pi}(X_1) ]}
			{ \E[p_1^0(X_1) m_{p_2}^{00}(X_1)] }  \label{Vid} \\
		&= \tred{
		  \E\bigg[ p_1^0(X_1) m_{p_2}^{00}(X_1) \frac{\mathbf{1}\{\bar A=\pi(\bar X)\}(1-C_1)(1-C_2)S_1S_2}{\pk_1^\pi(X_1)\pk_2^\pi(\bar X)p_1^\pi(X_1)p_2^\pi(\bar X)} Y \bigg]  
		  \Big/  \E[ p_1^0(X_1) m_{p_2}^{00}(X_1) ]. 
		  }  \label{Vipw}
	\end{align}
	\end{theorem}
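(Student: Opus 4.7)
My plan is to apply Bayes' rule,
\[
V_\textnormal{AS}(\pi) \;=\; \frac{\E\bigl[\,Y^{\pi}\,\mathbf{1}\{U = 1111\}\,\bigr]}{\Pr(U = 1111)},
\]
identify each factor from the observed data, and then obtain (\ref{Vipw}) from (\ref{Vid}) by unwinding the outcome regressions with a standard sequential inverse-weighting argument. The pivotal reduction is that monotonicity (Assumption~\ref{A2}) forces $\{U = 1111\} = \{S_2^{00} = 1\}$ almost surely, since $S_2^{00} = 1$ implies $S_2^{a_1 a_2} = 1$ for every $(a_1, a_2)$; this replaces the latent stratum event by a single counterfactual survival event and drives the rest of the argument.

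For the denominator I condition on $X_1$ and chain $\Pr(S_2^{00} = 1 \mid X_1) = \Pr(S_1^{0} = 1 \mid X_1)\,\Pr(S_2^{00} = 1 \mid X_1, S_1^{0} = 1)$. Sequential randomization (Assumption~\ref{A3}), consistency (Assumption~\ref{A1}), and MAR (Assumption~\ref{A5}) identify the first factor with $p_1^{0}(X_1)$. For the second factor I iterate over $X_2^{0}$, use Assumption~\ref{A:S2} to strip the latent conditioning on the future survival $S_2^{00} = 1$ from the $X_2^{0}$-conditional survival probability, and then apply sequential randomization, consistency, and MAR once more to pass to observed-data probabilities, producing $m_{p_2}^{00}(X_1)$. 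Hence $\Pr(U = 1111) = \E[p_1^{0}(X_1)\,m_{p_2}^{00}(X_1)]$.

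For the numerator, I write $\E[Y^{\pi} \mathbf{1}\{U = 1111\}] = \E\bigl[\Pr(U = 1111 \mid X_1)\,\E[Y^{\pi} \mid X_1, U = 1111]\bigr]$ and identify $\E[Y^{\pi} \mid X_1, U = 1111]$ with $m_{\mu_2}^{\pi}(X_1)$ in two iterations. Conditioning on $X_2^{\pi_1}$ and invoking principal ignorability~(i) (combined with monotonicity and MAR) equates $\E[Y^{\pi} \mid \bar X^{\pi_1}, U = 1111]$ with the observed outcome regression $\mu_2^{\pi}(\bar X)$ evaluated at the action matching $\pi$. Then principal ignorability~(ii), together with Assumption~\ref{A:S2} and MAR, allows me to replace the latent distribution of $X_2^{\pi_1} \mid X_1, U = 1111$ by the observed distribution of $X_2 \mid X_1, A_1 = \pi_1(X_1), C_1 = 0, S_1 = 1$. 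Substituting yields (\ref{Vid}). Equation (\ref{Vipw}) then follows by applying the sequential IPW identity twice: once to $\mu_2^{\pi}(\bar X)$, introducing weights $\mathbf{1}\{A_2 = \pi_2\}(1 - C_2) S_2 / \{\pk_2^{\pi}(\bar X)\,p_2^{\pi}(\bar X)\}$ and pulling $Y$ into the estimator, and once to $m_{\mu_2}^{\pi}(X_1)$, introducing weights $\mathbf{1}\{A_1 = \pi_1\}(1 - C_1) S_1 / \{\pk_1^{\pi}(X_1)\,p_1^{\pi}(X_1)\}$; the prefactor $p_1^{0}(X_1)\,m_{p_2}^{00}(X_1)$ is a function of $X_1$ alone and rides along unchanged.

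The main obstacle is the numerator step. Principal ignorability (Assumption~\ref{A4}) is stated in the strong form that equates the conditional mean at $U = 1111$ with the conditional mean across every stratum sharing the final coordinate equal to $1$; to use it, I must average these identities over the relevant stratum-conditional distribution and then feed the result into Assumption~\ref{A:S2} to eliminate the remaining latent future-survival conditioning on $X_2^{\pi_1}$. Keeping the potential-outcome indices aligned with the correct observed-data events at each iteration is where the care lies; the denominator calculation and the IPW derivation are largely mechanical once this identification is in place.
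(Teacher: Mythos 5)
Your proposal is correct and follows essentially the same route as the paper: it reduces $\{U=1111\}$ to $\{S_2^{00}=1\}$ via monotonicity, identifies $\Pr(U=1111\mid X_1)=p_1^0(X_1)m_{p_2}^{00}(X_1)$ (the paper packages this as its Lemma~\ref{lemma:1} via Bayes' theorem), identifies the always-survivor conditional outcome mean through principal ignorability together with Assumptions~\ref{A:S2}, \ref{A3} and \ref{A5}, and obtains \eqref{Vipw} by sequentially unweighting the outcome regressions exactly as in the paper's alternative derivation. The only quibbles are cosmetic: Assumption~\ref{A:S2} is not actually needed for the denominator (the paper handles that step with a Bayes density-ratio manipulation under Assumptions~\ref{A3} and \ref{A5}), and the step equating the stratum-conditional mean with $\mu_2^\pi$ also uses consistency and sequential randomization, which you leave implicit.
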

	
Identification \eqref{Vid} expresses the always-survivor value as a weighted mean of the outcome model for observed survivors, where the weight is given by the principal score
$\omega(X_1) = {\Pr(U = 1111 \mid X_1)}/{\Pr(U = 1111)} = {p_1^0(X_1) m_{p_2}^{00}(X_1)}/{\mathbb{E}[p_1^0(X_1) m_{p_2}^{00}(X_1)]}$. If all trajectories are always-survivors and no censoring occurs, then $\omega(X_1) = 1$, and \eqref{Vid} reduces to the complete-case Q-learning, or outcome regression (OR), methodology. 
\tred{Similarly, identification \eqref{Vipw} is analogous to the inverse probability weighting (IPW) identification when all units are always-survivors.}

A straightforward estimator based on this observation is \textit{principal Q-learning}, which models the always-survivor value using survivor Q-functions and the principal score. 
Consider the estimator $\wh{V}_{Q}(\pi) = \mathbb{E}_n[\wh{\omega}(X_1) \wh{m}_{\mu_2}^\pi(X_1)]$, where $\mathbb{E}_n$ denotes the empirical mean. As is typical in Q-learning, the conditional outcome $\wh{m}_{\mu_2}^{\pi}(X_1)$ can be fitted backward-recursively: 
\begin{gather*}
	Q_2(\bar{x}; \pi) \equiv \hat{\mu}_2^\pi(\bar{x}) = \hat{\mathbb{E}} \left[ Y \mid \bar{x}, \pi(\bar{x}), \bar{C} = \mathbf{0}_2, \bar{S} = \mathbf{1}_2 \right], \quad \tilde{V}^\pi := Q_2(\bar{x}; \pi), \\
	Q_1(x_1; \pi) \equiv \wh{m}_{\mu_2}^\pi(X_1) = \wh{\mathbb{E}} \left[ \tilde{V}^\pi \mid x_1, \pi_1(x_1), C_1 = 0, S_1 = 1 \right].
\end{gather*}

Principal Q-learning requires correct specification of the outcome models to ensure consistency with the always-survivor value function. 
In the following section, we aim to derive an estimator based on the efficient influence function, which can offer protection against misspecification of these outcome models.

\section{Multiply robust off-policy evaluation}
Starting from the identification formula, we derive the efficient influence function (EIF) of $V_{\text{AS}}(\pi)$, from which the semiparametric efficiency bound follows. 
\tred{Let $A_0=C_0=0$ and $\varphi_0^{\cdot}(\cdot)=1$. Define $D(O) = \E[p_1^0(X_1) m_{p_2}^{00}(X_1)]$ and $N(O;\pi) = \E[p_1^0(X_1) m_{p_2}^{00}(X_1) m_{\mu_2}^{\pi}(X_1)]$ be the denominator and the numerator of identification \eqref{Vid} given the observed data $O$ and a polity $\pi$, respectively. Theorem \ref{thm:2} characterizes the EIF of \eqref{Vid}.}

\begin{theorem} \label{thm:2}
	Under assumptions \ref{A1} and \ref{A2}-\ref{A5}, the efficient influence function of $V_{\text{AS}}(\pi)$ is $\mathcal \psi_{V(\pi)}(O) = \left\{ \phi_{N(\pi)}(O) - V_{\text{AS}}(\pi) \phi_D(O) \right\} / D(O)$ where
	\tred{
	\begin{align*}
		&\phi_D(O)
		 = \frac{\mathbf{1}\{\bar A\textrm{ = }\bar C\textrm{ = }{\bf 0}_2\}}{\pk_1^{0}(X_1)\pk_2^{{\bf 0}_2}(\bar X)} S_1S_2 + \sum_{k=1}^2 
		   \bigg\{ \prod_{j=0}^{k-1} \frac{\mathbf{1}\{A_j\textrm{ = }C_j\textrm{ = }0\}}{\pk_j^{{\bf 0}_j}(\bar X_j)} - \prod_{j=0}^k \frac{\mathbf{1}\{A_j\textrm{ = }C_j\textrm{ = }0\}}{\pk_j^{{\bf 0}_j}(\bar X_j)} \bigg\} Q_{S,k}, \\
		&\phi_{N(\pi)}(O) 
		= Q_{Y,1} \phi_D(O) + \sum_{k=1}^2 \prod_{j=1}^k \frac{\mathbf1\{\pi_j(\bar X_j)\textrm{ = }A_j\}(1-C_j)S_j}{\pk_j^{\pi}(\bar X_j)p_j^{\pi}(\bar X_j)} \big(Q_{Y,j+1} - Q_{Y,j}\big) Q_{S,1}
		,
	\end{align*}
	with $Q_{S,j}$, $Q_{Y,j}$ defined as in the Appendix \ref{app:thm2}.
	}
	Thus, the semiparametric efficiency bound for $V_{\text{AS}}(\pi)$ is $\Upsilon(\pi) = \E\left[ \mathcal \psi_{V(\pi)}^2(O) \right].$
\end{theorem}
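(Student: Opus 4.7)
The plan is to exploit the ratio structure $V_{\text{AS}}(\pi) = N(\pi)/D$ given by identification \eqref{Vid} and apply the standard delta-method identity for pathwise derivatives of smooth functionals,
$$\psi_{V(\pi)}(O) = \frac{\phi_{N(\pi)}(O) - V_{\text{AS}}(\pi)\,\phi_D(O)}{D},$$
which reduces the task to computing the uncentered EIFs $\phi_D$ and $\phi_{N(\pi)}$ of $D = \E[p_1^0(X_1)m_{p_2}^{00}(X_1)]$ and $N(\pi) = \E[p_1^0(X_1)m_{p_2}^{00}(X_1)m_{\mu_2}^{\pi}(X_1)]$ separately. For both, I will take a regular one-dimensional parametric submodel $\{P_\epsilon\}$ with score $s(O)$, differentiate the functional at $\epsilon = 0$, and read off the EIF from the representation $\E[\phi(O)s(O)]$.

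First I would derive $\phi_D$. Writing $D$ via iterated expectations and expanding it as a product over the sequential factorization of the likelihood, I would differentiate with respect to the factors governing $X_1$, then the conditional $p_1^0(X_1)$, then the conditional $p_2^{00}(\bar X)$ (the latter being wrapped in $m_{p_2}^{00}$ through another iterated expectation). Each of these derivatives contributes a term that, after invoking sequential randomization (Assumption \ref{A3}) and MAR (Assumption \ref{A5}) to switch between conditioning on $\{A_k=C_k=0\}$ and the counterfactual events, can be rewritten using the IPW representation \eqref{Vipw} evaluated at the control policy. This yields the ``IPW plus sequential augmentation'' form stated in the theorem, where each augmentation $\{\prod_{j=0}^{k-1}\cdot - \prod_{j=0}^{k}\cdot\}Q_{S,k}$ encodes the residual contribution of $p_k^{\mathbf 0_k}$ after the previous stages have been accounted for.

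Second I would derive $\phi_{N(\pi)}$ by the same mechanism, but now the outcome regression $m_{\mu_2}^{\pi}$ adds an additional ``outcome layer'' to the sequential decomposition. The backward-recursive Q-functions $Q_{Y,k}$ appear naturally: since $N(\pi) = \E[D\text{-weights}\cdot Q_{Y,1}]$, variation through the $D$-components reproduces $Q_{Y,1}\phi_D(O)$, while variation through the outcome conditionals at each stage $k$ produces a telescoping augmentation $(Q_{Y,j+1}-Q_{Y,j})$ weighted by the inverse of the joint propensity, censoring, and survival probabilities under $\pi$, restricted to trajectories that follow $\pi$ and remain uncensored and alive. Using Assumption \ref{A4} in its simplified form along with Assumption \ref{A:S2} permits replacing the always-survivor conditional means by their observed-data counterparts throughout.

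Finally, combining $\phi_D$ and $\phi_{N(\pi)}$ through the delta-method identity yields $\psi_{V(\pi)}$; since the model is nonparametric, the tangent space is the full mean-zero Hilbert space, so the derived influence function is automatically efficient and $\Upsilon(\pi) = \E[\psi_{V(\pi)}^2(O)]$. The main obstacle will be the bookkeeping needed to differentiate the nested nuisance $m_{p_2}^{00}(X_1) = \E[p_2^{00}(\bar X)\mid X_1, A_1=0, C_1=0, S_1=1]$: the chain rule introduces contributions from both the inner $p_2^{00}$ and the conditional law of $X_2$, and each must be carefully realigned — using iterated expectations together with Assumptions \ref{A3} and \ref{A5} — to collapse to an observed-data score that multiplies exactly the weight and indicator structure displayed in the theorem. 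Tracking the indicator functions $\mathbf{1}\{A_j=C_j=0\}$, the survival factors $S_j$, and the policy-consistency indicators through this double iteration is where the proof becomes delicate, and I would organize it by handling each nuisance factor one at a time in the order $P_{X_1},\ p_1^0,\ P_{X_2\mid\cdot},\ p_2^{00},\ \mu_2^\pi$, verifying at each step that the residual term takes the claimed form.
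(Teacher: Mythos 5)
Your proposal takes essentially the same route as the paper: the paper also derives the uncentered influence functions $\phi_D$ and $\phi_{N(\pi)}$ by pathwise differentiation of the denominator and numerator functionals along submodels (it uses point-mass contamination G\^ateaux derivatives rather than general scores, a purely presentational difference), decomposing each derivative into contributions from the law of $X_1$, $p_1^0$, the nested $m_{p_2}^{00}$ (covering both the $X_2$-conditional and $p_2^{00}$), and $m_{\mu_2}^\pi$, and then combining them via the quotient rule $\psi_{V(\pi)} = \{\phi_{N(\pi)} - V_{\text{AS}}(\pi)\phi_D\}/D$ with efficiency following from the nonparametric tangent space. Your identification of the delicate step (the chain-rule bookkeeping for the nested conditional means, realigned via Assumptions \ref{A3} and \ref{A5}) matches exactly where the paper's computation does the work.
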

	
Based on the EIF, we construct a multiply robust (MR) estimator for $V_{\text{AS}}(\pi)$:
\begin{align} \label{Vmr}
	\wh{V}_{\text{MR}}(\pi) = \frac{\mathbb{E}_n \{ \hat{\phi}_{N(\pi)}(O) \} }{\mathbb{E}_n \{ \hat{\phi}_D(O) \} }
\end{align}
where $\hat{\phi}_N$ and $\hat{\phi}_D$ are plug-in estimators derived from the estimated nuisance models $\hat{p}_k^{\bar{a}_k}$, $\hat{p}_k^{\bar{a}_k}$, $\hat{\mu}_2^{a_1a_2}$, $\hat{m}_{p_2}^{a_1a_2}$, and $\hat{m}_{\mu_2}^{a_1a_2}$ for $k = 1, 2$. It can be observed that if all observations are always-survivors and there is no censoring, the MR estimator simplifies to the standard augmented inverse probability weighting (AIPW) estimator. 
Furthermore, the estimator reduces to the estimators proposed by \cite{ps3} and \cite{chu23c} in single decision point cases.
	
Let $e_k^{\bar a_k}(\cdot;\hat\alpha_k),~ 
	c_k^{\bar a_k}(\cdot;\hat\eta_k),~ 
	p_k^{\bar a_k}(\cdot;\hat\gamma_k),~ 
	\mu_2^{\bar a}(\cdot;\hat\zeta),~ 
	m_{p_2}^{\bar a}(\cdot;\hat\xi),~ 
	m_{\mu_2,\pi}^{\bar a}(\cdot;\hat\nu)$ for 
$k=1,2$ denote the parametric models for the corresponding nuisance models and let $\alpha_k^*, \eta_k^*, \gamma_k^*, \zeta^*, \xi^*, \nu^*$ be the limit of their parameters. 
Let $\wh\phi_N, \wh\phi_D$ be the plug-in estimator of $\phi_N, \phi_D$, respectively. Assume the uniform weak law of large numbers holds, i.e., $\E_n( \hat{\phi}_D ) \overset{\pr}{\to} \E (\phi_D)$ and $\E_n( \hat{\phi}_{N(\pi)} ) \overset{\pr}{\to} \E({\phi}_{N(\pi)})$.
	
	\begin{theorem}%[MR under parametric modeling] 
	\label{thm:3}
		Under Assumptions \ref{A1}, \ref{A2}-\ref{A5}, and the regularity conditions described in the \supp, %\ref{A9}, 
		$\wh{V}_{\text{MR}}(\pi) = {\mathbb{E}_n\{ \hat{\phi}_{N(\pi)}(O) \} }/{\mathbb{E}_n\{ \hat{\phi}_D(O) \} }$ is multiply robust, meaning $\wh{V}_{\text{MR}}(\pi) \overset{p}{\to} V_{\text{AS}}(\pi)$ as $n \to \infty$ if any of the scenarios in Table \ref{tbl:mr} are satisfied. Furthermore, $\wh{V}_{\text{MR}}(\pi)$ achieves the semiparametric efficiency bound when all nuisance models are correctly specified.
	\end{theorem}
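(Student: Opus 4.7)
I would separate consistency (multiple robustness) from efficiency, isolating the stage-wise cancellation that underlies both. Let $\theta^* = (\alpha_k^*,\eta_k^*,\gamma_k^*,\zeta^*,\xi^*,\nu^*)$ denote the probability limits of the nuisance parameters. The assumed uniform weak law gives $\E_n\{\wh\phi_D\} \overset{p}{\to} \E\{\phi_D(O;\theta^*)\}$ and $\E_n\{\wh\phi_{N(\pi)}\} \overset{p}{\to} \E\{\phi_{N(\pi)}(O;\theta^*)\}$, so by the continuous mapping theorem $\wh V_{\text{MR}}(\pi) \overset{p}{\to} \E\{\phi_{N(\pi)}(O;\theta^*)\}/\E\{\phi_D(O;\theta^*)\}$. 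Consistency then reduces to verifying, for each row of Table~\ref{tbl:mr}, that the numerator and denominator expectations equal $N(O;\pi)$ and $D(O)$, respectively, so that the ratio equals $V_{\text{AS}}(\pi)$.

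\textbf{Stage-wise bias cancellation.} Both $\phi_D$ and $\phi_{N(\pi)}$ are IPW baselines plus telescoping augmentations of the canonical form $(\text{IPW}_{k-1} - \text{IPW}_{k})\,Q_k$. Using sequential randomization (\ref{A3}) and MAR (\ref{A5}), together with Assumption~\ref{A:S2}, monotonicity (\ref{A2}), and principal ignorability (\ref{A4}), I would apply iterated conditioning stage by stage: at stage $k$, if either the propensity-censoring model $\pk_k$ (with $p_k$) is correct or the targeted regression in $Q_k$ is correct, the $k$-th augmentation cancels the stage-$k$ residual, while if both are wrong the bias propagates to stage $k+1$. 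This is the standard nested multiply robust argument, adapted so that the outer always-survivor weight $p_1^0 m_{p_2}^{00}$ from Theorem~\ref{thm:1} is either recovered through the IPW-plus-augmentation form \eqref{Vipw} when the $\pk_k$ and $p_k$ are correct, or through the plug-in form \eqref{Vid} when the regressions $m_{p_2}^{00}$, $\mu_2$, and $m_{\mu_2}^\pi$ are correct. Each row of Table~\ref{tbl:mr} then translates into a specific allocation of ``propensity-correct'' or ``outcome-correct'' labels across stages that is sufficient for both $\E\{\phi_D(O;\theta^*)\} = D(O)$ and $\E\{\phi_{N(\pi)}(O;\theta^*)\} = N(O;\pi)$, giving multiple robustness.

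\textbf{Efficiency and main obstacle.} Under correct specification of all nuisances, a Taylor expansion of the ratio gives $\wh V_{\text{MR}}(\pi) - V_{\text{AS}}(\pi) = D(O)^{-1}(\E_n - \E)\{\phi_{N(\pi)} - V_{\text{AS}}(\pi)\phi_D\} + R_n$, which is an empirical mean of $\psi_{V(\pi)}$ up to a remainder. The regularity conditions in the Appendix (Donsker-type entropy bounds, or sample splitting together with product-rate nuisance convergence $\|\wh{\cdot} - \cdot\| = o_p(n^{-1/4})$) yield $R_n = o_p(n^{-1/2})$, so $\sqrt n\{\wh V_{\text{MR}}(\pi) - V_{\text{AS}}(\pi)\}$ is asymptotically normal with variance $\Upsilon(\pi) = \E[\psi_{V(\pi)}^2]$ by Theorem~\ref{thm:2}. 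The hardest part is the bookkeeping in the cancellation step: $\phi_{N(\pi)}$ mixes policy-following weights $\mathbf{1}\{\pi_j=A_j\}/\pk_j^\pi p_j^\pi$ in its augmentation sum with control-arm weights embedded through $Q_{Y,1}\phi_D$, and principal stratification forces the two streams to combine through $p_1^0 m_{p_2}^{00}$. Showing that each row of Table~\ref{tbl:mr} really cancels both streams of bias simultaneously requires carefully pairing each misspecified IPW factor with its matching regression replacement at the correct stage, and this stage-by-stage matching is the crux of the proof.
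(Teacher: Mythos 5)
Your consistency argument follows the paper's route: reduce, via the assumed uniform law of large numbers and the continuous mapping theorem, to showing $\E\{\phi_D(O;\theta^*)\}=D$ and $\E\{\phi_{N(\pi)}(O;\theta^*)\}=N$, and then verify this scenario by scenario. One caution, though: your per-stage ``either the propensity model or the regression at stage $k$ is correct'' heuristic does not by itself reproduce Table~\ref{tbl:mr}. The paper instead expands $\E\{\phi_D^*\}$ and $\E\{\phi_N^*\}$ by iterated expectations into sums of \emph{products} of discrepancies (e.g., terms like $\frac{\pk_1^0}{\pk_1^{*0}}(p_1^0-p_1^{*0})m_{p_2}^{*00}$), enumerates the denominator-consistent scenarios (correct $\pk_1,\pk_2$; correct $\pk_1,p_2$; correct $p_1,p_2,m_{p_2}$) and the numerator-consistent ones (correct $p_1,p_2,m_{p_2}$; correct $p_1,m_{p_2},\mu_2$; correct $\mu_2,m_{\mu_2}$), and intersects them; this is how rows such as (v), in which no propensity or censoring model is correct, arise, and how the coupling of numerator and denominator through the shared weight $p_1^0 m_{p_2}^{00}$ is handled. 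So the bookkeeping you flag as the crux is exactly the content you would still need to supply.

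For local efficiency your route differs from the paper's. You control the remainder via Donsker/sample-splitting and product-rate conditions $o_p(n^{-1/4})$, which are the regularity conditions of Assumption~\ref{A10} and the argument of Theorem~\ref{thm:4}. Theorem~\ref{thm:3} is stated for parametric working models under Assumption~\ref{A9}, and the paper's proof Taylor-expands $\pr_n\{\hat\phi_N\}$ and $\pr_n\{\hat\phi_D\}$ in $\hat\theta$ around $\theta^*$, linearizes the ratio, and kills the derivative terms using $\pr\{\dot\phi_N^*\}=\pr\{\dot\phi_D^*\}=0$ at correctly specified models, arriving at $\wh V_{\mathrm{MR}}(\pi)-V_{\mathrm{AS}}(\pi)=\pr_n\{\psi_{V(\pi)}\}+o_{\pr}(n^{-1/2})$. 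Your second-order-remainder argument is valid here as well (parametric $n^{-1/2}$ rates trivially satisfy the product-rate requirement), so the conclusion stands; it is simply a more general, nonparametric-style device than the one the paper uses for this theorem, and you should be explicit that you are importing conditions beyond those the paper attaches to Theorem~\ref{thm:3}.
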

	
\begin{table}[]
\tbl{Five scenarios ensuring the consistency of \eqref{Vmr}. `$\XX$' indicates correct specification.} \label{tbl:mr}
\begin{center}
	\begin{tabular}{@{}llllllll@{}}
	\toprule
	      & $\wh\pk_1$ & $\wh p_1$ & $\wh\pk_2$ & $\wh p_2$ & $\wh m_{p_2}$ & $\wh\mu_2$ & $\wh m_{\mu_2}$ \\ \midrule
	(i)   & \XX   & \XX         & \XX          & \XX         & \XX             &            &                 \\
	(ii)  & \XX          &           & \XX          &           &               & \XX          & \XX               \\
	(iii) & \XX          & \XX         &            & \XX         & \XX             & \XX          &                 \\
	(iv)  & \XX          &           &            & \XX         &               & \XX          & \XX               \\
	(v)   &            & \XX         &            & \XX         & \XX             & \XX          & \XX               \\ \bottomrule
	\end{tabular}
\end{center}
\end{table}

	Theorem \ref{thm:3} highlights the dual benefits of the proposed estimator: robustness to partial model misspecification and local efficiency under correct specification. 
	We conclude this section by presenting that the estimator exhibits multiple robustness and achieves semiparametric local efficiency with flexible nonparametric modeling. 
	Let $\hat\theta = \{ \wh{\pk}_k^{\bar a_k}, \wh p_k^{\bar a_k}, \wh\mu_2^{a_1a_2}, \wh m_{p_2}^{00}, \wh m_{\mu_2}^{a_1a_2}: \bar a\in\{0,1\}^2 \}$ be the collection of nonparametric models, and let $\theta$ represent the true data-generating process.

	\begin{theorem} \label{thm:4}
		Assume that the nuisance models are fitted using sample-splitting 
		or that the components in $\hat{\theta}$ belong to a Donsker class. Under Assumptions \ref{A1}, \ref{A2}-\ref{A5}, and the regularity conditions described in the \supp, 
		$n^{1/2} \big( \wh V_\text{MR}(\pi) - V_\text{AS}(\pi) \big) \overset d\to \mathcal N\left(0, \Upsilon(\pi)\right)$
	\end{theorem}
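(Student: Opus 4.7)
The plan is to establish asymptotic normality via the standard efficient-influence-function linearization, exploiting the second-order (Neyman orthogonal) structure of the EIF derived in Theorem \ref{thm:2}. First, since $D(O) > 0$ and $\E_n\hat\phi_D \overset{p}{\to} D(O)$ by the uniform WLLN, Slutsky's theorem reduces
\begin{equation*}
n^{1/2}\big\{\wh V_{\text{MR}}(\pi) - V_{\text{AS}}(\pi)\big\} = \frac{1}{\E_n\hat\phi_D}\cdot n^{1/2}\E_n\big\{\hat\phi_{N(\pi)} - V_{\text{AS}}(\pi)\hat\phi_D\big\}
\end{equation*}
to proving $n^{1/2}\E_n\hat\psi_{V(\pi)} \overset{d}{\to} \mathcal{N}(0, \Upsilon(\pi))$, where $\hat\psi_{V(\pi)} = \{\hat\phi_{N(\pi)} - V_{\text{AS}}(\pi)\hat\phi_D\}/D(O)$ is the plug-in EIF with the true $V_{\text{AS}}$ and $D$ absorbed into multiplicative constants that cancel via Slutsky.

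Next, I would perform the canonical three-term decomposition
\begin{equation*}
n^{1/2}\E_n\hat\psi = n^{1/2}(\E_n-\E)\psi + n^{1/2}(\E_n-\E)(\hat\psi-\psi) + n^{1/2}\E[\hat\psi-\psi].
\end{equation*}
The first term converges to $\mathcal{N}(0, \Upsilon(\pi))$ by the classical CLT, since $\psi$ is a fixed mean-zero function with variance $\Upsilon(\pi)$, finite by the positivity and boundedness conditions in Assumption \ref{A8}. The empirical-process remainder is handled by the stated hypothesis: under sample splitting, conditioning on the training fold renders $\hat\psi$ deterministic on the evaluation fold, so that a Chebyshev bound combined with $L_2$-consistency of the nuisances yields $o_p(1)$; under the Donsker assumption, standard stochastic equicontinuity together with $\|\hat\psi-\psi\|_{L_2} = o_p(1)$ gives the same conclusion.

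The crux is controlling the drift $n^{1/2}\E[\hat\psi - \psi]$. Because $\psi$ is the efficient influence function at the true law, every Gateaux derivative of the plug-in bias with respect to a single nuisance evaluated at the truth vanishes, and a Taylor expansion reduces the drift to a sum of bilinear remainders
\begin{equation*}
\big|\E[\hat\psi - \psi]\big| \;\lesssim\; \sum_{(\alpha,\beta)} \|\hat g_\alpha - g_\alpha\|_{L_2}\,\|\hat g_\beta - g_\beta\|_{L_2},
\end{equation*}
where the index pairs $(\alpha,\beta)$ are exactly those dictated by the multiply robust configurations in Table \ref{tbl:mr}: each pair couples a treatment/censoring/survival weight with an outcome or eventual-survival regression. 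Under the regularity conditions in the Appendix requiring each nonparametric component to converge at rate $o_p(n^{-1/4})$, Cauchy--Schwarz makes every product $o_p(n^{-1/2})$, and the drift is negligible. Assembling the three pieces via Slutsky yields the stated $\mathcal{N}(0,\Upsilon(\pi))$ limit.

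The main obstacle will be the algebraic expansion of the drift. The EIF in Theorem \ref{thm:2} is a nested two-stage object coupling the weights $\pk_k$ and $p_k$, the eventual-survival regression $m_{p_2}$, the outcome regressions $\mu_2$ and $m_{\mu_2}$, and telescoping contributions $Q_{S,k}$ and $Q_{Y,k}$ across both stages. Verifying that all first-order variations cancel and that the surviving bilinear remainders align exactly with the five pairings of Table \ref{tbl:mr} is substantially more intricate than the corresponding single-decision calculations in \cite{chu23c} and \cite{GROSSI2025}, and this bookkeeping is where the heart of the technical work will lie.
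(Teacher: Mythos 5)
Your proposal is correct and follows essentially the same route as the paper: the appendix proof likewise splits $\pr_n\{\hat\phi_D\}-\pr\{\phi_D^*\}$ and $\pr_n\{\hat\phi_N\}-\pr\{\phi_N^*\}$ into a CLT term, an empirical-process term killed by sample splitting (via Kennedy's Lemma 1) or the Donsker condition, and drift terms $T_D$, $T_N$ that are rearranged into bilinear nuisance-error products bounded by Cauchy--Schwarz and the product-rate condition $\|\hat g-g\|\,\|\hat h-h\|=o_P(n^{-1/2})$, before handling the ratio (the paper via a Taylor expansion of the ratio, you via Slutsky on the combined EIF --- an immaterial difference). The ``bookkeeping'' you flag as the main obstacle is exactly the $T_D$/$T_N$ rearrangement carried out explicitly in the appendix, so your plan matches the paper's proof in both structure and tools.
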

	The results provide a foundation for leveraging flexible machine learning methods to accurately model the nuisance components in practice. 
	Furthermore, sample-splitting or cross-fitting (\citealp{chernozhukov2018double}) provides a practical technique for guaranteeing the property in Theorem \ref{thm:4}. 
	{A cross-fitting algorithm is detailed in the \supp.}

\section{Multiply rubust off-policy learning}
Building upon the theoretical guarantees, we propose a method to learn an always-survivor-optimal policy in this section. 
While a variety of policy classes can be accommodated, we focus our analysis on the class of linear policies, where $\pi_\beta = \{\pi_{\beta_1}, \pi_{\beta_2}\}$ is determined by a linear combination of observed characteristics, specifically $\pi_{\beta_k} = \mathbf{1}\{\tilde{h}_k^\top \beta_k > 0\}$, where $\tilde{h}_k^\top = (1, h_k^\top)$. 
This approach is widely used in practice, as the coefficients $\beta=(\beta_1, \beta_2)$ provide a clear understanding of how covariates influence treatment decisions. 
To simplify notation, we define $V_{\text{AS}}(\beta) \equiv V_{\text{AS}}(\pi_\beta)$ as the always-survivor value under $\pi_\beta$. Let $\hat{\beta}$ be the parameter that maximizes $\wh{V}_{\text{MR}}(\beta)$, and let $\beta^*$ be the probability limit of $\hat{\beta}$.

\begin{theorem} \label{thm:5}
	Assume that the nuisance models are fitted using sample-splitting, or that the components in $\hat{\theta}$ belong to a Donsker class. Under Assumptions \ref{A1}, \ref{A2}-\ref{A5}, and the regularity conditions described in the \supp, %\ref{A11},
	$n^{1/2} \big( \wh V_\text{MR}(\hat\beta) - V_\text{AS}(\beta^*) \big) \overset d\to \mathcal N\left(0, \Upsilon({\beta^*})\right)$.
\end{theorem}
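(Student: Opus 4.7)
The plan is to reduce the claim to Theorem \ref{thm:4} evaluated at the deterministic limit $\beta^*$, then absorb the fluctuations from the data-dependent $\hat\beta$ by combining the first-order optimality of $\beta^*$ with empirical-process stochastic equicontinuity. I start from the decomposition
\begin{align*}
\sqrt n\bigl\{\wh V_\text{MR}(\hat\beta) - V_\text{AS}(\beta^*)\bigr\}
 = \sqrt n\bigl\{\wh V_\text{MR}(\hat\beta) - V_\text{AS}(\hat\beta)\bigr\} + \sqrt n\bigl\{V_\text{AS}(\hat\beta) - V_\text{AS}(\beta^*)\bigr\}
\end{align*}
and control the two pieces in turn.

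First I establish $\hat\beta \overset p\to \beta^*$. Because the class of half-space indicators $\{\mathbf 1(\tilde h_k^\top \beta_k > 0)\}$ is VC, the functions defining $\hat\phi_{N(\beta)}$ and $\hat\phi_D$ indexed by $\beta$ form a Glivenko--Cantelli class under the bounded-envelope guarantees of Assumption \ref{A8} and the convergence of $\hat\theta$ to $\theta$. This yields $\sup_\beta|\wh V_\text{MR}(\beta) - V_\text{AS}(\beta)| \overset p\to 0$, and an argmax--continuous-mapping argument, together with the identifiability of $\beta^*$ as a well-separated maximum of the population value, delivers consistency.

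For the second term I invoke the optimality of $\beta^*$: under a standard margin condition on $\tilde h_k^\top \beta_k^*$ near zero, a local expansion gives $V_\text{AS}(\beta) - V_\text{AS}(\beta^*) = O(\|\beta - \beta^*\|^{1+\kappa})$ for some $\kappa > 0$ uniformly in a neighborhood of $\beta^*$, so consistency pushes this piece to $o_p(1)$. For the first term, $\beta$ enters the influence function only through the policy indicators $\mathbf 1\{\pi_{\beta_j}(\bar X_j) = A_j\}$, whose half-space structure is VC; combining this with the sample-splitting or Donsker assumption on $\hat\theta$ and Assumption \ref{A8} makes $\{\hat\phi_{N(\beta)}, \hat\phi_D\}$ Donsker. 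Stochastic equicontinuity at $\beta^*$ then gives $\sqrt n\{\wh V_\text{MR}(\hat\beta) - V_\text{AS}(\hat\beta)\} = \sqrt n\{\wh V_\text{MR}(\beta^*) - V_\text{AS}(\beta^*)\} + o_p(1)$. Applying Theorem \ref{thm:4} at $\pi = \pi_{\beta^*}$ and invoking Slutsky's theorem completes the argument.

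The hard part will be the joint empirical-process control: the influence function depends on $\beta$ through a non-smooth indicator and on the nuisance vector $\hat\theta$ through estimators that converge only at nonparametric rates, so I must simultaneously exploit VC--Donsker preservation in $\beta$ and cross-fitting (or a Donsker assumption) in $\theta$. A secondary subtlety is that $\wh V_\text{MR}$ is a ratio, requiring a delta-method linearization of the joint empirical process $(\mathbb E_n\hat\phi_{N(\beta)}, \mathbb E_n\hat\phi_D)$, with Assumption \ref{A8} ensuring the denominator is bounded away from zero uniformly in $\beta$.
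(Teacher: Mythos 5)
Your overall decomposition and your treatment of the empirical-process piece match the paper's strategy: the paper likewise splits off $\sqrt n\{\wh V_\text{MR}(\beta^*) - V_\text{AS}(\beta^*)\}$, handles it by Theorem \ref{thm:4}, and controls the centered process $\sqrt n\,|\wh V(\beta)-V(\beta)-\{\wh V(\beta^*)-V(\beta^*)\}|$ uniformly over $\|\beta-\beta^*\|<\epsilon$ via the VC structure of the half-space policy class together with the margin condition, which makes the envelope of the indicator-difference class of order $\epsilon^{1/2}$.

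The genuine gap is in your second piece. You claim that a local expansion $V_\text{AS}(\beta)-V_\text{AS}(\beta^*)=O(\|\beta-\beta^*\|^{1+\kappa})$ plus mere consistency of $\hat\beta$ forces $\sqrt n\{V_\text{AS}(\hat\beta)-V_\text{AS}(\beta^*)\}=o_p(1)$. It does not: $\sqrt n\cdot o_p(1)^{1+\kappa}$ can diverge, so consistency without a rate proves nothing here. The paper closes exactly this hole with its Lemma \ref{lemma:2}, which establishes the cube-root rate $n^{1/3}\|\hat\beta-\beta^*\|=O_{\pr}(1)$ via an M-estimation rate theorem (Kosorok, Theorem 14.4), using the margin condition to bound the modulus of continuity of the centered process by $\phi_n(\epsilon)=\epsilon^{1/2}+\epsilon$; only then does the second-order Taylor expansion with $V'(\beta^*)=0$ (Assumption \ref{A11}(i), twice differentiability — not the margin condition, whose role is purely the envelope bound, a conflation in your write-up) give $\sqrt n\{V_\text{AS}(\hat\beta)-V_\text{AS}(\beta^*)\}=\tfrac12 V''(\beta^*)O_{\pr}(n^{1/2-2/3})=o_{\pr}(1)$. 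Alternatively you could avoid an explicit rate by a sandwich argument — since $\wh V(\hat\beta)\ge\wh V(\beta^*)$ and $V(\beta^*)\ge V(\hat\beta)$, one has $0\le V(\beta^*)-V(\hat\beta)\le\{\wh V(\hat\beta)-V(\hat\beta)\}-\{\wh V(\beta^*)-V(\beta^*)\}$, which your stochastic-equicontinuity bound controls — but as written your proposal supplies neither this argument nor a rate, so the step fails. A further point you should make explicit is that the centered process also carries the nuisance-estimation error; the paper first invokes the Theorem \ref{thm:4} linearization (with remainder controlled by $\|\hat\theta-\theta^*\|$) before applying the VC/bracketing bound to a fixed-function class indexed by $\beta$.
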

Therefore, the MR estimator yields a policy with semiparametric local efficiency, guaranteeing optimal asymptotic variance when the models are correctly specified. 
Even with partial misspecification, the framework inherits multiple robustness, allowing decision-makers to confidently search for optimal treatment regimes, providing increased protection against model misspecification. This is particularly valuable in real-world applications where the true underlying models may be complex or unknown. 
Additionally, a cross-fitting estimator can be employed to ensure the results presented in Theorem \ref{thm:5}.

%\begin{minipage}{\textwidth}

  \begin{figure}%[]{\textwidth}
    \centering
%    \rule{6.4cm}{3.6cm}
    \includegraphics[scale=0.7]{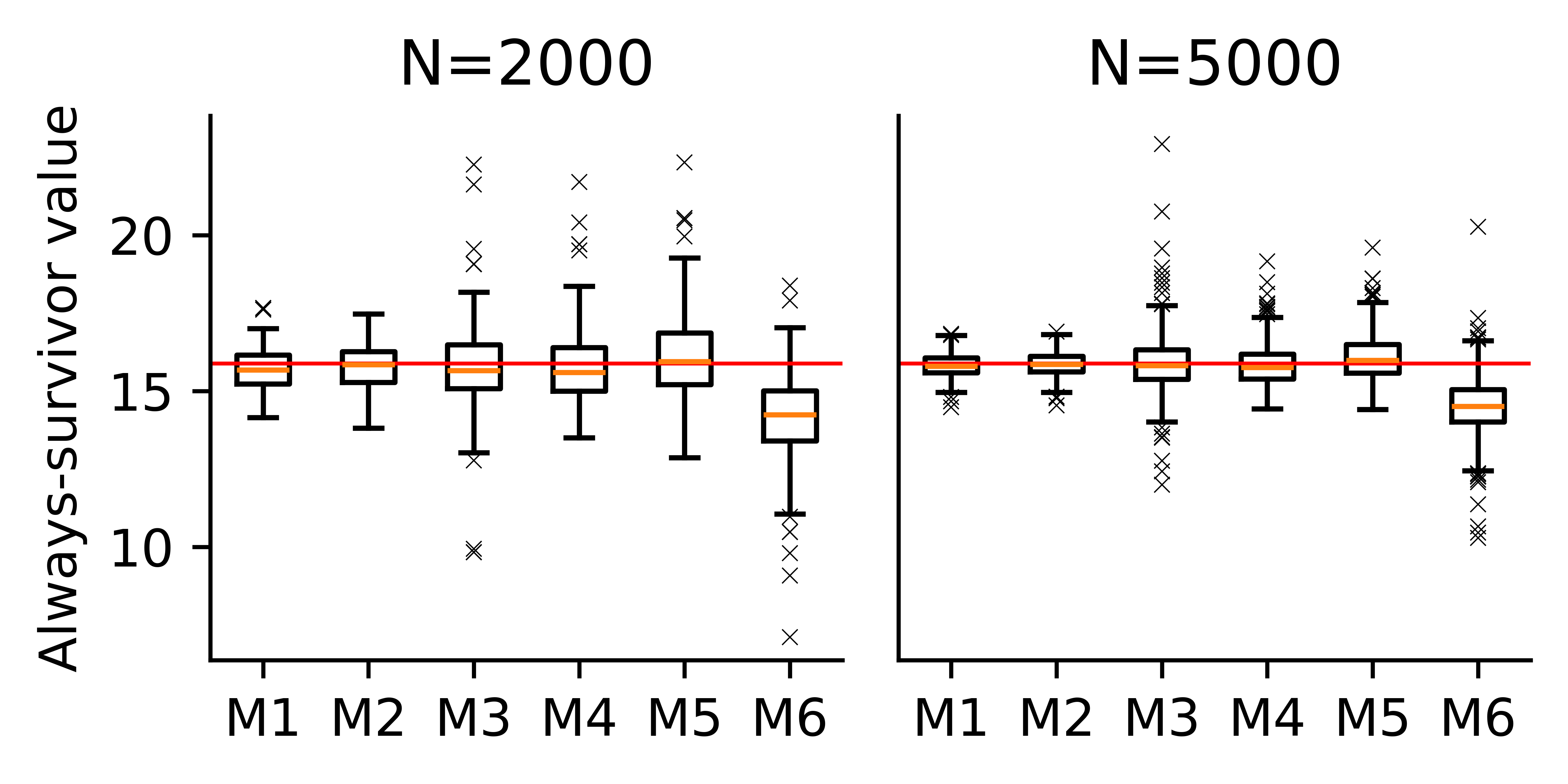}
    \captionof{figure}{Value estimates under a fixed policy. The red line is drawn at the true value.}
    \label{fig:ope}
  \end{figure}
%  \hfill
  \begin{table}%[c]{\textwidth}
  \caption{Off-policy evaluation result. S.E. indicates standard error.}
  \label{table:mr}
    \centering
    \begin{tabular}{@{}cccccccccccccc@{}}
\toprule
                                 &             & \textbf{M1} & \textbf{M2} & \textbf{M3} & \textbf{M4} & \textbf{M5} & \textbf{M6}  \\ \midrule
\multirow{2}{*}{\textbf{n=2000}} & \textbf{Bias} & -0.20       & -0.10       & -0.15       & -0.13        & 0.22       & -1.74        \\
                                 & \textbf{S.E.}   & 0.63        & 0.63        & 1.19        & 1.11         & 1.26        & 1.27         \\
\multirow{2}{*}{\textbf{n=5000}} & \textbf{Bias} & -0.07       & -0.02       & -0.04       & -0.06        & 0.18       & -1.39        \\
                                 & \textbf{S.E.}   & 0.37        & 0.37        & 1.05        & 0.66         & 0.74        & 1.04         \\ \bottomrule
\end{tabular}
    \end{table}

\section{Simulation study}
To evaluate the performance of the proposed methodology, we simulated trajectories with monotone censoring and truncation by death, using varying rates of missingness. We generated $X_1$ from a continuous uniform distribution over the interval $[-0.3, 0.7]$. $A_k$, $C_k^{\bar{a}_k}$, and $S_k^{\bar{a}_k}$ were generated using logistic models
and the intermediate $X_2^{a_1}$ and final outcome $Y^{a_1a_2}$ were generated from normal distributions.
This setup resulted in censoring rates ranging from 8\% to 13\% and survival rates between 65\% and 70\%. We conducted experiments with sample sizes of $n = 2000$ and $n = 5000$.

\textbf{Multiple robustness}
To demonstrate the multiple robustness, we conducted experiments across five model specification scenarios (M1-M5) expected to yield consistency, and one scenario (M6) expected to fail.
Each process was repeated 500 times, and $\wh V_\text{MR}$ was computed in all six model specification scenarios. True always-survivor value were computed by plugin estimator with known true models.
{The \supp\text{ }provides detailed information on data generation and model specification scenarios, including results from additional experiments.}

Figure \ref{fig:ope} and Table \ref{table:mr} presents the results.
Across scenarios M1-M5, the MR estimator consistently yields estimates close to the true value, demonstrating its multiple robustness.
Based on the EIF, 95\% confidence intervals for the always-survivor value in scenario M1 were computed analytically. The coverage rates, 94.2\% for $n = 2000$ and 95.2\% for $n = 5000$, are close the nominal value.

\tred{
The same simulation was also run with estimators based on the OR \eqref{Vid} and the IPW \eqref{Vipw} identifications. 
The standard error for the MR estimator 
($0.631$ for $N=2000$, $0.369$ for $N=5000$) was positioned between the OR estimator 
($0.475$ for $N=2000$, $0.280$ for $N=5000$) and the IPW estimator 
($2.250$ for $N=2000$, $1.346$ for $N=5000$).  
Our results confirm that the MR estimator achieves comparable efficiency to the OR estimator.
}

\textbf{Off-policy learning} \;
In each of 500 independent data replications, we determined the linear policy $\hat\beta_\text{MR}$ and $\hat\beta_\text{AIPW}$ by maximizing $\wh V_\text{MR}(\beta)$ or $\wh V_\text{AIPW}(\beta)$ respectively, where
\begin{align}
\begin{split}
	\wh V_\text{AIPW}(\pi) 
	&= \E_n 
      \Big\{ 
        \frac{\mathbf1\{\bar A=\pi(\bar X)\}(1-\tilde C_1)(1-\tilde C_2)}{\tilde\pk_1^\pi(X_1) \tilde\pk_2^\pi(\bar X)} 
        \left\{Y - Q_2(\bar X, \bar A)\right\} \\
    &\qquad+ \frac{\mathbf1\{A_1=\pi_1(X_1)\}(1-\tilde C_1)}{\tilde\pk_1^\pi(X_1)} 
      \left\{ Q_2(\bar X, \bar A) - Q_1(X_1, A_1)\right\} +
         Q_1(X_1, A_1) \Big\},
\end{split} \label{Vaipw}
\end{align}
$\tilde C_k=C_k(1-S_k)$, $\tilde\pk_k^{\bar a_k}(\bar x_k) = \wh{\pr}(\tilde C_k = 0 | \bar x_k, \bar a_k, \tilde C_l=0, l=1,\dots,k-1)$, and $Q_k(\bar x_k, \bar a_k) = \wh{\E}(Y | \bar x_k, \bar a_k, \tilde C_l=0, l=1,\dots,k)$.
This is a standard approach that treats trajectories experiencing death as censored observations.

We employed a differential evolution algorithm to perform the value search.
The learned policies are compared with the true optimal linear policy $\beta^*$, obtained by maximizing the plug-in estimator of \eqref{Vid} using a large dataset %(1 million samples) 
	under ground truth nuisance models. 
Figure \ref{fig:opl} compares the estimated values $\wh V_\text{MR}(\hat\beta_\text{MR})$ and $\wh V_\text{AIPW}(\hat\beta_\text{AIPW})$, the values $V(\hat\beta_\text{MR})$ and $V(\hat\beta_\text{AIPW})$ evaluated with the learned policies, and the true optimal value $V(\beta^*)$. 
The 95\% confidence intervals based on the EIF exhibit coverage rates of 94.8\% for $n = 2000$ and 96\% for $n = 5000$, close to the nominal value.

The percentage of correct decisions on the always-survivors (PCD-AS) was calculated using an independently generated large dataset. %(100,000 samples). 
{The detailed calculation is described in the \supp.} 
PCD-AS shows a convergence trend towards one as the training set size increases. The average PCD-AS of the MR estimator was $0.992$ for $n = 2000$ and $0.994$ for $n = 5000$, with standard deviations of $0.007$ and $0.004$, respectively, {which is closer to one with less variability than the PCD-AS of the AIPW estimator, averaging $0.987$ and $0.988$ with standard deviations of $0.008$ and $0.007$, respectively.}

%\begin{minipage}{\textwidth}
\begin{figure}
  \centering
  \includegraphics[scale=0.7]{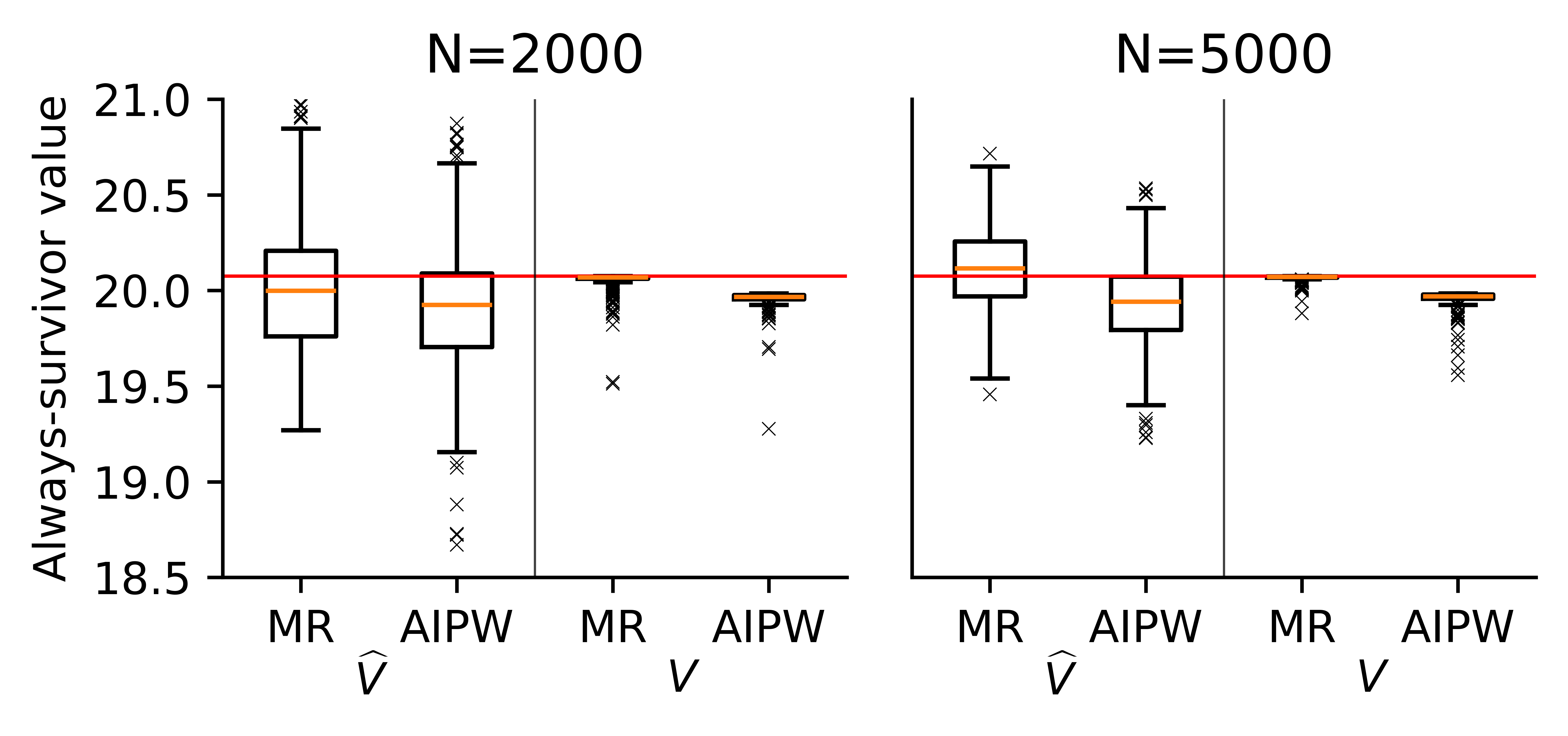}
  \captionof{figure}{Value estimates from $500$ independent simulation runs. 
Left panels of each plot show the estimated values ($\wh V_\text{MR}(\hat\beta_\text{MR})$ and $\wh V_\text{AIPW}(\hat\beta_\text{AIPW})$), while right panels show the corresponding values evaluated with true nuisance functions ($V(\hat\beta_\text{MR})$ and $V(\hat\beta_\text{AIPW})$). The red horizontal line indicates the true optimal value ($V(\beta^*)$).}  
\label{fig:opl}
\end{figure}
%\end{minipage}

%
%
\section{Analysis of MIMIC-III data}

To illustrate the utility of our proposed methodology, we applied it to the Medical Information Mart for Intensive Care III (MIMIC-III) v1.4 database.  MIMIC-III is a publicly accessible, MIT-licensed database containing de-identified health records from over 40,000 patients admitted to critical care units at Beth Israel Deaconess Medical Center between 2001 and 2012.
 This dataset encompasses comprehensive patient information, including vital signs, laboratory results, medications, and survival outcomes, making it a suitable resource. \cite{mimic} provides a detailed description.

To simplify the analysis, we focused on two time points post-sepsis onset. We utilized eight baseline variables - age, weight, temperature ($^\circ$C), log glucose, log blood urea nitrogen (BUN), log creatinine (mg/dL), log white blood cell count (WBC), and log sequential organ failure assessment (SOFA) score - as $X_1$. For $X_2$, we used seven intermediate variables, which were the same as the baseline covariates excluding age. 
Intervention with mechanical ventilation at each time point was represented by $A_1$ and $A_2$. 
The negative SOFA score at the final time point was used as the outcome $Y$, so that a higher value indicates a better condition. If a patient died within $k$th 24-hour frame, $S_k=0$; otherwise, $S_k=1$. If a patient did not die but the outcome is missing, at the follow-up point, $C_k=1$. We focused on the high-risk group by using patient data with a baseline SOFA score above 8.
{Processing procedure is detailed in the \supp.}

We employed logistic regression models for estimating the propensity score, censoring and survival probability. For continuous outcome models, we fitted random forest regressors. Lastly, generalized additive models were fitted to estimate the conditional mean functions, $m_{p_2}$ and $m_{\mu_2}$. We used a differential evolution algorithm to optimize within the class of linear policies.

To evaluate the benefit of our proposed estimator, we conducted a repeated 50 iterations of train-test split. In each iteration, stratified sampling on censoring ($C_1, C_2$) and survival ($S_1, S_2$) indicators was used to create balanced training and test sets. Policies were learned on training data and their value estimated on test data.  We established the ``true'' optimal value on the test set with multiply robust estimator as a benchmark. Figure \ref{fig:mimic} summarizes the results. 

The MR estimator obtains policy closer to the true optimum than the AIPW estimator. 
Consistent with recent findings (\citealp{sepsis1}), the learned policy (Figure \ref{fig:mimic}, right panel) demonstrates a significant influence of age and weight on treatment decisions. Specifically, patients with higher age and weight were more likely to be assigned to active treatment at baseline.
At the intermediate stage, the learned treatment regime predominantly assigned patients to the untreated group. This observation suggests the acute nature of sepsis necessitates rapid intervention, while also indicating a cautious approach to mechanical ventilation, likely due to its associated risks (\citealp{unroe2010one}).

\tred{
Finally, we note that despite its role as a benchmark, the clinical utility of sepsis policies derived from MIMIC-III data is actively debated. \cite{nauka2025challenges} found applying trained models in practice inadequate, stating that missing data and diagnostic uncertainty lead to unpredictable model behavior. Conversely, \cite{festor2022assuring} claimed the model recommended fewer hazardous decisions than human clinicians despite data missingness.
}

\begin{figure}
  \centering
  \includegraphics[scale=0.65]{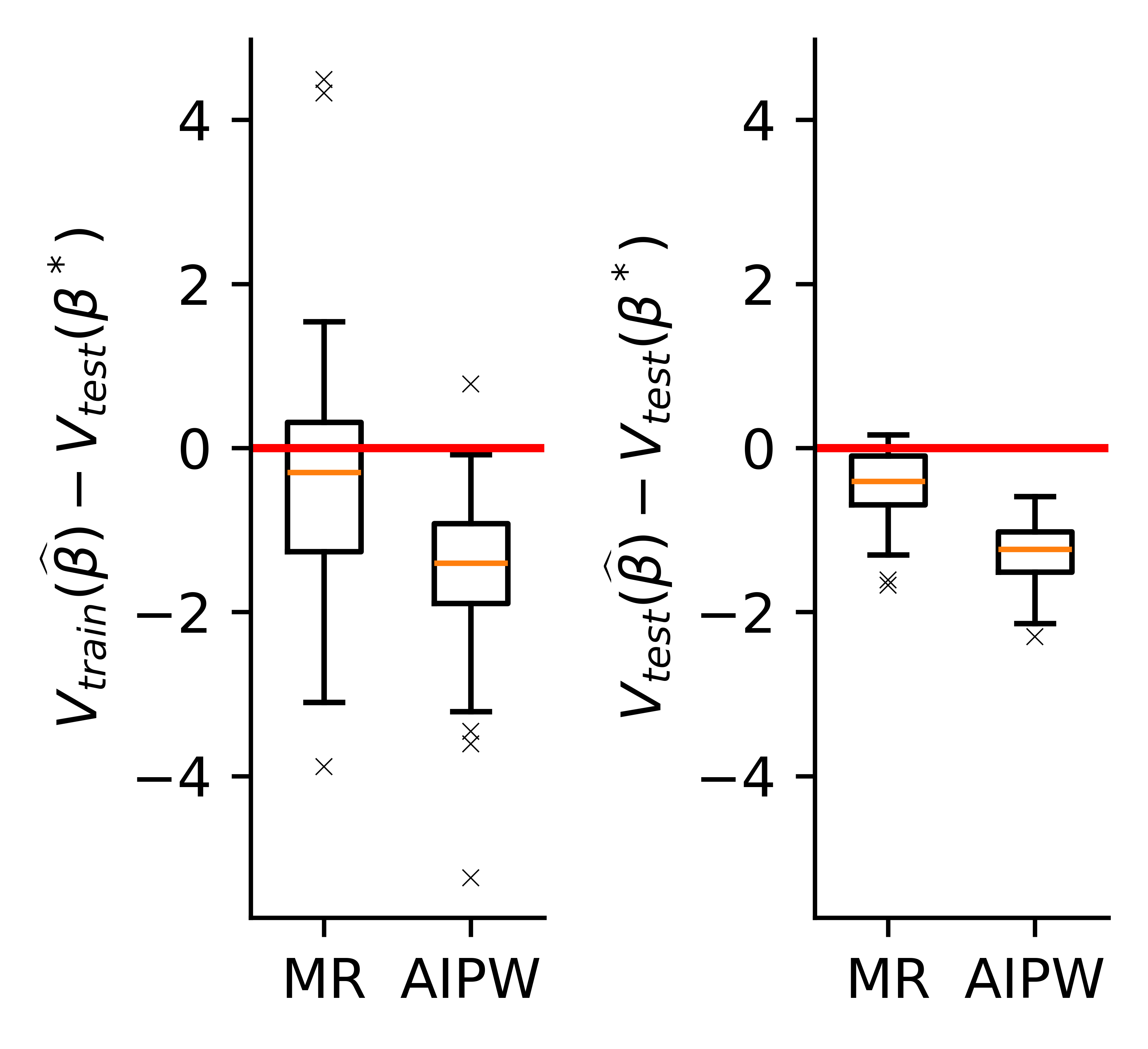}
  \includegraphics[scale=0.554]{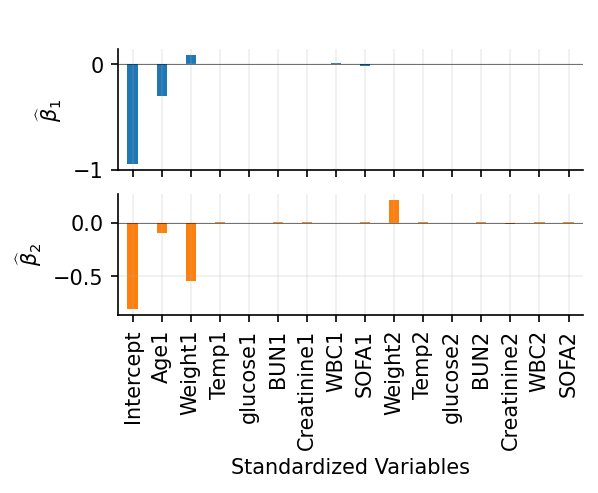}
  \captionof{figure}{Analysis of MIMIC-III database.
  (Left) Training set value was evaluated by nuisance model learned on training data, while test set value was computed by nuisance models learned on test data. Both values were evaluated on the test set.
  The left panel displays the discrepancy between the training set value function with training set optimal policy ($\hat\beta$) and the test set value function with test set optimal policy ($\beta^*$). The right panel compares the test set value function achieved by $\hat\beta$ and $\beta^*$. A solid red line is drawn at zero. \;
  (Right) Policy learned from the MR estimator. \;
  } 
\label{fig:mimic}
\end{figure}

\section{Discussions}

\paragraph{Indentification for the multiple decision points case}
Consider a scenario with $K \ge 3$ decision points. The identification formula for the always-survivor value, with corresponding generalized assumptions, is given by
	\begin{gather}
	V_{\text{AS},K}(\pi) = 
	  {\E[ \prod_{k=1}^K m_{p_k}^{{\bf0}_k}(X_1) m_{\mu_K}^\pi(X_1) ]} \big /
	    {\E[ \prod_{k=1}^K m_{p_k}^{{\bf0}_k}(X_1) ]}
	    =: {N_K(\pi)}/{D_K},  \label{eq:Vid_K}
	\end{gather}
	where, for $k = 1, \dots, K$, the nuisance models are defined as
	$p_k^{\bar a_k}(\bar x_k) := \pr(S_k | \bar x_k, \bar a_k, \bar C_k={\bf0}_k, \bar S_{k-1}={\bf1}_{k-1})$,
	$\mu_K^{\bar a}(\bar x) := \E[Y | \bar x, \bar a, \bar C={\bf0}_K, \bar S={\bf1}_K]$,
	$m_{p_k}^{{\bf0}_k}(x_1) := \E[ p_k^{{\bf0}_k}(\bar X_k) | x_1, 0, C_1=0, S_1=1 ]$, and
	$m_{\mu_K}^{\pi}(x_1) := \E[ \mu_K^{\pi}(\bar X_K) \mathbf{1}\{\bar A=\pi(\bar X_K)\} %\mathbf 1\{\bar A_k = \bar a_k\} 
	| x_1, \pi_1, C_1=0, S_1=1 ]$.
The outcome model can again be estimated backward-recursively.
The EIF derivation follows a process similar to Theorem \ref{thm:2}, albeit with significant effort. 
\tred{
Due to its technical complexity, the EIF is presented without proof. Define $Q_{S,j}$ and $Q_{Y,j}$ as in the Appendix \ref{app:thm2}. Let
\begin{align*}
	&\phi_{D_K}(O)
	 = \prod_{k=1}^K\frac{\mathbf{1}\{A_k\textrm{ = }C_k\textrm{ = }0\}}{\pk_k^{{\bf 0}_k}(\bar X_k)} S_k 
	 + \sum_{k=1}^K 
	   \bigg\{ \frac{\mathbf{1}\{\bar A_{k\emph{--}1}\textrm{ = }\bar C_{k\emph{--}1}\textrm{ = }{\bf0}_{k\emph{--}1}\}}{\prod_{j=0}^{k-1}\pk_j^{{\bf 0}_j}(\bar X_j)} - \frac{\mathbf{1}\{\bar A_{k}\textrm{ = }\bar C_{k}\textrm{ = }{\bf0}_{k}\}}{\prod_{j=0}^k\pk_j^{{\bf 0}_j}(\bar X_j)} \bigg\} Q_{S,k}, \\
	&\phi_{N_K(\pi)}(O) 
	 = Q_{Y,1} \phi_{D_K}(O) 
	 + \sum_{k=1}^K \prod_{j=1}^k \frac{\mathbf1\{\pi_j(\bar X_j)\textrm{ = }A_j\}(1-C_j)S_j}{\pk_j^{\pi}(\bar X_j)p_j^{\pi}(\bar X_j)} \big(Q_{Y,j+1} - Q_{Y,j}\big) Q_{S,1}.
\end{align*}
The EIF of \eqref{eq:Vid_K} is given as $\{ \phi_{N_K(\pi)}(O) - V_{\text{AS},K}(\pi) \phi_{D_K}(O) \} / D_K(O)$.
}

{Verifying the assumptions and accurately fitting the nuisance models become challenging with a growing number of decision points. 
The pessimism principle (\citealp{jin2021pessimism}) or a minimax learning approach \citep{kallus2021minimax} could potentially offer a solution, warranting future investigation.}

\paragraph{Practical considerations}
If stratum assignment were known, a policy $\pi$ maximizing \eqref{Vmr} could directly optimize average outcomes within always-survivors. For protectables, active treatment would be optimal, as survival is contingent on it. 
Since $U$ is latent, one approach is to estimate the always-surviving probability based on patient characteristics using the result $\pr(U=1111|x_1) = p_1^0(x_1)m_{p_2}^{00}(x_1)$. 
Similarly, let $T^{\bar a}$ denote the potential survival time had a subject received treatment $\bar a$. Following \cite{tyeardtr}, we can derive an estimator for $S_t^{*\bar a}(X_1) = \mathbb{E}[ \pr(T^{\bar a} > t | \bar X) | X_1 ]$. By evaluating $\wh S_{t_0}^{*{\bf0}_K}(x_1)$ at a time $t_0$ beyond the final decision point, we estimate the probability of a subject being an always-survivor until $Y$ is observed. $\pi$ is then applied to patients with $\hat{p}_1^0(x_1)\hat{m}_{p_2}^{00}(x_1) > c$ or $\wh S_{t_0}^{*{\bf0}_K}(x_1) > c$, where $c$ is a predefined threshold (e.g., 0.95).

Alternatively, we could balance between survivor-optimal decisions and overall population survival probability, ensuring policy's applicability to all subjects. We consider the penalized problem:
$
\tilde\pi = \mathop{\textrm{argmax}} _{\pi\in\Pi} \wh V_\textrm{MR}(\pi) + \lambda (1-\wh S_t^{*}(\pi))
$
where $S_t^*(\pi) = \mathbb{E}[ \pr(T^{\pi} > t | \bar X) ]$. Intuitively, the potential $t$-year mortality rate under $\pi$ acts as a regularizing penalty. The problem is well described in \cite{restrdtr}.
$\tilde\pi$ is not survivor-optimal but controls the survival rate to a level determined by $\lambda$. 

\tred{
Finally, one could explore a principal stratification framework based on potential time to death. Such an approach would allow for more nuanced and fine-grained classification of participants beyond the conventional binary survival status, though it introduces additional methodological and inferential challenges \citep{zhang2025semiparametric}.
}

%%
%%
%\section*{Acknowledgement}
%Acknowledgements should appear after the body of the paper but before any appendices and be as brief as possible
%subject to politeness. 
%%Information, such as contract numbers, of no interest to readers, should be excluded.

%\section*{Supplementary material}
%\label{SM}
%Further material such as technical details, extended proofs, code, for example in \texttt{R}, or additional  simulations, figures and examples may appear online, and should be briefly mentioned as Supplementary Material where appropriate.  Please submit any such content as a PDF file along with your paper, entitled `Supplementary material for Title-of-paper'.
%Include in the paper a section `Supplementary material',  with the sentence `The Supplementary Material includes $\ldots$', giving a brief indication of what is available. It should be possible to read and understand the paper without reading the Supplementary Material.

%\bibliographystyle{plain}
\bibliographystyle{abbrvnat}
\bibliography{neurips_2025}

\begin{thebibliography}{37}
\providecommand{\natexlab}[1]{#1}
\providecommand{\url}[1]{\texttt{#1}}
\expandafter\ifx\csname urlstyle\endcsname\relax
  \providecommand{\doi}[1]{doi: #1}\else
  \providecommand{\doi}{doi: \begingroup \urlstyle{rm}\Url}\fi

\bibitem[Baker and Lindeman(1994)]{ps2}
S.~G. Baker and K.~S. Lindeman.
\newblock The paired availability design: a proposal for evaluating epidural
  analgesia during labor.
\newblock \emph{Statistics in medicine}, 13\penalty0 (21):\penalty0 2269--2278,
  1994.

\bibitem[Chakraborty and Murphy(2014)]{DTRchakraborty2014}
B.~Chakraborty and S.~A. Murphy.
\newblock Dynamic treatment regimes.
\newblock \emph{Annual review of statistics and its application}, 1:\penalty0
  447--464, 2014.

\bibitem[Chernozhukov et~al.(2018)Chernozhukov, Chetverikov, Demirer, Duflo,
  Hansen, Newey, and Robins]{chernozhukov2018double}
V.~Chernozhukov, D.~Chetverikov, M.~Demirer, E.~Duflo, C.~Hansen, W.~Newey, and
  J.~Robins.
\newblock Double/debiased machine learning for treatment and structural
  parameters.
\newblock \emph{The Econometrics Journal}, 21\penalty0 (1):\penalty0 C1--C68,
  2018.

\bibitem[Chu et~al.(2023)Chu, Yang, and Lu]{chu23c}
J.~Chu, S.~Yang, and W.~Lu.
\newblock Multiply robust off-policy evaluation and learning under truncation
  by death.
\newblock \emph{Proceedings of the 40th International Conference on Machine
  Learning}, 202:\penalty0 6195--6227, 2023.

\bibitem[Festor et~al.(2022)Festor, Jia, Gordon, Faisal, Habli, and
  Komorowski]{festor2022assuring}
P.~Festor, Y.~Jia, A.~C. Gordon, A.~A. Faisal, I.~Habli, and M.~Komorowski.
\newblock Assuring the safety of ai-based clinical decision support systems: a
  case study of the ai clinician for sepsis treatment.
\newblock \emph{BMJ health \& care informatics}, 29\penalty0 (1):\penalty0
  e100549, 2022.

\bibitem[Follmann(2006)]{follmann2006augmented}
D.~Follmann.
\newblock Augmented designs to assess immune response in vaccine trials.
\newblock \emph{Biometrics}, 62\penalty0 (4):\penalty0 1161--1169, 2006.

\bibitem[Grossi et~al.(2025)Grossi, Mariani, Mattei, and Mealli]{GROSSI2025}
G.~Grossi, M.~Mariani, A.~Mattei, and F.~Mealli.
\newblock Bayesian principal stratification with longitudinal data and
  truncation by death.
\newblock \emph{Econometrics and Statistics}, 2025.

\bibitem[Hines et~al.(2022)Hines, Dukes, Diaz-Ordaz, and
  Vansteelandt]{Hines2021}
O.~Hines, O.~Dukes, K.~Diaz-Ordaz, and S.~Vansteelandt.
\newblock Demystifying statistical learning based on efficient influence
  functions.
\newblock \emph{The American Statistician}, 76\penalty0 (3):\penalty0 292--304,
  2022.

\bibitem[Imbens and Angrist(1994)]{ps1}
G.~W. Imbens and J.~D. Angrist.
\newblock Identification and estimation of local average treatment effects.
\newblock \emph{Econometrica}, 62\penalty0 (2):\penalty0 467--475, 1994.

\bibitem[Jiang et~al.(2017)Jiang, Lu, Song, and Davidian]{tyeardtr}
R.~Jiang, W.~Lu, R.~Song, and M.~Davidian.
\newblock On estimation of optimal treatment regimes for maximizing t-year
  survival probability.
\newblock \emph{Journal of the Royal Statistical Society Series B: Statistical
  Methodology}, 79\penalty0 (4):\penalty0 1165--1185, 2017.

\bibitem[Jiang et~al.(2022)Jiang, Yang, and Ding]{ps3}
Z.~Jiang, S.~Yang, and P.~Ding.
\newblock Multiply robust estimation of causal effects under principal
  ignorability.
\newblock \emph{Journal of the Royal Statistical Society Series B: Statistical
  Methodology}, 84\penalty0 (4):\penalty0 1423--1445, 2022.

\bibitem[Jin et~al.(2021)Jin, Yang, and Wang]{jin2021pessimism}
Y.~Jin, Z.~Yang, and Z.~Wang.
\newblock Is pessimism provably efficient for offline {RL}?
\newblock \emph{Proceedings of the 38th International Conference on Machine
  Learning}, 139:\penalty0 5084--5096, 2021.

\bibitem[Johnson et~al.(2016)Johnson, Pollard, Shen, Lehman, Feng, Ghassemi,
  Moody, Szolovits, Anthony~Celi, and Mark]{mimic}
A.~E. Johnson, T.~J. Pollard, L.~Shen, L.~H. Lehman, M.~Feng, M.~Ghassemi,
  B.~Moody, P.~Szolovits, L.~Anthony~Celi, and R.~G. Mark.
\newblock {MIMIC-III}, a freely accessible critical care database.
\newblock \emph{Scientific data}, 3\penalty0 (1):\penalty0 1--9, 2016.

\bibitem[Kallus and Zhou(2021)]{kallus2021minimax}
N.~Kallus and A.~Zhou.
\newblock Minimax-optimal policy learning under unobserved confounding.
\newblock \emph{Management Science}, 67\penalty0 (5):\penalty0 2870--2890,
  2021.

\bibitem[Kennedy(2023)]{kennedy2023semiparametric}
E.~H. Kennedy.
\newblock Semiparametric doubly robust targeted double machine learning: a
  review.
\newblock 2023.
\newblock {\it arXiv}: 2203.06469.

\bibitem[Kim et~al.(2024)Kim, Oh, Lee, Park, Lim, and {Korean Sepsis Alliance
  (KSA) investigators}]{kim2024impact}
G.~Kim, D.~K. Oh, S.~Y. Lee, M.~H. Park, C.-M. Lim, and {Korean Sepsis Alliance
  (KSA) investigators}.
\newblock Impact of the timing of invasive mechanical ventilation in patients
  with sepsis: a multicenter cohort study.
\newblock \emph{Critical Care}, 28\penalty0 (1):\penalty0 297, 2024.

\bibitem[Komorowski et~al.(2018)Komorowski, Celi, Badawi, Gordon, and
  Faisal]{komorowski2018artificial}
M.~Komorowski, L.~A. Celi, O.~Badawi, A.~C. Gordon, and A.~A. Faisal.
\newblock The artificial intelligence clinician learns optimal treatment
  strategies for sepsis in intensive care.
\newblock \emph{Nature medicine}, 24\penalty0 (11):\penalty0 1716--1720, 2018.

\bibitem[Kosorok(2008)]{kosorok2008}
M.~R. Kosorok.
\newblock \emph{Introduction to empirical processes and semiparametric
  inference}, volume~61.
\newblock Springer, 2008.

\bibitem[Liu et~al.(2018)Liu, Wang, Kosorok, Zhao, and Zeng]{owlliu2018}
Y.~Liu, Y.~Wang, M.~R. Kosorok, Y.~Zhao, and D.~Zeng.
\newblock Augmented outcome-weighted learning for estimating optimal dynamic
  treatment regimens.
\newblock \emph{Statistics in medicine}, 37\penalty0 (26):\penalty0 3776--3788,
  2018.

\bibitem[Luedtke and Van Der~Laan(2016)]{luedtke2016statistical}
A.~R. Luedtke and M.~J. Van Der~Laan.
\newblock Statistical inference for the mean outcome under a possibly
  non-unique optimal treatment strategy.
\newblock \emph{Annals of statistics}, 44\penalty0 (2):\penalty0 713, 2016.

\bibitem[Murphy(2003)]{dtrmurphy2003}
S.~A. Murphy.
\newblock Optimal dynamic treatment regimes.
\newblock \emph{Journal of the Royal Statistical Society Series B: Statistical
  Methodology}, 65\penalty0 (2):\penalty0 331--355, 2003.

\bibitem[Murphy et~al.(2007)Murphy, Lynch, Oslin, McKay, and
  TenHave]{chronicmurphy2007}
S.~A. Murphy, K.~G. Lynch, D.~Oslin, J.~R. McKay, and T.~TenHave.
\newblock Developing adaptive treatment strategies in substance abuse research.
\newblock \emph{Drug and alcohol dependence}, 88:\penalty0 S24--S30, 2007.

\bibitem[Nauka et~al.(2025)Nauka, Kennedy, Brant, Komorowski, Pirracchio,
  Angus, and Seymour]{nauka2025challenges}
P.~C. Nauka, J.~N. Kennedy, E.~B. Brant, M.~Komorowski, R.~Pirracchio, D.~C.
  Angus, and C.~W. Seymour.
\newblock Challenges with reinforcement learning model transportability for
  sepsis treatment in emergency care.
\newblock \emph{npj Digital Medicine}, 8\penalty0 (1):\penalty0 1--5, 2025.

\bibitem[Rhodes et~al.(2024)Rhodes, Davidian, and Lu]{relive}
G.~Rhodes, M.~Davidian, and W.~Lu.
\newblock Estimation of optimal treatment regimes with electronic medical
  record data using the residual life value estimator.
\newblock \emph{Biostatistics}, 25\penalty0 (4):\penalty0 933--946, 2024.

\bibitem[Robins(1997)]{dtrrobins1997}
J.~M. Robins.
\newblock Causal inference from complex longitudinal data.
\newblock In \emph{Latent variable modeling and applications to causality},
  pages 69--117. Springer, 1997.

\bibitem[Robins and Finkelstein(2000)]{ipcwrobins2000}
J.~M. Robins and D.~M. Finkelstein.
\newblock Correcting for noncompliance and dependent censoring in an aids
  clinical trial with inverse probability of censoring weighted ({IPCW})
  log-rank tests.
\newblock \emph{Biometrics}, 56\penalty0 (3):\penalty0 779--788, 2000.

\bibitem[Robins et~al.(1989)Robins, Sechrest, Freeman, and
  Mulley]{dtrrobins1989aids}
J.~M. Robins, L.~Sechrest, H.~Freeman, and A.~Mulley.
\newblock Health service research methodology: a focus on aids.
\newblock \emph{Washington, DC: US Public Health Service, National Center for
  Health Services Research}, pages 113--59, 1989.

\bibitem[Robins et~al.(2000)Robins, Hernan, and Brumback]{robins2000marginal}
J.~M. Robins, M.~A. Hernan, and B.~Brumback.
\newblock Marginal structural models and causal inference in epidemiology.
\newblock \emph{Epidemiology}, 11\penalty0 (5):\penalty0 550--560, 2000.

\bibitem[Rubin(2006)]{rubin2006causal}
D.~B. Rubin.
\newblock Causal inference through potential outcomes and principal
  stratification: application to studies with "censoring" due to death.
\newblock \emph{Statistical Science}, 21\penalty0 (3):\penalty0 299--309, 2006.

\bibitem[Sarraf et~al.(2024)Sarraf, Sadr, Abedi, and Bonavia]{sepsis1}
E.~Sarraf, A.~V. Sadr, V.~Abedi, and A.~S. Bonavia.
\newblock Enhancing sepsis prognosis: Integrating social determinants and
  demographic variables into a comprehensive model for critically ill patients.
\newblock \emph{Journal of Critical Care}, 83, 2024.

\bibitem[Sommer and Zeger(1991)]{sommer1991estimating}
A.~Sommer and S.~L. Zeger.
\newblock On estimating efficacy from clinical trials.
\newblock \emph{Statistics in medicine}, 10\penalty0 (1):\penalty0 45--52,
  1991.

\bibitem[Unroe et~al.(2010)Unroe, Kahn, Carson, Govert, Martinu, Sathy, Clay,
  Chia, Gray, Tulsky, et~al.]{unroe2010one}
M.~Unroe, J.~M. Kahn, S.~S. Carson, J.~A. Govert, T.~Martinu, S.~J. Sathy,
  A.~S. Clay, J.~Chia, A.~Gray, J.~A. Tulsky, et~al.
\newblock One-year trajectories of care and resource utilization for recipients
  of prolonged mechanical ventilation: a cohort study.
\newblock \emph{Annals of internal medicine}, 153\penalty0 (3):\penalty0
  167--175, 2010.

\bibitem[Watkins and Dayan(1992)]{watkins1992q}
C.~J. Watkins and P.~Dayan.
\newblock Q-learning.
\newblock \emph{Machine learning}, 8\penalty0 (3):\penalty0 279--292, 1992.

\bibitem[Zhang et~al.(2013)Zhang, Tsiatis, Laber, and
  Davidian]{zhang2013robust}
B.~Zhang, A.~A. Tsiatis, E.~B. Laber, and M.~Davidian.
\newblock Robust estimation of optimal dynamic treatment regimes for sequential
  treatment decisions.
\newblock \emph{Biometrika}, 100\penalty0 (3):\penalty0 10--1093, 2013.

\bibitem[Zhang and Yang(2025)]{zhang2025semiparametric}
Y.~Zhang and S.~Yang.
\newblock Semiparametric localized principal stratification analysis with
  continuous strata.
\newblock \emph{Journal of the Royal Statistical Society Series B: Statistical
  Methodology}, 2025.

\bibitem[Zhao et~al.(2009)Zhao, Kosorok, and Zeng]{cancerzhao2009}
Y.~Zhao, M.~R. Kosorok, and D.~Zeng.
\newblock Reinforcement learning design for cancer clinical trials.
\newblock \emph{Statistics in medicine}, 28\penalty0 (26):\penalty0 3294--3315,
  2009.

\bibitem[Zhou et~al.(2021)Zhou, Zhang, Lu, and Li]{restrdtr}
J.~Zhou, J.~Zhang, W.~Lu, and X.~Li.
\newblock On restricted optimal treatment regime estimation for competing risks
  data.
\newblock \emph{Biostatistics}, 22\penalty0 (2):\penalty0 217--232, 2021.

\end{thebibliography}

\newpage
\appendix
\section{Appendix}
Section \ref{App:A1} presents the regularity conditions and the proofs of the theorems stated in the main text.

Section \ref{App:A2} provides technical details related to the cross-fitting algorithm, the experimental settings for the simulation study, and the preprocessing procedure of the MIMIC-III data. Results from an additional experiment are provided.

\subsection{Proof of theoretical results} \label{App:A1}
\subsubsection{Proof of Theorem 1}

Throughout the appendix, for a given policy $\pi$, we simplify the notation by writing $N = N(\pi)$, $V(\pi) = V_{\textrm{AS}}(\pi)$, and $\wh V(\pi) = \wh V_{\textrm{MR}}(\pi)$.
The first lemma demonstrates that the conditional mean of a quantity within the always-survivor stratum is equal to the unconditional mean of the function multiplied by the principal score.

	\begin{lemma} \label{lemma:1}
		For a function $g(X_1)$ such that $\mathbb E|g(X_1)| < \infty$, we have
		\begin{align*}
			\E\left[ g(X_1) \mid U=1111 \right] 
			= \E\left[ \frac{ p_1^0(X_1)m_{p_2}^{00}(X_1) }{\E\left[p_1^0(X_1)m_{p_2}^{00}(X_1)\right]} g(X_1) \right].
		\end{align*}
	\end{lemma}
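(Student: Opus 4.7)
The plan is to compute the conditional mean by first identifying the principal score $\Pr(U = 1111 \mid X_1)$ in terms of observable nuisance quantities, and then applying the elementary Bayes-type manipulation
\[
\E[g(X_1) \mid U = 1111] = \frac{\E\bigl[g(X_1)\,\Pr(U = 1111 \mid X_1)\bigr]}{\Pr(U = 1111)} .
\]
Once $\Pr(U = 1111 \mid X_1) = p_1^0(X_1)\,m_{p_2}^{00}(X_1)$ is established, the denominator becomes $\E[p_1^0(X_1) m_{p_2}^{00}(X_1)]$ by iterated expectations, and the claim follows immediately.

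For the key identification step, I would first collapse the four survival conditions in $U = 1111$ using monotonicity (Assumption \ref{A2}): since $S_2^{11} \ge S_2^{10}, S_2^{01} \ge S_2^{00}$ and $S_1^{a_1} \ge S_2^{a_1 a_2}$, the event $\{U = 1111\}$ is almost surely equivalent to $\{S_1^0 = 1,\, S_2^{00} = 1\}$. Thus
\[
\Pr(U = 1111 \mid X_1) = \E\!\left[\mathbf 1\{S_1^0=1\}\,\E[\mathbf 1\{S_2^{00}=1\} \mid X_1, X_2^0, S_1^0=1] \,\Big|\, X_1\right].
\]
Under Assumption \ref{A3}(i), Assumption \ref{A5}, and causal consistency, the inner conditional probability equals $p_2^{00}(X_1, X_2^0)$, because conditioning on $A_1 = A_2 = 0$, $C_1 = C_2 = 0$ does not alter the law thanks to treatment ignorability and MAR.

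The remaining task is to rewrite $\E[\mathbf 1\{S_1^0 = 1\}\,p_2^{00}(X_1, X_2^0)\mid X_1]$ as $p_1^0(X_1)\,m_{p_2}^{00}(X_1)$. Factoring out $\Pr(S_1^0 = 1 \mid X_1) = p_1^0(X_1)$ (again by sequential randomization and MAR applied to the single-stage survival indicator) leaves $\E[p_2^{00}(X_1, X_2^0) \mid X_1, S_1^0 = 1]$, and Assumption \ref{A:S2} together with consistency and Assumption \ref{A5} matches this to the observed-data nuisance $m_{p_2}^{00}(X_1)$. The main obstacle here is the bookkeeping for the assumption stack — carefully exhibiting, step by step, that each appeal to $\{A_1 = 0, C_1 = 0, C_2 = 0\}$-conditioning is legitimate under sequential randomization, MAR, consistency, and Assumption \ref{A:S2}, so that the counterfactual object $\E[p_2^{00}(X_1, X_2^0) \mid X_1, S_1^0 = 1]$ indeed coincides with $m_{p_2}^{00}(X_1)$. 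Once this identification is in place, plugging $\Pr(U=1111\mid X_1) = p_1^0(X_1) m_{p_2}^{00}(X_1)$ into the Bayes expression yields the stated formula.
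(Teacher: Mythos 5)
Your proposal is correct and follows essentially the same route as the paper: the Bayes-ratio reduction, the monotonicity collapse of $\{U=1111\}$ to survival under $\bar a = (0,0)$, and the identification $\Pr(U=1111\mid X_1)=p_1^0(X_1)\,m_{p_2}^{00}(X_1)$ via sequential randomization, MAR, and consistency, with the paper carrying out your final change-of-conditioning step through an explicit density-ratio manipulation. One minor bookkeeping note: the last step matching $\E[p_2^{00}(X_1,X_2^0)\mid X_1, S_1^0=1]$ to $m_{p_2}^{00}(X_1)$ really rests on Assumptions \ref{A3}(ii) and \ref{A5}(ii) (plus consistency), not on Assumption \ref{A:S2}, which the paper's proof of this lemma does not invoke.
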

\begin{proof}[Proof of Lemma 1]
	First, by Bayes' theorem,
\begin{align*}
	\mathbb E[g(X_1) \mid U=1111] = \mathbb E\left[\frac{f(X_1 \mid U=1111)}{f(X_1)} g(X_1)\right] = \mathbb E\left[ \frac{\pr(U=1111 \mid X_1)}{\pr(U=1111)} g(X_1) \right].
\end{align*}
In addition,
\begin{align*}
	&\pr(U=1111 | X_1) 
	 = \pr(S_2^{00} = 1 | X_1) \\
	&= \pr(S_2^{00}=1 | X_1, A_1=C_1=0)  \tag{Assumptions \ref{A3},\ref{A5}(i)}\\%\ref{A3}} \\
	&= \mathbb E\left[  \pr(S_2^{00}=1 | X_1, X_2^0, \bar A=(0,0), \bar C^{00}=(0,0)) \mid X_1, A_1=C_1=0  \right]  \tag{Assumptions \ref{A3},\ref{A5}(ii)}\\%\ref{A3}, \tef{A6}} \\
	&= \mathbb E\left[  \pr(S_2=1 | \bar X, \bar A=\bar C=(0,0)) \mid X_1, A_1=C_1=0  \right]  \tag{Assumption \ref{A1}}\\%\ref{A1}} \\
	&= \mathbb E\left[  \pr(S_1=S_2=1 | \bar X, \bar A=\bar C=(0,0)) \mid X_1, A_1=C_1=0  \right]  \tag{Assumption \ref{A2}}\\%\ref{A2}}\\
	&= \mathbb E\big[  \pr(S_2=1 | \bar X, \bar A=\bar C=(0,0), S_1=1) 
	   \pr(S_1=1 | \bar X, \bar A=\bar C=(0,0)) \mid X_1, A_1=C_1=0  \big]  \\ %\ref{A3}} \\
	&= \mathbb E\big[  \pr(S_2=1 | \bar X, \bar A=\bar C=(0,0), S_1=1) 
	   \pr(S_1=1 | \bar X, A_1=C_1=0) \mid X_1, A_1=C_1=0  \big] \tag{Assumptions \ref{A3},\ref{A5}(ii)} \\
	&= \mathbb E\bigg[  \pr(S_2=1 | \bar X, \bar A=\bar C=(0,0), S_1=1) \\
	   &\qquad \times \pr(S_1=1|X_1,A_1=C_1=0) \frac{f(X_2|X_1,A_1=C_1=0,S_1=1)}{f(X_2|X_1,A_1=C_1=0)}
	     \bigg| X_1, A_1=C_1=0  \bigg] \\
	&= \mathbb E\big[  \pr(S_2=1 | \bar X, \bar A=\bar C=(0,0), S_1=1)  \\
	   &\qquad \times \pr(S_1=1 | X_1, A_1=C_1=0) | X_1, A_1=C_1=0, S_1=1  \big] \\% \ref{A3}, \ref{A5}, \ref{A6}, \ref{A7}} \\
	&= p_1^0(X_1) \E\big[ p_2^{00}(\bar X) \mid X_1, A_1=C_1=0, S_1=1  \big] \\
	&= p_1^0(X_1) m_{p_2}^{00}(X_1),   %\ref{A6}, \ref{A7}}
\end{align*}
and similarly, $\pr(U=1111) = \mathbb E\left[ p_1^0(X_1) m_{p_2}^{00}(X_1) \right]$. 
\end{proof}

Theorem 1 follows directly from Lemma 1.
\begin{proof}[Proof of Theorem 1]
\begin{align*}
	V(\pi) 
	&= \mathbb E[Y^{\pi} | U=1111] \\
	&= \mathbb E\left[ \mathbb E( Y^{\pi} | X_1, U=1111) \mid U=1111 \right] \\
	&= \mathbb E\left[ \mathbb E( Y^{\pi} | X_1, A_1=\pi_1(X_1), C_1=0, S_1=1, U=1111) \mid U=1111 \right]  \tag{Assumptions \ref{A2},\ref{A3},\ref{A5}}\\%\ref{A2}, \ref{A3}, \ref{A6}} \\
	&= \mathbb E\left[ \mathbb E\left[ \mathbb E( Y^{\pi} | \bar X, A_1=\pi_1(X_1), C_1=0, S_1=1, U=1111) \mid X_1, U=1111 \right] \mid U=1111 \right]   \\
	&= \mathbb E\bigg[ \mathbb E\big[ \mathbb E( Y^{\pi} | \bar X, \bar A=\pi(\bar X), \bar C=\bar0, \bar S=\bar1, U=1111) \\
	&\;\;\quad\qquad \big| X_1, A_1=\pi_1(X_1), C_1=0, S_1=1, U=1111 \big] ~\Big|~ U=1111 \bigg]  \tag{Assumptions \ref{A2},\ref{A3},\ref{A5}}\\%\ref{A2}, \ref{A3}, \ref{A6}}  \\
	&= \mathbb E\left[ \mathbb E\left[ \mu_2^\pi(\bar X) \mid X_1, A_1=\pi_1(X_1), C_1=0, S_1=1 \right] \mid U=1111 \right]  \tag{Assumption \ref{A4}}\\%\ref{A4}}  \\
	&= \mathbb E\left[ m_{\mu_2}^\pi(X_1) \mid U=1111 \right]  \tag{Assumptions \ref{A2},\ref{A:S2},\ref{A4},\ref{A3},\ref{A5}}. %\ref{A2}, \ref{A3}, \ref{A4}, \ref{A6}}.
\end{align*}
\tred{
Alternatively,
\begin{align*}
	V(\pi)
	&= \mathbb E\bigg[ \mathbb E\big[ \mathbb E( Y^{\pi} | \bar X, \bar A=\pi(\bar X), \bar C=\bar0, \bar S=\bar1, U=1111) \\
	&\;\;\quad\qquad \big| X_1, A_1=\pi_1(X_1), C_1=0, S_1=1, U=1111 \big] ~\Big|~ U=1111 \bigg] \\
	&= \E\left[ \E\left\{\frac{\mathbf{1}\{\bar A=\pi(\bar X)\}(1-C_1)(1-C_2)S_1S_2}{\pk_1^\pi(X_1)\pk_2^\pi(\bar X)p_1^\pi(X_1)p_2^\pi(\bar X)} Y \Big| X_1 \right\} \right].  \tag{Assumption \ref{A4}}
\end{align*}
}
Hence, the result follows by Lemma \ref{lemma:1}.
\end{proof}

\subsubsection{Proof of theorem 2} \label{app:thm2}
\tred{
First, we define $Q_{S,j}$ and $Q_{Y,j}$ in Theorem \ref{thm:2} as
\begin{align*}
	Q_{S,1} &= \E[S_1p_2^{00}(\bar X)|X_1, A_1=0, C_1=0] = p_1^0(X_1)m_{p_2}^{00}(X_1), \quad
	Q_{S,2} = S_1p_2^{00}(\bar X), \\
	Q_{Y,1} &= m_{\mu_2}^\pi(X_1), \quad 
	Q_{Y,2} = \mu_2^\pi(\bar X), \quad 
	Q_{Y,3} = Y.
\end{align*}
For $K \ge 3$ cases, we similarly define
\begin{align*}
	&Q_{S,K} = S_1\prod_{j=2}^K p_j^{{\bf 0}_j}(\bar X_j), \quad 
	 Q_{S,j} = \E[ Q_{S,j+1} | \bar X_j, \bar A_j = {\bf0}_j, \bar C_j = {\bf0}_j], \\
	&Q_{Y,K+1} = Y, \quad 
	 Q_{Y,K} = \mu_K^\pi(\bar X), \quad
	Q_{Y,j} = \E[Q_{Y,j+1} | \bar X_j, \bar A_j = \bar\pi_j(\bar X_j), \bar C_j = {\bf0}_j, \bar S_j = {\bf1}_j]
\end{align*}
for $j=1,\dots,K-1$.
}
We adopt the point-mass contamination strategy as introduced in \cite{Hines2021} and \cite{kennedy2023semiparametric}. Let $\mathcal P$ represent the distribution of the observation $O$.

Consider the submodel $\mathcal P_t$, a perturbation of $\mathcal P$ in the direction of $\widetilde{\mathcal P}$ at the specific observation $\tilde o = \{\tilde x_1, \tilde a_1, \tilde c_1, (1-\tilde c_1)\tilde s_1, (1-\tilde c_1)\tilde s_1\tilde x_2, (1-\tilde c_1)\tilde s_1\tilde a_2, (1-\tilde c_1)\tilde c_2\tilde s_1, (1-\tilde c_1)(1-\tilde c_2)\tilde s_1\tilde s_2, (1-\tilde c_1)(1-\tilde c_2)\tilde s_1\tilde s_2\tilde y\}$. The G\^ateaux derivative of the denominator $D(\mathcal P_t)$ evaluated at $t=0$ can be derived as follows.

\begin{align}
	&\frac{\partial}{\partial t}D(\mathcal P_t) \bigg\vert_{t=0} \\
	&= \int p_1^0(x_1) \mathbb E(p_2^{00}(\bar X) \mid x_1, A_1=C_1=0, S_1=1) \frac{\partial}{\partial t} f_t(x_1) \big\vert_{t=0} dx_1 \\
	  &\quad+ \int \frac{\partial}{\partial t} \left( \int s_1 \frac{f_t(s_1, x_1, A_1=C_1=0)}{f_t(x_1, A_1=C_1=0)} ds_1 \right)\biggl\vert_{t=0} \mathbb E(p_2^{00}(\bar X) | x_1, A_1=C_1=0, S_1=1) f(x_1)dx_1 \\
	  &\quad+ \int p_1^0(x_1) \frac{\partial}{\partial t} \bigg( \int s_2 \frac{f_t(s_2, \bar x, \bar A=\bar C={\bf 0}_2, S_1=1)}{f_t(\bar x,\bar A=\bar C={\bf 0}_2,S_1=1)} \\
	   &\qquad\qquad\qquad\qquad\times\frac{f_t(\bar x,A_1=C_1=0,S_1=1)}{f_t(x_1,A_1=C_1=0,S_1=1)} ds_2dx_2 \bigg)  \bigg\vert_{t=0} f(x_1)dx_1 \\
	&= \int p_1^0(x_1) \mathbb E(p_2^{00}(\bar X) \mid x_1, A_1=C_1=0, S_1=1) \{ \mathbf1_{\tilde x_1}(x_1) - f(x_1) \} dx_1 \\
	  &\quad+ \int \frac{\mathbf1_{\tilde x_1}(x_1) (1-\tilde a_1)(1-\tilde c_1)}{f(x_1) \pk^0(x_1)} \left\{ \tilde s_1 - p_1^0(x_1) \right\} \mathbb E(p_2^{00}(\bar X) | x_1, A_1=C_1=0, S_1=1) f(x_1)dx_1 \\
	  &\quad+ \int p_1^0(x_1) 
	  \frac{\mathbf1_{\tilde x_1}(x_1)}{f(x_1)} \bigg[
	    \frac{(1-\tilde a_1)(1-\tilde a_2)(1-\tilde c_1)(1-\tilde c_2)\tilde s_1}{\pk_2^{00}(\tilde x_1, \tilde x_2)\pk_1^0(\tilde x_1)p_1^0(\tilde x_1)} \big\{ \tilde s_2 - p_2^{00}(\tilde x_1, \tilde x_2) \big\} \\
	    &\qquad\qquad\qquad\qquad\qquad+
	     \frac{(1-\tilde a_1)(1-\tilde c_1)\tilde s_1}{\pk_1^0(\tilde x_1)p_1^0(\tilde x_1)} \big\{ p_2^{00}(\tilde x_1, \tilde x_2) - m_{p_2}^{00}(\tilde x_1) \big\}
	  \bigg] f(x_1)dx_1.
\end{align}
Consequently, the efficient influence function for $D$, denoted by $\psi_D(O)$, is given by
$
\psi_D(O) := \phi_D(O) - D(O).
$

Similarly, we have
\begin{align}
	\frac{\partial}{\partial t} N(\mathcal P_t) \bigg\vert_{t=0}
	&= \int m_{\mu_2}^\pi(x_1) p_1^0(x_1) m_{p_2}^{00}(x_1) \frac{\partial}{\partial t} f_t(x_1) \big\vert_{t=0} dx_1  \label{eq:N1} \\
	  &\quad+ \int m_{\mu_2}^\pi(x_1) p_1^0(x_1) \frac{\partial}{\partial t} m_{p_2}^{00}(x_1;\mathcal P_t) \bigg\vert_{t=0} f(x_1) dx_1   \label{eq:N2} \\
	  &\quad+ \int m_{\mu_2}^\pi(x_1) \frac{\partial}{\partial t} p_1^0(x_1;\mathcal P_t) \bigg\vert_{t=0} m_{p_2}^{00}(x_1) f(x_1) dx_1  \label{eq:N3} \\
	  &\quad+ \int \frac{\partial}{\partial t} m_{\mu_2}^\pi(x_1;\mathcal P_t) \bigg\vert_{t=0} p_1^0(x_1) m_{p_2}^{00}(x_1) f(x_1) dx_1.  \label{eq:N4}
\end{align}

The simplification of each term yields
\begin{align*}
	\eqref{eq:N1} &= m_{\mu_2}^\pi(\tilde x_1) p_1^0(\tilde x_1) m_{p_2}^{00}(\tilde x_1) - N(\tilde o), \\
	\eqref{eq:N2} &= \int m_{\mu_2}^\pi(x_1) p_1^0(x_1) \frac{\mathbf1_{\tilde x_1}(x_1)}{f(x_1)}\\
	  &\qquad\times \bigg[
	    \frac{(1-\tilde a_1)(1-\tilde a_2)(1-\tilde c_1)(1-\tilde c_2)\tilde s_1}{\pk_2^{00}(\tilde x_1, \tilde x_2)\pk_1^0(\tilde x_1)p_1^0(\tilde x_1)} \big\{ \tilde s_2 - p_2^{00}(\tilde x_1, \tilde x_2) \big\} \\
	    &\qquad\qquad+
	     \frac{(1-\tilde a_1)(1-\tilde c_1)\tilde s_1}{\pk_1^0(\tilde x_1)p_1^0(\tilde x_1)} \big\{ p_2^{00}(\tilde x_1, \tilde x_2) - m_{p_2}^{00}(\tilde x_1) \big\}  \bigg] f(x_1) dx_1, \\
	\eqref{eq:N3} &= \int m_{\mu_2}^\pi(x_1) 
	  \frac{\mathbf1_{\tilde x_1}(x_1) (1-\tilde a_1)(1-\tilde c_1)}{f(x_1) \pk_1^0(x_1)} \left\{ \tilde s_1 - p_1^0(x_1) \right\}
	  m_{p_2}^{00}(x_1) f(x_1) dx_1, \\
	\eqref{eq:N4} &= \int
	\bigg[ \frac{\mathbf1\{(\tilde a_1, \tilde a_2)=\pi(\tilde x_1, \tilde x_2)\}(1-\tilde c_1)(1 - \tilde c_2)\tilde s_1\tilde s_2}{\pk_2^{\bar{\widetilde a}}(\tilde x_1, \tilde x_2)p_2^{\bar{\widetilde a}}(\tilde x_1, \tilde x_2) \pk_1^{\tilde a_1}(\tilde x_1)p_1^{\tilde a_1}(\tilde x_1)} \left\{ \tilde y - \mu_2^{\pi}(\tilde x_1, \tilde x_2) \right\} \\
	  &\qquad\quad+ \frac{\mathbf1\{\tilde a_1 = \pi_1(\tilde x_1)\}(1-{\tilde c_1}){\tilde s_1}}{\pk_1^{\tilde a_1}(\tilde x_1)p_1^{\tilde a_1}(\tilde x_1)} \left\{ \mu_2^{\pi}(\tilde x_1, \tilde x_2) - m_{\mu_2}^\pi(\tilde x_1) \right\}
	\bigg]\\
	&\quad\times \frac{\mathbf1_{\tilde x_1}(x_1)}{f(\tilde x_1)} \times p_1^0(x_1) m_{p_2}^{00}(x_1) f(x_1) dx_1.
\end{align*}
Organizing the terms above yields the efficient influence function $\psi_N$ of $N$
\[
\psi_N(O) = \phi_N(O) - N(O).
\]

Finally, applying the product rule and the chain rule for G\^auteaux derivatives, we obtain
\begin{align*} 
	\psi_{V(\pi)}
	&= \frac1D\psi_N - \frac{N}{D^2}\psi_D \\
	&= \frac1D (\phi_N - N) - \frac1D V(\pi) (\phi_D - D) \\
	&= \frac1D \left\{ \phi_N - V(\pi)\phi_D \right\}.
	\label{eq:IF-ratio}
\end{align*}
%\end{proof}

%
\subsubsection{Proof of Theorem 3}
Let $\pr\{f\}$ and $\pr_n\{f\}$ denote the expectation of a function $f$ with respect to a probability measure $\pr$ and its empirical counterpart $\pr_n$, respectively. 
We assume the following regularity conditions (Assumption \ref{A9}), which state positivity and boundedness for the parametric models.
\begin{assumption} \ \label{A9}
	\begin{enumerate}
		\item [(i)] $\wh{\pk}_k^{\bar a_k}, \wh p_k^{\bar a_k} \ge \epsilon$ a.s. for all $\bar a_k$ for some $\epsilon > 0$.
		\item [(ii)] $|\wh\mu_2^{a_1a_2}(\bar x)|, |\wh m_{\mu_2}^{a_1a_2}(x_1)| < L$ for all $\bar x, \bar a$ for some $L<\infty.$
	\end{enumerate}
\end{assumption}

\begin{proof}[Proof of Theorem 3]

Denote $\theta^* = \{ \alpha_k^*, \eta_k^*, \gamma_k^*, \zeta^*, \xi^*, \nu^*: k=1,2 \}$ as the probability limits of $\hat\theta = \{ \hat\alpha_k, \hat\eta_k, \hat\gamma_k, \hat\zeta, \hat\xi, \hat\nu: k=1,2 \}$. For simplicity, define
$\wh e_k(\cdot) = e_k(\cdot; \hat\alpha_k)$,
$e_k^*(\cdot) = e_k(\cdot; \alpha_k^*),$
and similarly for other nuisance models.
Additionally, let
	\begin{gather*}
		m_{p_2^*}^{a_1a_2}(x_1) = \mathbb E(p_2^{*a_1a_2}(\bar X)|x_1,a_1,C_1=0,S_1=1), \\
		m_{\mu_2^*}^{a_1a_2}(x_1)  = \mathbb E(\mu_2^{*a_1a_2}(\bar X) |x_1,a_1,C_1=0,S_1=1).
	\end{gather*}

To demonstrate the multiple robustness of $\wh V(\pi)$, it is sufficient to show that (i) $\mathbb E\phi_D^* = D$ and (ii) $\mathbb E\phi_N^* = N$.

\begin{itemize}
	\item[(i)] 
	\begin{align*}
		\mathbb E[\phi_D^*]
		&= \mathbb E \bigg[ 
		\frac{(1-A_1)(1-C_1)}{\pk_1^{*0}(X_1)} \left\{S_1 - p_1^{*0}(X_1)\right\} m_{p_2}^{*00}(X_1) \\
	      &\qquad+ \bigg[ \frac{(1-A_1)(1-A_2)(1-C_1)(1-C_2)S_1}{\pk_1^{*0}(X_1)p_1^{*0}(X_1)\pk_2^{*00}(\bar X)} \big(S_2 - p_2^{*00}(\bar X)\big) \\
	      &\qquad\qquad+ 
	   	    \frac{(1-A_1)(1-C_1)S_1}{\pk_1^{*0}(X_1)p_1^{*0}(X_1)} \left\{ p_2^{*00}(\bar X) - m_{p_2}^{*00}(X_1) \right\} \bigg] \times p_1^{*0}(X_1) \\
	      &\qquad+ p_1^{*0}(X_1) m_{p_2}^{*00}(X_1) \bigg] \\
	    &= \mathbb E\left[ \frac{\pk_1^{0}(X_1)}{\pk_1^{*0}(X_1)} \left\{p_1^{0}(X_1) - p_1^{*0}(X_1)\right\} m_{p_2}^{*00}(X_1) \right] + \mathbb E\left[ p_1^{*0}(X_1) m_{p_2}^{*00}(X_1) \right] \\
	   	  &\quad+ \mathbb E\Bigg( \bigg[ \mathbb E \left\{
	   	      \frac{\pk_2^{00}(\bar X)}{\pk_2^{*00}
	   	      (\bar X)} \big(p_2^{00}(\bar X) - p_2^{*00}(\bar X)\big) \bigg\vert X_1, A_1=C_1=0, S_1=1
	   	      \right\} \\
	   	   &\qquad\qquad+
	   	    \left\{ m_{p^*_2}^{00}(X_1) - m_{p_2}^{*00}(X_1) \right\} \bigg] \times p_1^{0}(X_1) \frac{\pk_1^{0}(X_1)}
	   	    {\pk_1^{*0}(X_1)} \Bigg)
	\end{align*}
	If any of the following scenarios is met,
	\begin{enumerate}
		\item[(i-1)] $\widehat{\pk}_1, \widehat{\pk}_2$ are correctly specified.
		\item[(i-2)] $\widehat{\pk}_1, \widehat p_2$ are correctly specified.
		\item[(i-3)] $\widehat p_1, \widehat p_2, \widehat m_{p_2}$ are correctly specified.
	\end{enumerate}
	then $\pr_n\{\hat\phi_D\}$ converges in probability to $D$.

	\item[(ii)]
	Similarly, by the law of iterated expectation,
	\begin{align}
		&\mathbb E\phi_N^* \\
		&= \mathbb E \bigg[
		\bigg( \frac{\mathbf1\{\pi(\bar X)=\bar A\}(1-C_1)(1-C_2)S_1S_2}{\pk^{*a_1}_1(X_1)p_1^{*a_1}(X_1)\pk_2^{*\bar a}(\bar X)p_2^{*\bar a}(\bar X)} \big\{Y - \mu_2^{*\pi}(\bar X)\big\} \\
		  &\qquad\quad+ \frac{\mathbf1\{\pi_1(X_1)=A_1\}(1-C_1)S_1}{\pk_1^{*a_1}(X_1)p_1^{*a_1}(X_1)} \left\{ \mu_2^{*\pi}(\bar X)  
		    - m_{\mu_2}^{*\pi}(X_1) \right\} \bigg) \\
		  &\qquad \times p_1^{*0}(X_1) m_{p_2}^{*00}(X_1) \\
		  &\qquad+ m_{\mu_2}^{*\pi}(X_1) \left[ \frac{(1-A_1)(1-C_1)}{\pk_1^{*0}(X_1)} \left(S_1 - p_1^{*0}(X_1)\right) \right] \cdot m_{p_2}^{*00}(X_1) \\
		  &\qquad+ \bigg[ \frac{(1-A_1)(1-A_2)(1-C_1)(1-C_2)S_1}{\pk_1^{*0}(X_1)p_1^{*0}(X_1)\pk_2^{*00}(\bar X)} \big(S_2 - p_2^{*00}(\bar X)\big) \\
	  	  &\qquad\quad\;+ 
	  	     \frac{(1-A_1)(1-C_1)S_1}{\pk_1^{*0}(X_1)p_1^{*0}(X_1)} \left\{ p_2^{*00}(\bar X) - m_{p_2}^{*00}(X_1) \right\} \bigg] m_{\mu_2}^{*\pi}(X_1) p_1^{*0}(X_1) \\
		  &\qquad+ m_{\mu_2}^{*\pi}(X_1) p_1^{*0}(X_1) m_{p_2}^{*00}(X_1)  \bigg] \\
		&= \mathbb E\Bigg[ 
		  \frac{\pk_1^{\pi_1}(X_1)p_1^{\pi_1}(X_1)}{\pk_1^{*\pi_1}(X_1)p_1^{*\pi_1}(X_1)} 
		    \bigg\{
		  \left( m_{\mu^*_2}^{\pi}(X_1) - m_{\mu_2}^{*\pi}(X_1) \right)
		      \\
		  &\qquad\;\;+ 
		    \mathbb E\bigg( 
		    \frac{\pk_2^{\pi}(\bar X)p_2^{\pi}(\bar X)}{\pk_2^{*\pi}(\bar X)p_2^{*\pi}(\bar X)}
		    \big(\mu_2^{\pi}(\bar X) - \mu_2^{*\pi}(\bar X)\big) 
		  \Big\vert X_1,A_1=\pi_1(X_1),C_1=0,S_1=1\bigg)
		      \bigg\} \\
		  &\qquad\times p_1^{*0}(X_1) m_{p_2}^{*00}(X_1) 
		    + m_{\mu_2}^{*\pi}(X_1) \phi_D^* \\
		  &\qquad+ 
		    m_{\mu_2}^{*\pi}(X_1) p_1^{*0}(X_1) m_{p_2}^{*00}(X_1)
		  \Bigg].  
	\end{align}
	
	To enumerate the scenarios for a consistent value estimator, we only consider those with a consistent denominator.
	\begin{enumerate}
		\item[(ii-1)]  $\wh p_1, \wh p_2, \wh m_{p_2}$ are correctly specified
		\item[(ii-2)]  $\wh p_1, \wh m_{p_2}, \wh\mu_2$ are correctly specified
		\item[(ii-3)]  $\wh\mu_2, \wh m_{\mu_2}$ are correctly specified
	\end{enumerate}
	If (i-1) is met, we also need (ii-1) or (ii-3) for $\mathbb E\phi_N^* = N$. If (i-2) is met, then (ii-2) or (ii-3) is required, and if (i-3) is met, then (ii-3) is required.
\end{itemize}

Consequently, $\wh V(\pi)$ achieves consistency if the scenario is within:
	\begin{align*}
		&\{\text{ $\widehat{\pk}_1, \hat p_1, \widehat{\pk}_2, \hat p_2, \hat m_{p_2}$ are correctly specified }\}  
		\cup 
		 \{\text{ $\widehat{\pk}_1, \widehat{\pk}_2, \hat \mu_2, \hat m_{\mu_2}$ are correctly specified }\} \\
		&\cup
		 \{\text{ $\widehat{\pk}_1, \hat p_1, \hat p_2, \hat m_{p_2}, \hat \mu_2$ are correctly specified }\} 
		\cup
		 \{\text{ $\widehat{\pk}_1, \hat p_2, \hat \mu_2, \hat m_{\mu_2}$ are correctly specified }\} \\
		&\cup
		 \{\text{ $\hat p_1, \hat p_2, \hat m_{p_2}, \hat \mu_2, \hat m_{\mu_2}$ are correctly specified }\}.
	\end{align*}
	
Now, we will show that if all nuisance models are correctly specified, $\wh V(\pi)$ has $\psi_{V(\pi)}$ as its efficient influence function. We use a dot to denote the partial derivative with respect to the model parameter $\theta$, for example, $\dot N^*(O) = \dot N(O;\theta^*) = \frac{\partial N(O;\theta)}{\partial \theta} \Big|_{\theta=\theta^*}$.

By Taylor expansion at $\theta^*$, we have, for general functionals $N(O)$ and $D(O)$,
\begin{gather}
	\pr_n\{ N(O; \hat\theta) \} = \pr_n\{ N(O; \theta^*) \} + \pr\{ \dot N(O; \theta^*) \} (\hat\theta - \theta^*) + o_{\pr}(n^{-1/2}),
	\label{eq:taylor-N} \\
	\pr_n\{ D(O; \hat\theta) \} = \pr_n\{ D(O; \theta^*) \} + \pr\{ \dot D(O; \theta^*) \} (\hat\theta - \theta^*) + o_{\pr}(n^{-1/2}),
	\label{eq:taylor-D} \\
	    \frac{\pr_n\{ N(O; \hat\theta) \}}{\pr_n\{ D(O; \hat\theta) \}} = \frac{\pr_n\{ N(O; \hat\theta) \}}{\pr\{ D(O; \theta^*) \}} - \frac{\pr\{ N(O; \theta^*) \}}{[\pr\{ D(O; \theta^*) \}]^2} \left[ \pr_n\{ D(O; \hat\theta) \} - \pr\{ D(O; \theta^*) \} \right] + o_{\pr}(n^{-1/2}).
	\label{eq:taylor-ratio}
\end{gather}
Plugging in $N(O;\theta) = \phi_N(\theta)$ and $D(O;\theta) = \phi_D(\theta)$ results in
\begin{align*}
	&\wh V(\pi) - V(\pi) \\
	&= \frac{\pr_n\{ \hat\phi_N \}}{\pr\{ \phi_D^* \}} - \frac{\pr\{ \phi_N^* \}}{[\pr\{ \phi_D^* \}]^2} \left[ \pr_n\{ \hat\phi_D \} - {\pr\{ \phi_D^* \}} \right] - {V(\pi)} + o_{\pr}(n^{-1/2}) 
	  \tag{by (\ref{eq:taylor-ratio})} \\
	&= \pr_n \left\{ \frac1{\pr\{\phi_D^*\}} \left( \hat\phi_N - \frac{\pr\{ \phi_N^* \}}{\pr\{ \phi_D^* \}} \hat\phi_D \right) \right\} + o_{\pr}(n^{-1/2}) \\
	&= \pr_n \left\{ \frac1{p_2^{00}} \big( \hat\phi_N -V(\pi) \hat\phi_D \big) \right\} + o_{\pr}(n^{-1/2}) \\
	&= \pr_n \left\{ \frac1{p_2^{00}} \left( (\phi_N^* + {\pr \{ \dot\phi_N^* \})(\hat\theta-\theta^*)}\big) -V(\pi) \big(\phi_D^* + {\pr\{\dot\phi_D^*\}(\hat\theta-\theta^*)}\big) \right) \right\} + o_{\mathbb P }(n^{-1/2})  
	  \tag{by (\ref{eq:taylor-N}), (\ref{eq:taylor-D})} \\
	&= \pr_n \left\{ \frac1{p_2^{00}} \big( \phi_N^* -V(\pi) \phi_D^* \big) \right\} + o_{\pr}(n^{-1/2}) \\
	&= \pr_n \{ {\psi}_{V(\pi)}\} + o_{\pr}(n^{-1/2}). \tag{*} \label{eq:clt-form}
\end{align*}
The penultimate equality follows because $\pr\{\dot\phi_N^*\} = \pr\{\dot\phi_D^*\}=0.$  
\end{proof}

\subsubsection{Proof of Theorem 4}
Let $\|\cdot\|_P$ denote the $L^2(P)$-norm with respect to a probability measure $P$. We omit the subscript when the corresponding measure is clear from the context.
Regularity condition \ref{A10} states that the nonparametric models satisfy positivity and boundedness and are consistent with the true data-generating models, similar to Assumption \ref{A9}. It additionally assumes that the rate of convergence for each model does not exceed a certain level.

\begin{assumption} \ \label{A10}
	\begin{enumerate}
	\item $\wh{\pk}_k^{\bar a_k}, \wh p_k^{\bar a_k} \ge \epsilon$ a.s. $k=1,2,$ for some $\epsilon > 0$.
	\item
		$|\wh\mu_2^{a_1a_2}(\bar x)|, |\wh m_{\mu_2}^{a_1a_2}(x_1)| < L$ for all $\bar x, \bar a$ for some $L<\infty.$
	\item $\hat\theta \overset{\pr}\to \theta$ as $n\to\infty$.
	\item $\|\hat g - g\| \cdot \|\hat h - h\| = o_{P}(n^{-1/2})$ for all $g \ne h$ where $g,h \in \{ \pk_k^{\bar a_k}, p_k^{\bar a_k}, \mu_2^{a_1a_2}, m_{p_2}^{00}, m_{\mu_2}^{a_1a_2}: \bar a\in\{0,1\}^2, k=1,2 \}$.
	\end{enumerate}
\end{assumption}

\begin{proof}[Proof of Theorem 4]
It suffices to show that the EIF of $\wh V(\pi)$ is $\psi_{V(\pi)}$. This means showing that
\[
	\wh V(\pi) - V(\pi) = \pr_n\{\psi_{V(\pi)}\} + o_{\pr}(n^{-1/2}).
\]
\begin{enumerate}
	\item We observe that, by algebra,
	\begin{align*}
		&\pr_n\{ \hat\phi_D \} - \pr\{ \phi_D^* \} \\
		&= (\pr_n - \pr)\{ \phi_D^* \} 
		+ (\pr_n - \pr)\{ \hat\phi_D - \phi_D^* \}
		+ \pr \{ \hat\phi_D - \phi_D^* \}.
	\end{align*}
	The first term is the centered empirical mean, which will converge to a normal distribution as $n \to \infty$. The second term can be simplified to
	\begin{align*}
		(\pr_n - \pr)\{ \hat\phi_D - \phi_D^* \} = \mathcal O_{\pr}\left( \frac{\| \hat\phi_D - \phi_D^* \|}{\sqrt n} \right) = o_{\pr}(n^{-1/2})
	\end{align*}
	by Lemma 1 from \cite{kennedy2023semiparametric},
	where the last equality follows from the bounded convergence theorem because $\hat\phi_D \overset{\pr}{\to} \phi_D^*$ and $|\phi_D(\theta)|$ is bounded almost surely.
	
	For simplicity, let  
	\begin{gather*}
		m_{\wh p_2}^{a_1a_2}(x_1) = \mathbb E[\wh p_2^{a_1a_2}(\bar X) \mid x_1,a_1,0,1], \\
		m_{\wh \mu, \pi}^{a_1a_2}(x_1) = \mathbb E[\wh \mu_2^{\pi}(\bar X) \mid x_1,\pi_1,0,1]
	\end{gather*}
	denote the conditional means of the pseudo-outcomes. To bound the convergence rate of the third term $T_D := \mathbb P\{ \hat\phi_D - \phi_D^* \}$, we observe that by the law of iterated expectation,
	\begin{align*}
		T_D &= \mathbb E \bigg[
		\frac{{\pk}_1^0(X_1)}{\wh{\pk}_1^0(X_1)} \left\{p_1^0(X_1) - \wh p_1^0(X_1)\right\} \wh m_{p_2}^{00}(X_1) \\
		&\qquad+ \wh p_1^0(X_1) \frac{\pk_1^0(X_1) p_1^0(X_1)} {\wh\pk_1^0(X_1) \wh p_1^0(X_1)} \bigg( 
		  \mathbb E\bigg[\frac{\pk_2^{00}(\bar X)}{\wh\pk_2^{00}(\bar X)}
		  \big\{p_2^{00}(\bar X) - \wh p_2^{00}(\bar X)\big\} | X_1,0,0,1\bigg] \\
		&\qquad\qquad\qquad\qquad
		 \qquad\qquad\qquad+ 
			   \left\{ m_{\wh p_2}^{00}(X_1) - \wh m_{p_2}^{00}(X_1) \right\} \bigg) \\
		&\qquad+ \wh p_1^0(X_1) \wh m_{p_2}^{00}(X_1)
		 - p_1^0(X_1) m_{p_2}^{00}(X_1)
		\bigg] \\
		&= \mathbb E \Bigg[
		\frac{\wh\pk_1^0(X_1) - \pk_1^0(X_1)}{\wh\pk_1^0(X_1)} \wh p_1^0(X_1) \left\{ \wh m_{p_2}^{00}(X_1) - m_{\hat p_2}^{00}(X_1) \right\} \\
		  &\qquad+ 
		   \frac{\wh\pk_1^0(X_1) - \pk_1^0(X_1)}{\wh\pk_1^0(X_1)} \left\{ \wh p_1^0(X_1) - p_1^0(X_1) \right\} m_{\wh p_2}^{00}(X_1) \\
		   &\qquad+ \mathbb E\bigg(
		    \bigg\{ p_1^0(X_1) \left(
	 	    \frac{\pk_1^0(X_1)}{\wh\pk_1^0(X_1)} \big(\frac{\pk_2^{00}(\bar X)}{\wh\pk_2^{00}(\bar X)} - 1\big) + \big(\frac{\pk_1^0(X_1)}{\wh\pk_1^0(X_1)} - 1\big) 
		    \right) \\
		    &\quad\qquad\qquad+ (p_1^0(X_1) - \wh p_1^0(X_1)) \bigg\} \times \{p_2^{00}(\bar X) - \wh p_2^{00}(\bar X)\}
		    | X_1,0,0,1\bigg) \\
		  &\qquad+
		   \left\{ p_1^0(X_1) - \wh p_1^0(X_1) \right\} \left\{ m_{\wh p_2}^{00}(X_1) - m_{p_2}^{00}(X_1) \right\} \Bigg]
		  .
	\end{align*}
	The second equality is obtained by rearranging the terms, which involves adding and subtracting the same quantities. By the Cauchy-Schwarz inequality and conditional Jensen's inequality, we have, for some constant $K_D$,
	\begin{align*}
		|T_D| &\le
		  K_D \times \bigg[
		  \|\wh\pk_1^0 - \pk_1^0\| \cdot \left( \|\wh m_{p_2}^{00} - m_{p_2}^{00}\| + \|\wh p_1^0 - p_1^0\| \right) \\
		  &\;\qquad\qquad+
		   \Big(\|\wh\pk_2^{00}-\pk_2^{00}\| + \|\wh\pk_1^{0}-\pk_1^{0}\| + \|\wh p_1^{0} - p_1^{0}\|\Big) \cdot \|\wh p_2^{00} - p_2^{00}\| \\
		  &\;\qquad\qquad+
		   \|\wh p_1^0 - p_1^0\| \cdot \|m_{\wh p_2}^{00} - m_{p_2}^{00}\|
		  \bigg] = o_{\pr}(n^{-1/2}).
	\end{align*}

	\item By the same argument, we only need to determine the rate of convergence for the term $T_N := \mathbb P\{ \hat\phi_N - \phi_N^* \}$. Again, by rearranging the terms and using the fact that $T_D = o_{\pr}(n^{-1/2})$, we have
	\begin{align*}
		&T_N \\
		&= \mathbb E\Bigg[ \frac{\pk_1^{\pi_1}(X_1)p_1^{\pi_1}(X_1)}{\wh\pk_1^{\pi_1}(X_1)\wh p_1^{\pi_1}(X_1)} 
		  \bigg[ \left\{ m_{\hat\mu_2}^{\pi}(X_1) - \wh m_{\mu_2}^{\pi}(X_1) \right\}    \\
		  &\qquad\;+ 
		    \mathbb E\bigg\{ 
		    \frac{\pk_2^{\pi}(\bar X)p_2^{\pi}(\bar X)}{\wh\pk_2^{\pi}(\bar X)\wh p_2^{\pi}(\bar X)}
%		    \mathbf1\{\bar A = \pi(\bar X)\}
		    \big\{\mu_2^{\pi}(\bar X) - \hat\mu_2^{\pi}(\bar X)\big\} 
		  \Big\vert X_1,A_1=\pi_1(X_1),C_1=0,S_1=1\bigg\}  \bigg]\\
		  &\quad \times \wh p_1^{0}(X_1) \wh m_{p_2}^{00}(X_1) + \wh m_{\mu_2}^{\pi}(X_1) \wh\phi_D - m_{\mu_2}^{\pi}(X_1) p_1^{0}(X_1) m_{p_2}^{00}(X_1)
		  \Bigg] \\
		&= \mathbb E\Bigg[ \frac{\pk_1^{\pi_1}(X_1)p_1^{\pi_1}(X_1)}{\wh\pk_1^{\pi_1}(X_1)\wh p_1^{\pi_1}(X_1)} 
		  \bigg[ \left\{ m_{\hat\mu_2}^{\pi}(X_1) - \wh m_{\mu_2}^{\pi}(X_1) \right\}    \\
		  &\qquad\;+
		   \mathbb E\bigg\{ 
		    \frac{\pk_2^{\pi}(\bar X)p_2^{\pi}(\bar X)}{\wh\pk_2^{\pi}(\bar X)\wh p_2^{\pi}(\bar X)}
%		    \mathbf1\{\bar A = \pi(\bar X)\}
		    \big\{\mu_2^{\pi}(\bar X) - \hat\mu_2^{\pi}(\bar X)\big\} 
		    \Big\vert X_1,A_1=\pi_1(X_1),C_1=0,S_1=1\bigg\}  \bigg]\\
		  &\quad \times \wh p_1^{0}(X_1) \wh m_{p_2}^{00}(X_1) 
		   +\{\wh m_{\mu_2}^{\pi}(X_1) - m_{\mu_2}^{\pi}(X_1)\} p_1^{0}(X_1) m_{p_2}^{00}(X_1)
		  \Bigg] + o_{\pr}(n^{-1/2}) \\
		&= \mathbb E\Bigg[ \bigg( \frac{\pk_1^{\pi_1}(X_1)}{\wh\pk_1^{\pi_1}(X_1)} - 1 \bigg)
		  \left\{ m_{\hat\mu_2}^{\pi}(X_1) - \wh m_{\mu_2}^{\pi}(X_1) \right\} p_1^{0}(X_1) \wh m_{p_2}^{00}(X_1)   \\
		  &\qquad+
		   \bigg(\frac{\pk_1^{\pi_1}(X_1)}{\wh\pk_1^{\pi_1}(X_1)}-1\bigg)
		   \mathbb E\bigg\{ 
		    \frac{\pk_2^{\pi}(\bar X)p_2^{\pi}(\bar X)}{\wh\pk_2^{\pi}(\bar X)\wh p_2^{\pi}(\bar X)}
		    \big\{\mu_2^{\pi}(\bar X) - \hat\mu_2^{\pi}(\bar X)\big\} 
		    \Big\vert X_1,\pi_1,0,1\bigg\} p_1^{0}(X_1) \wh m_{p_2}^{00}(X_1)  \\
		  &\qquad+
		   \big\{\wh m_{\mu_2}^{\pi}(X_1) - m_{\hat\mu_2}^{\pi}(X_1)\big\}
		   \big\{ p_1^{0}(X_1) m_{p_2}^{00}(X_1) - \wh p_1^{0}(X_1) \wh m_{p_2}^{00}(X_1) \big\} \\
		  &\qquad+
		   \big\{m_{\hat\mu_2}^{\pi}(X_1) - m_{\mu_2}^{\pi}(X_1)\big\}
		   \big\{ p_1^{0}(X_1) m_{p_2}^{00}(X_1) - \wh p_1^{0}(X_1) \wh m_{p_2}^{00}(X_1) \big\}
		  \Bigg] + o_{\pr}(n^{-1/2}).
	\end{align*}
	By the Cauchy-Schwarz and conditional Jensen's inequalities, we obtain:
	\begin{align*}
		|T_N| 
		&\le K_N \times \bigg[
		 \|\pk_1^{\pi_1} - \wh\pk_1^{\pi_1}\|  \Big( \|\mu_2^\pi - \wh\mu_2^\pi\| 
		         + \|m_{\mu_2}^\pi - \wh m_{\mu_2}^\pi\|\Big) \\
		  &\qquad\qquad\;+ 
		   \Big( \|\mu_2^\pi - \wh\mu_2^\pi\| 
		         + \|m_{\mu_2}^\pi - \wh m_{\mu_2}^\pi\|\Big) 
		   \Big( \|p_1^0 - \wh p_1^0\| + \|m_{p_2}^{00} - \wh m_{p_2}^{00}\| \Big)  \bigg]\\
		  &\quad\;+ o_{\pr}(n^{-1/2}) = o_{\pr}(n^{-1/2})
	\end{align*}
	for some constant $K_N$.
\end{enumerate}
By 1 and 2, we have (\ref{eq:taylor-N}), (\ref{eq:taylor-D}), and {(\ref{eq:taylor-ratio})}. Hence, the desired result follows.
\end{proof}

\subsubsection{Proof of Theorem 5}
Let $V'(\beta^*)$ and $V''(\beta^*)$ denote the first and second-order derivatives of $V(\beta)$ evaluated at $\beta^*$, respectively.
In addition to Assumption \ref{A10}, We further assume the following regularity conditions, which guarantee a well-defined optimization problem.

\begin{assumption} \ \label{A11}
	\begin{itemize}
	\item[(i)] $V(\beta)$ is twice continuously differentiable in the neighborhood of $\beta^*$.
	\item[(ii)] There exists $\delta_0 > 0$ such that $\pr(|\wt H_k^\intercal \beta_k| \le \delta) = O(\delta)$, $k=1,2$ uniformly in $0\le \delta\le \delta_0$.
	\item[(iii)] $V(\beta;\theta)$ is Fr\'echet differentiable at $\theta^*$, and the Fr\'echet derivative $|\dot V(\beta;\theta^*)| \le M$ for almost all $o$ and $\beta$ for some $0<M<\infty$.
	\end{itemize}
\end{assumption}
The first condition is a standard regularity condition, ensuring the smoothness of the objective surface. This condition is essential for well-behaved optimization procedures and guarantees uniform convergence of the estimators. 
The second condition, a margin condition (\citealp{luedtke2016statistical}), ensures that the probability of the undecidable boundary case, where $|\tilde{H}_k^\intercal \beta_k^*| = 0$, is zero.
The third condition imposes a local smoothness on the always-survivor value function around the true data generating mechanism $\theta^*$. This condition prevents abrupt changes in the value function in the neighborhood of $\theta^*$.

The following lemma dictates the rate of convergence of $\hat\beta$ to $\beta^*$, thus will be useful when proving asymptotic normality of $\wh V_\text{AS}(\hat\beta)$. 
\begin{lemma} \label{lemma:2}
		Under assumptions \ref{A1}, \ref{A2}, \ref{A10}, and \ref{A11},
		we have
		$n^{1/3} \|\hat\beta - \beta^*\| = O_{\pr}(1).$
	\end{lemma}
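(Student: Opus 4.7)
The plan is to adapt the classical cube-root argument for M-estimators whose criterion is indexed by a VC class of indicators. Write $M_n(\beta) = \wh V_{\textnormal{MR}}(\beta)$ and $M(\beta) = V_{\textnormal{AS}}(\pi_\beta)$. Since $\hat\beta$ maximizes $M_n$ and $\beta^*$ maximizes $M$, the rate of $\hat\beta$ is governed by balancing the population curvature of $M$ near $\beta^*$ against the modulus of continuity of the empirical process $M_n - M$ over shells $\{\beta : \|\beta - \beta^*\| \asymp r\}$.

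First I would establish consistency and a quadratic lower bound. Theorem \ref{thm:4} together with Assumption \ref{A10} gives uniform convergence of $M_n$ to $M$ on a compact neighborhood, so $\hat\beta \overset{p}{\to} \beta^*$. Assumption \ref{A11}(i) and first-order optimality of $\beta^*$ then yield $M(\beta^*) - M(\beta) \gtrsim \|\beta - \beta^*\|^2$ locally. Next, the $\beta$-dependence of $\wh\phi_{N(\beta)}$ enters only through the indicators $\mathbf1\{\tilde H_k^\top \beta_k > 0\}$, which form a VC class. For $\|\beta - \beta^*\| \le r$ the symmetric difference $\{\tilde H_k^\top \beta_k > 0\} \triangle \{\tilde H_k^\top \beta_k^* > 0\}$ is contained in $\{|\tilde H_k^\top \beta_k^*| \lesssim r\|\tilde H_k\|\}$, which by the margin condition in Assumption \ref{A11}(ii) has probability $O(r)$. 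Hence the shell has $L^2$-envelope of order $r^{1/2}$, and a standard VC maximal inequality bounds $\sup_{\|\beta - \beta^*\|\le r}|(M_n - M)(\beta) - (M_n - M)(\beta^*)|$ by $O_p(n^{-1/2} r^{1/2})$.

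Next I would handle the estimated nuisance. Because $\wh\phi_{N(\beta)}$ and $\wh\phi_D$ are evaluated at $\hat\theta$ rather than $\theta^*$, I would decompose the process into the empirical process at $\theta^*$ plus a nuisance-induced drift. The Fr\'echet differentiability with bounded derivative in Assumption \ref{A11}(iii), combined with the product-rate condition in Assumption \ref{A10}, gives that this drift is $o_p(n^{-1/2})$ uniformly in $\beta$ and is therefore dominated by the VC modulus above. Finally, applying the standard peeling argument on dyadic shells $S_j = \{\beta : 2^{j-1} r_n \le \|\beta - \beta^*\| \le 2^j r_n\}$: on $S_j$ the inequality $M_n(\hat\beta) - M_n(\beta^*) \ge 0$ forces $n^{-1/2}(2^j r_n)^{1/2} \gtrsim (2^j r_n)^2$. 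Summing the resulting probabilities over $j$ and requiring the total to vanish gives $r_n \gtrsim n^{-1/3}$, i.e.\ $\|\hat\beta - \beta^*\| = O_p(n^{-1/3})$.

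The main obstacle will be the empirical-process analysis with estimated nuisance: verifying uniform negligibility of the drift introduced by plugging in $\hat\theta$ and ensuring that the Donsker (or cross-fitting) structure decouples the estimation of $\hat\theta$ from the policy-indicator class, so that the VC-induced modulus $O_p(n^{-1/2} r^{1/2})$ is not inflated. Once this decoupling is in place, the curvature-versus-fluctuation balance $r^2 \asymp n^{-1/2} r^{1/2}$ mechanically yields the cube-root rate.
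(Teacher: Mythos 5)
Your proposal is correct and follows essentially the same route as the paper: consistency plus the quadratic curvature from Assumption \ref{A11}(i), the margin condition of Assumption \ref{A11}(ii) giving an $\epsilon^{1/2}$-sized $L^2$ envelope for the VC class of policy-indicator differences, a maximal inequality yielding the modulus $O_{\pr}(n^{-1/2}\epsilon^{1/2})$, negligibility of the plug-in nuisance drift via Assumptions \ref{A10} and \ref{A11}(iii), and the curvature-versus-fluctuation balance $\epsilon^2\asymp n^{-1/2}\epsilon^{1/2}$. The only cosmetic difference is that you carry out the peeling argument explicitly, whereas the paper invokes the packaged rate theorem (Kosorok, Theorem 14.4) with $\phi_n(\epsilon)=\epsilon^{1/2}+\epsilon$; the underlying argument is the same.
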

\begin{proof}[Proof of Lemma 2] 
The proof proceeds in two steps. First, we will show that $\beta^*$ is the probability limit of $\hat\beta$ by applying the argmax theorem.
	\begin{enumerate}
		\item[(i)] $V(\beta)$ is twice continuously differentiable at a neighborhood of $\beta^*$ by the assumption \ref{A9}.
		\item[(ii)] By the proof of Theorem 4, $\wh V(\beta)$ is consistent to $V(\beta)$ for any $\beta$.
		\item[(iii)] $\wh V(\hat\beta) \ge \sup_{\beta: \|\beta\|=1} \wh V(\beta)$ by definition of $\hat\beta$.
	\end{enumerate}
	As the requisite conditions are met, it follows that $\hat\beta$ converges in probability to $\beta^*$ as $n$ approaches infinity.
		
	The second stage of the proof employs Theorem 14.4 in conjunction with Lemma 9.6, Lemma 9.9, and Theorem 11.1 as presented in Kosorok (2008). The verification of the subsequent three conditions is performed:
	\begin{enumerate}
		\item[(i)] Consider a constant $\epsilon > 0$, and select $\beta$ such that the norm $\|\beta - \beta^*\| < \epsilon$. Given that $V'(\beta^*) = 0$, the second-order Taylor expansion of $V(\beta)$ in the vicinity of $\beta^*$ is given by:
		\begin{align*}
			V(\beta) - V(\beta^*) = \frac12 V''(\beta^*)\|\beta - \beta^*\|^2 + o(\|\beta - \beta^*\|^2).
		\end{align*}
	Since $\beta^*$ is a point of maximum and $V(\beta)$ is twice differentiable, $V(\beta)$ is strictly concave around $\beta^*$ without loss of generality. Thus, there exists a constant $c_1 > 0$ such that $-\frac12 V''(\beta^*) \ge c_1$. The second-order Taylor expansion of $V(\beta)$ at $\beta^*$ then yields $V(\beta) - V(\beta^*) < (-c_1 + \delta) \|\beta - \beta^*\|^2$ for some $\delta > 0$ and $\beta$ in a sufficiently small neighborhood of $\beta^*$.
		
		\item[(ii)] We want to show that for some constant $c_2 > 0$ and a function $\phi_n(\epsilon)$ such that the ratio $\phi_n(\epsilon)/\epsilon^\alpha$ does not depend on $n$ for some $\alpha < 2$, we have:
		\begin{align}
			\mathbb E^*\left[ n^{1/2} \sup_{\|\beta - \beta^*\| < \epsilon} \left| \wh V(\beta) - V(\beta) - \left\{\wh V(\beta^*) - V(\beta^*)\right\} \right| \right] \le c_2\phi_n(\epsilon),\label{eq:centeredproc}
		\end{align}
		where $\mathbb E^*(U) = \inf\{ \mathbb E(U): X\ge U \text{ is a random variable}, -\infty \le \mathbb E(U) \le \infty \text{ exists} \}$ denotes the outer expectation.
		We claim that $\phi_n(\epsilon) = \epsilon^{1/2} + \epsilon$ and $\alpha=3/2$ satisfies the condition.
		
		By Theorem 4, we have, for any $\beta$,
		\begin{gather*}
			\wh V(\beta) - V(\beta) = \pr_n \left\{ \frac{\phi_{N;\beta} - V(\beta)\phi_{D}}{D} \right\} + o_{\pr}(n^{-1/2}),
		\end{gather*}
		and that the trailing $o_\pr(n^{-1/2})$ term is bounded by a quantity proportional to $\|\hat\theta - \theta^*\|^2 + (\pr_n - \pr)\{ \dot V(\theta^*) \} \|\hat\theta-\theta^*\|$.
		It follows that
		\begin{align*}
			&\text{LHS of }(\ref{eq:centeredproc}) \\
			&= \mathbb E^*\left[ n^{1/2} \sup_{\|\beta - \beta^*\| < \epsilon} \left| 
			  \pr_n\left\{ \frac{\phi_{N; \beta} - V(\beta)\phi_D}{D} \right\} -
			  \pr_n\left\{ \frac{\phi_{N; \beta^*} - V(\beta^*)\phi_D}{D} \right\} + o_{\pr}(n^{-1/2})
			  \right| \right] \\
			&= \mathbb E^*\Biggl[ n^{1/2} \sup_{\|\beta - \beta^*\| < \epsilon} \biggl| 
			  \pr_n\left\{ \frac{\phi_{N; \beta} - V(\beta)\phi_D + V(\beta)D}{D} - V(\beta) \right\} \\
			  &\qquad\qquad\qquad\qquad\quad-
			  \pr_n\left\{ \frac{\phi_{N; \beta^*} - V(\beta^*)\phi_D + V(\beta^*)D}{D} - V(\beta^*) \right\} 
			   + o_{\pr}(n^{-1/2}) \biggr| \Biggr]  \\
			&= \underbrace{ 
			  \mathbb E^*\left[ n^{1/2} \sup_{\|\beta - \beta^*\| < \epsilon} \left| 
			  \pr_n\left\{ \frac{\pr_n\{ \phi_{N; \beta} - \phi_{N; \beta^*} \}}{D} + \{ V(\beta) - V(\beta^*) \} \right\} \right| \right] }_{=:\tau_1} \\
			  &\qquad\quad- \underbrace{
			  \mathbb E^*\left[ n^{1/2} \sup_{\|\beta - \beta^*\| < \epsilon} \left|
			  \left\{ V(\beta) - V(\beta^*) \right\} \left\{ 1 - \frac{\pr_n\{ \phi_D \}}{D} \right\}
			  \right| \right] }_{=:\tau_2} 
			  + o_{\pr}(1).
		\end{align*}
		Note that
		\begin{align*}
			&\phi_{N;\beta} - \phi_{N;\beta*} \in \mathcal F_{\beta}(\bar x, \bar a, \bar c, \bar s, \bar y)\\
			&\qquad=\Biggl\{
			  d_{\beta,\beta^*}( \phi_N^{a_1a_2} )
			:~ \|\beta-\beta^*\| < \epsilon \Biggr\}
		\end{align*}
		where
		\begin{align*}
			\phi_N^{a_1a_2}(O) 
			&:= (\psi_{\mu_2}^{a_1a_2}(O) + \psi_{m_{\mu}}^{a_1a_2}(O))p_1^0(X_1)m_{p_2}^{00}(X_1) \\
			&\;\quad+ m_{\mu_2}^{a_1a_2}(X_1) \psi_{p_1}^0(O) m_{p_2}^{00}(X_1) \\
			&\;\quad+ m_{\mu_2}^{a_1a_2}(X_1) p_1^0(X_1) (\psi_{p_2}^{00}(O) + \psi_{m_{p_2}}^{00}(O)),
		\end{align*}
		\begin{align*}
			d_{\beta,\beta^*}(Z^{a_1a_2}) 
			&:= (Z^{11} - Z^{10} - Z^{01} + Z^{00}) \left( \mathbf1\{\tilde X_1^\intercal \beta_1, \tilde X_2^\intercal \beta_2 > 0\} - \mathbf1\{\tilde X_1^\intercal \beta_1^*, \tilde X_2^\intercal \beta_2^* > 0\} \right) \\
			&\quad+ (Z^{10} - Z^{00}) \left(\mathbf1\{\tilde X_1^\intercal \beta_1 > 0\} - \mathbf1\{\tilde X_1^\intercal \beta_1^* > 0\}\right) \\
			&\quad+ (Z^{01} - Z^{00}) \left(\mathbf1\{\tilde X_2^\intercal \beta_2 > 0\} - \mathbf1\{\tilde X_2^\intercal \beta_2^* > 0\}\right)
		\end{align*} 
		for a potential outcome $Z^{a_1a_2}$, and
		\begin{align*}
			\psi_{p_1}^{0}(\bar X) 
			&:= \frac{(1-A_1)(1-C_1)}{\pk_1^{0}(X_1)} \left\{ S_1 - p_1^{0}(X_1) \right\}, \\
			\psi_{p_2}^{00}(\bar X) 
			&:= \prod_{k=1}^2\frac{(1-A_k)(1-C_k)S_{k-1}}{\pk_k^{\bar 0_k}(\bar X_k) p_{k-1}^{\bar 0_k}(\bar X_k)} \left\{ S_2 - p_2^{00}(\bar X) \right\}, \\
			\psi_{\mu_2}^{a_1a_2}(\bar X) 
			&:= \prod_{k=1}^2\frac{\mathbb 1\{A_k = a_k \}(1-C_k)S_{k-1}}{\pk_k^{\bar a_k}(\bar X_k) p_{k-1}^{\bar a_k}(\bar X_k)} \frac{S_2}{p_2^{a_1a_2}(\bar X)} \left\{ Y - \mu_2^{a_1a_2}(\bar X) \right\}, \\
			\psi_{m_{\mu}}^{a_1a_2}(\bar X) 
			&:= \frac{\mathbb 1\{A_1 = a_1 \}(1-C_1)S_1}{\pk_1^{a_1}(X_1) p_1^{a_1}(X_1)} \left\{ \mu_2^{a_1a_2}(\bar X) - m_{\mu_2}^{a_1a_2}(X_1) \right\}, \\
			\psi_{m_{p_2}}^{00}(\bar X) 
			&:= \frac{(1-A_1)(1-C_1)S_1}{\pk_1^{0}(X_1) p_1^{0}(X_1)} \left\{ p_2^{00}(\bar X) - m_{p_2}^{00}(X_1) \right\}.
		\end{align*}
		With a slight abuse of notation, define
		\begin{align*}
			d_{11}(Z) &:= Z^{11} - Z^{10} - Z^{01} + Z^{00}, \\
			d_{10}(Z) &= Z^{10} - Z^{00}, \\
			d_{01}(Z) &= Z^{01} - Z^{00}.
		\end{align*} 
		By Assumption \ref{A3}, %\ref{A6}, 
		$M = \sup_{o} | d_{11}( \phi_N(o) ) | + \sup_{o} | d_{10}( \phi_N(o) ) | + \sup_{o} | d_{01}( \phi_N(o) ) | < \infty.$
		
		For $\|\beta - \beta^*\| < \epsilon,$ there exists $0 < k_0 \le \infty$ such that $|\tilde x_1^\intercal (\beta_1 - \beta_1^*)|, |\tilde x_2^\intercal (\beta_2 - \beta_2^*)| \le k_0\epsilon.$ We can see that
		\begin{enumerate}
		\item[(a)] If $|\tilde x_1^\intercal\beta_1^*| \le k_0\epsilon$ or  $|\tilde x_2^\intercal \beta_2^*| \le k_0\epsilon$,
		\begin{align*}
			\mathbf1\{ |\tilde x_1^\intercal\beta_1^*| \text{ or } |\tilde x_2^\intercal \beta_2^*| \le k_0\epsilon \} 
			= 1 &\ge 
			|\mathbf1\{ \tilde x_1^\intercal\beta_1  > 0\} - \mathbf1\{ \tilde x_1^\intercal\beta_1^* > 0\}|,\\
			&\quad\; |\mathbf1\{ \tilde x_2^\intercal\beta_2 > 0\} - \mathbf1\{ \tilde x_2^\intercal\beta_2^* > 0\}|, \\
			&\quad\; |\mathbf1\{ \tilde x_1^\intercal\beta_1, \tilde x_1^\intercal\beta_1 > 0 \} - \mathbf1\{ \tilde x_2^\intercal\beta_2^*, \tilde x_2^\intercal\beta_2 > 0 \}|.
		\end{align*}
		
		\item[(b)] If $|\tilde x_1^\intercal\beta_1^*|, |\tilde x_2^\intercal \beta_2^*| > k_0\epsilon$,
		\begin{align*}
			\mathbf1\{ |\tilde x_1^\intercal\beta_1^*| \text{ or } |\tilde x_2^\intercal \beta_2^*| \le k_0\epsilon \} 
			= 0 &\ge 
			|\mathbf1\{ \tilde x_1^\intercal\beta_1  > 0\} - \mathbf1\{ \tilde x_1^\intercal\beta_1^* > 0\}|,\\
			&\quad\; |\mathbf1\{ \tilde x_2^\intercal\beta_2 > 0\} - \mathbf1\{ \tilde x_2^\intercal\beta_2^* > 0\}|, \\
			&\quad\; |\mathbf1\{ \tilde x_1^\intercal\beta_1, \tilde x_1^\intercal\beta_1 > 0 \} - \mathbf1\{ \tilde x_2^\intercal\beta_2^*, \tilde x_2^\intercal\beta_2 > 0 \}|.
		\end{align*}
		\end{enumerate}
		
		Thus, $F := M \cdot \mathbf1\{ |\tilde x_1^\intercal\beta_1^*| \text{ or } |\tilde x_2^\intercal \beta_2^*| \le k_0\epsilon \}$ is the envelope of $\mathcal F_{\beta}(\bar x, \bar a, \bar c, \bar s, \bar y).$
		By assumption \ref{A11}(ii), 
		\[
		\|F\|_{\mathbb P,2} = M \mathbb P\left( |\tilde x_1^\intercal\beta_1^*| \text{ or } |\tilde x_2^\intercal\beta_2^*| \le k_0\epsilon \right)^{1/2} \le M(2k_0k_1\epsilon)^{1/2} < \infty
		\]
		for some $0 < k_1 < \infty.$

		Since $\mathcal F_\beta$ is a class of linear combinations of indicator functions with dimension at most $2^3$, it is VC-subgraph by Lemma 9.6 and Lemma 9.9 in Kosorok (2008). Therefore, its modified bracketing integral $J_{[]}^*(1, \mathcal F_\beta)$ is finite.
		
		Now, let
		\begin{align*}
			\mathbb G_n \mathcal F_\beta 
			&= n^{1/2}[\pr_n\{ \mathcal F_\beta\} - \pr\{ \mathcal F_\beta \}] \\
			&= n^{1/2}\left[ \pr_n\{ \phi_{N; \beta} - \phi_{N; \beta^*} - D(V(\beta) - V(\beta^*)) \} \right]
		\end{align*}
		be the empirical process indexed by $\beta$. Applying Theorem 11.2 from \cite{kosorok2008} yields
		\begin{align*}
			\tau_1
			&= \mathbb E^* \left[ n^{1/2} \sup_{\|\beta-\beta^*\|<\epsilon} \cdot \left| \mathbb G_n \mathcal F_\beta \right| \right] / D \\
			&\le \ell \cdot J_{[]}^*(1,\mathcal F_\beta) \cdot \|F\|_{\mathbb P, 2} \\
			&\le \ell \cdot J_{[]}^*(1,\mathcal F_\beta) \cdot M (2k_0k_1\epsilon)^{1/2}
		\end{align*}
		for some constant $0 < \ell < \infty.$ Hence, 
		\begin{align}
			\tau_1 \le c_2 := \ell \cdot J_{[]}^*(1,\mathcal F_\beta) \cdot M (2k_0k_1\epsilon)^{1/2} < \infty.
			\label{eq:tau1}
		\end{align}
		
		\item[(iii)]
		Note that
		\begin{align*}
			\tau_2 
			&= \E^*\left[ n^{1/2} \sup_{\|\beta-\beta^*\| < \epsilon} \left| \left\{V(\beta)-V(\beta^*)\right\}\left\{ 1 - \frac{\E_n\{ \phi_D \}}{D} \right\} \right| \right] \\
			&\le \sup_{\|\beta - \beta^*\| < \epsilon} \Bigl|
			   V(\beta) - V(\beta^*) \Bigr| \cdot \mathbb E^*\left[ \Bigl| n^{1/2}\left\{ 1 - \frac{\E_n\{ \phi_D \}}{D} \right\} \Bigr|
			   \right] 
			  = \mathcal O(\epsilon)
		\end{align*}
		where the last inequality follows from
		\begin{align}
			\Bigl| V(\beta) - V(\beta^*) \Bigr|
			= \frac{| V(\beta) - V(\beta^*) |}{\|\beta - \beta^*\|} \cdot \|\beta - \beta^*\| 
			\le \sup_{\|\beta - \beta^*\| < \epsilon}|V'(\beta)| \cdot \epsilon  % this is by mean value theorem
			\label{eq:tau2}
		\end{align}
		and $\left|\mathbb E^*\left[\left|n^{1/2} \left\{1 - \frac{\E_n\{\phi_D\}}{D}\right\}\right|\right]\right| \le \sqrt{\text{Var}(\phi_D)}/D$.
		
		\item[(iv)]
		Using equations (\ref{eq:tau1}) and (\ref{eq:tau2}), the centered process is bounded as follows:
		\begin{align}
			\text{LHS of }(\ref{eq:centeredproc}) \le c_1\epsilon^{1/2} + o_P(1) + \mathcal O(\epsilon) \le c_3 \phi_n(\epsilon)
			 \label{eq:phi_n}
		\end{align}
		It can be verified that the mapping $\epsilon \mapsto \phi_n(\epsilon) / \epsilon^{3/2}$ is a decreasing function and is independent of $n$. Furthermore, by the definition of $\hat\beta$ as the maximizer of $\wh V(\beta)$, we have $\sup_\beta (\wh V(\beta) - \wh V(\hat \beta)) = 0 \le \mathcal O_{\pr}(n^{-2/3})$.
	\end{enumerate}
	
	As it follows that
	\[ 
	n^{2/3} \phi_n(n^{-1/3}) = n^{2/3} (n^{-1/6} + n^{-1/3}) = n^{1/2} + n^{1/3} \le 2 n^{1/2}, \quad \forall n\ge0,
	\]	we conclude that $n^{1/3}\|\beta-\beta^*\| = \mathcal O_{\pr}(1).$
\end{proof}

Now, using these established results, we will prove Theorem 4.

\begin{proof}[Proof of Theorem 4]
From the proof of Theorem 4 and the form in the last term of (\ref{eq:clt-form}), we have
\[
n^{1/2} \left( \wh V(\beta^*) - V(\beta^*) \right) \overset{d}\to \mathcal N\left( 0, \mathbb E[ \psi_{V(\pi)}^2 ] \right).
\]
Notice that
\[
n^{1/2} \left( \wh V(\hat\beta) - V(\beta^*) \right) = n^{1/2} \left( \wh V(\hat\beta) - \wh V(\beta^*) \right) + n^{1/2} \left( \wh V(\beta^*) - V(\beta^*) \right).
\]
Thus, it suffices to show that 
\[
n^{1/2} \left( \wh V(\hat\beta) - \wh V(\beta^*) \right) = n^{1/2} \left( \wh V(\hat\beta) - \wh V(\beta^*) - \{V(\hat\beta) - V(\beta^*)\} \right) + n^{1/2} \left( V(\hat\beta) - V(\beta^*) \right) = o_{\pr}(1).
\]

By (\ref{eq:phi_n}) and Lemma \ref{lemma:2}, let $\epsilon = c_4 n^{-1/3}$ for some $0 < c_4 < \infty$. Then, for sufficiently large $n$, the first term is bounded by:
\begin{align*}
	&n^{1/2} \left( \wh V(\hat\beta) - \wh V(\beta^*) - \{V(\hat\beta) - V(\beta^*)\} \right) \\
	&\le \text{LHS of (\ref{eq:centeredproc})} = \mathcal O_{\pr}(n^{-1/6}) = o_{\pr}(1).
\end{align*}
By the Taylor expansion of $V(\hat\beta)$ around $\beta^*$, the second term becomes:
\begin{align*}
	n^{1/2} \left( V(\hat\beta) - V(\beta^*) \right)
	&= n^{1/2} \left[ \frac12 V''(\beta^*)\|\hat\beta - \beta^*\|^2 + o_{\pr}\left( \|\hat\beta - \beta^*\|^2 \right) \right]  
	  \tag{$V'(\beta^*) = 0$} \\
	&= n^{1/2} \left[ \frac12 V''(\beta^*) \mathcal O_{\pr}(n^{-2/3}) + o_{\pr}( n^{-2/3} ) \right] 
	  \tag{Lemma \ref{lemma:2}} \\
	&= \frac12 V''(\beta^*) \mathcal O_{\pr}(n^{-1/6}) = o_{\pr}(1).
\end{align*}
Hence, the desired result follows.

\end{proof}

\subsection{Technical details}\label{App:A2}
\subsubsection{Cross-fitting algorithm}
As an alternative to employing Donsker class assumption, sample-splitting or cross-fitting can be used to simplify proofs and ensure theoretical properties of the proposed estimator. We provide a cross-fitting algorithm for computing the MR estimator.
\begin{algorithm} \small \caption{Compute $\wh V_\text{MR}(\pi)$ via cross-fitting.} \label{alg:MR-samplesplit}
\begin{algorithmic}[1]
	\State $J$: Pre-specified number of folds.
	\State Split the data $\{O_i: i=1,\cdots,n\}$ into $J$ disjoint folds $F_j$, $j=1,\cdots,J$.
	\For {$j$ in $1 \dots J$}
	  \State $n_j \gets |F_j|$.
	  \State Fit nuisance models $\hat\theta_{-j}$ only using folds $\{ F_i: i \ne j\}$.
	  \State Compute $\wh V_{\text{MR},-j}(\pi)$ based on $\hat\theta_{-j}$.
	\EndFor
	\State $\wh V_\text{MR}(\pi) \gets \sum_{j=1}^J ({n_j}/{n}) \wh V_{\text{MR},-j}(\pi).$
\end{algorithmic}
\end{algorithm}

\subsubsection{Simulation study} 
\label{A:sim}
\paragraph{Data generation} We generated $X_1$ from a continuous uniform distribution over the interval $[-0.3, 0.7]$. $A_k$, $C_k^{\bar{a}_k}$, and $S_k^{\bar{a}_k}$ were generated from logistic models
\begin{gather}
	\text{logit}(e_1^1(x_1))=0.3 + 0.2x_1, \label{eq:dgp1} \\
	\text{logit}(c_1^{a_1}(x_1))=x_1 + a_1 + \eta_1, \quad \eta_1=2, \\
	\text{logit}(p_1^{a_1}(x_1))=5x_1 + 3a_1 + 0.5a_1x_1.
\end{gather}
The intermediate variable $X_2^{a_1}$ were generated via a normal distribution with mean $\mu_1^{a_1}(x_1) = 0.2 + 0.3x_1 + 1.5a_1 + 0.75a_1x_1$ and standard deviation $1.5$. 
Potential outcomes of second stage indicators are again generated by logistic models
\begin{gather}
	\text{logit}(e_2^{a_11}(\bar x))=0.7 + 0.2x_1 - 0.2x_2 -0.1 x_2^2, \\
	\text{logit}(c_2^{a_1a_2}(\bar x))=-3 + x_1 + x_2 + 0.5 a_2 + a_2x_2 + \eta_2, \quad \eta_2=3.5, \\
	\text{logit}(p_2^{a_1a_2}(\bar x))=0.8 - 1.42 x_1 + 0.8 a_1 - 0.65 a_2. \label{eq:dgp2}
\end{gather}
Finally, the outcome $Y^{a_1a_2}$ is generated from a normal distribution with mean $\mu_2^{a_1a_2}(\bar{x}) = 2.58 - 1.04x_1 + 1.21a_1 - 0.92a_1x_1 + 2.27x_2 + a_2(1.18 + 3.29a_1 + 3.95x_2)$ and standard deviation $1.5$.
The process resulted in approximately 4\%, 8\% of censoring rates and 84\%, 65\% of survival rates in the first and second stage, respectively.

To demonstrate the multiple robustness, we conducted experiments across five model specification scenarios (M1-M5) expected to yield consistency, and one scenario (M6) expected to fail. Nuisance models in each scenario are modeled to achieve the following description.
\begin{align*}
	\text{M1}&:~ \textrm{All nuisance models are correctly specified.}\\
	\text{M2}&:~ p_2, m_{p_2} \textrm{ are incorrectly specified.}\\
	\text{M3}&:~ \pk_2, m_{p_2} \textrm{ are incorrectly specified.}\\
	\text{M4}&:~ \mu_2^{a_1a_2} \textrm{ is incorrectly specified.}\\
	\text{M5}&:~ \mu_2^{a_1a_2}, m_{\mu_2} \textrm{ are incorrectly specified.}\\
	\text{M6}&:~ \pk_1, \pk_2, p_1 \textrm{ are incorrectly specified.}
\end{align*}
Specifically in models M2-M6, we deliberately introduced misspecification by removing terms from the correct models \eqref{eq:dgp1}-\eqref{eq:dgp2} or by using a non-linearly transformed variables. 
For the conditional outcome models $m_{p_2}$ and $m_{\mu_2}$, we employed generalized additive models when correct specification was intended, and ordinary least squares models without intercept for the misspecified cases.

\paragraph{Percentage of correct decision in always-survivors (PCD-AS)} \label{A:pcdas}
Lemma \ref{lemma:1} provides a foundation to compute the PCD-AS. With $h(X_1;\hat\pi, \pi^*)=\E[ \mathbf 1\{\hat \pi(\bar X) = \pi^*(\bar X) \mid X_1, A_1=\pi_1(X_1), C_1=0, S_1=1\} ]$ which is $L^1(\mathcal P)$, we have, by Lemma \ref{lemma:1},
\begin{align*}
	\text{PCD}_\text{AS}(\hat\pi; \pi)
	= \pr\left( \hat\pi(\bar X) = \pi(\bar X) \mid U=1111 \right)
	= \E\left[ \frac{ p_1^0(X_1)m_{p_2}^{00}(X_1) }{\E\left[p_1^0(X_1)m_{p_2}^{00}(X_1)\right]} h(X_1; \hat \pi, \pi^*) \right].
\end{align*}
We used empirical version of this formula with true nuisance models and an independently generated large sample of size 100,000 to compute PCD-AS.

\paragraph{Additional experiment}
We conducted additional experiment using data generated from different forms of correct nuisance models and values of $\eta_1,\eta_2$.
$X_1$ were generated from the same continuous uniform distribution over $[-0.3, 0.7]$. $A_k$, $C_k^{\bar{a}_k}$, and $S_k^{\bar{a}_k}$ were generated from
\begin{gather}
	\text{logit}(e_1^1(x_1))=0.5 + 0.5x_1^2, \label{eq:dgp3} \\
	\text{logit}(c_1^{a_1}(x_1))=x_1^2 + \eta_1, \quad \eta_1=2.5, \\
	\text{logit}(p_1^{a_1}(x_1))=3x_1^2 + 5a_1 - 0.5a_1x_1.
\end{gather}
The intermediate variable $X_2^{a_1}$ were generated via a normal distribution with mean $\mu_1^{a_1}(x_1) = 0.5 - 0.3x_{1}^2 + a_1 - 0.5a_1x_{1}$ and standard deviation $1.5$. 
Potential outcomes of second stage indicators are again generated by logistic models
\begin{gather}
	\text{logit}(e_2^{a_11}(\bar x))=0.7 - 0.5x_{1}^2 + 0.5x_{2} -0.1 x_2^2, \\
	\text{logit}(c_2^{a_1a_2}(\bar x))=-3 + x_1 + x_2 + 0.5 a_2 + a_2x_2 + \eta_2, \quad \eta_2=4, \\
	\text{logit}(p_2^{a_1a_2}(\bar x))=0.5 + 2 x_1 + x_1x_2 -0.8 a_1 + 0.65 a_2. \label{eq:dgp4}
\end{gather}
The outcome $Y^{a_1a_2}$ is generated from a normal distribution with mean $\mu_2^{a_1a_2}(\bar{x}) = -3 + X_1 + 1.5A_1 -0.5 A_1X_1 + \exp(X_2)/100 + A_2(1.5 + A_1 -0.5 X_2)$ and standard deviation $1.5$.

From this setting, the first and second stages exhibited censoring rates of approximately 7\% and 13\%, respectively, with corresponding survival rates of 85\% and 70\%.
The reduced death rates compared to the previous setting is expected to favor the standard AIPW estimator.

Results from this alternative setting, as presented in Figures \ref{fig:mr2} and \ref{fig:opl2}, continue to demonstrate multiple robustness and consistency in off-policy learning of the always-survivor-optimal value.
The 95\% confidence intervals derived from the EIF yielded coverage rates of 95.8\% ($n = 2000$) and 94\% ($n = 5000$), closely aligning with the nominal value.
PCD-AS of the MR estimator converged towards one as the training set size increased, with average values of $0.971$ ($n = 2000$) and $0.982$ ($n = 5000$), and corresponding standard deviations of $0.027$ and $0.016$, {which is closer to one with less variability than the PCD-AS of the AIPW estimator, averaging $0.964$ and $0.973$ with standard deviations of $0.028$ and $0.017$, respectively.}

\begin{table}[]
\caption{Simulated experiments: average single iteration run time (seconds).}
    \centering
\begin{tabular}{@{}lrr@{}}
\toprule
             & Off-policy evaluation & Off-policy learning \\ \midrule
Experiment 1 & 66                    & 162                 \\
Experiment 2 & 66                    & 268                 \\ \bottomrule
\end{tabular} \label{table:runtime1}
\end{table}

\begin{table}[]
\caption{MIMIC-III data application: average single iteration run time (seconds).}
    \centering
\begin{tabular}{@{}ccc@{}}
\toprule
\begin{tabular}[c]{@{}c@{}}Principal value search \eqref{Vid}\end{tabular} & \begin{tabular}[c]{@{}c@{}}Multiply robust \\ principal value search \eqref{Vmr}  \end{tabular} & \begin{tabular}[c]{@{}c@{}}Doubly robust \\ value search \eqref{Vaipw}
\end{tabular} \\ \midrule
\multicolumn{1}{r}{3.84}                                                             & \multicolumn{1}{r}{2302}                                       & \multicolumn{1}{r}{2.67}                                                     \\ \bottomrule
\end{tabular} \label{table:runtime2}
\end{table}

\subsubsection{Preprocessing MIMIC-III data}
%\paragraph{}
Our preprocessing steps were initiated based on previously established sepsis data for reinforcement learning from \cite{komorowski2018artificial}.
For each patient, we considered a set of baseline covariates, $X_1$ (age, weight, temperature, glucose, blood urea nitrogen, creatinine, WBC, SOFA score), and a reduced set of covariates, $X_2$ (weight, temperature, glucose, blood urea nitrogen, creatinine, white blood cell count, SOFA score), to inform treatment decisions. The primary outcome $Y$ was the SOFA score at the final time point, where lower scores indicate better patient status. 
For $k=1,2$, the intervention $A_k$ was defined as the application of mechanical ventilation.  Survival status $S_k$ was determined by comparing chart time to death time, with $S_k=0$ indicating death. Censoring $C_k$ was defined as $C_k=1$ if the chart time was greater than or equal to discharge time, or if survivor information was unobserved.  

We focused on the last 48 hours of patient data, dividing this period into three time points: baseline (blocs 6-12), intermediate (blocs 13-19), and final (bloc 20). This structure was designed to ensure all patients were confirmed to have sepsis diagnosis at baseline. 
For each time point $k$, the covariate vector $X_k$ was extracted from the initial block, while binary indicators $A_k$, $C_k$, and $(1-S_k)$ were set to 1 if the corresponding event occurred at least once within that time point. We implemented outlier removal based on clinically plausible ranges, specifically for temperature (25-60 degrees celsius), white blood cell count ($\le 400$), creatinine ($> 0$), and weight ($> 0$).
Finally, we restricted our analysis to patients with a baseline SOFA score greater than 8 to ensure a focus on individuals with a clinically severe condition. This preprocessing resulted in a final dataset of 1821 patients. The resulting dataset exhibited censoring rates of 51.6\% and survival rates of 98.9\%.

\tred{
\subsubsection{Justification of the identification assumptions in the context of the MIMIC-III}
}
MIMIC-III is a standard dataset for reinforcement learning applications, particularly for Markov Decision Process-type problems after suitable preprocessing (Komorowski et al., 2018). Our reliance on Komorowski et al.'s (2018) preprocessing justifies our assumption of sequential randomization. Additionally, we included all available covariates believed to be pertinent to sepsis patient conditions, aiming to control for confounding as thoroughly as possible.

For the selected subpopulation of sepsis patients, we consider an individual's condition to be largely uninfluenced by others. While patients received vasopressin concurrently with mechanical ventilation, $91$\% of these patients were treated with less than $0.5$ mcg/kg/min of vasopressin, and $77$\% received less than $0.2$ mcg/kg/min. The correlations between the maximum vasopressin dose and mechanical ventilation were $0.24$ ($k=1$) and $0.14$ ($k=2$). Moreover, the correlation between the maximum vasopressin dose and the outcome of interest was $-0.069$. Based on these observations, we assert that causal consistency is a reasonable assumption.

We confirmed the probabilistic monotonicity of the censoring indicator, demonstrating that $\pr(C_2=1|A_1=a_1,A_2=a_2) \ge \pr(C_1=1|A_1=a_1)$. While the low mortality rate prevented empirical verification of survival indicator monotonicity, mechanical ventilation is widely recognized as a critical intervention designed to prolong survival in acute settings, frequently referred to as a "cornerstone of patient management (Fan et al., 2017)." This inherent purpose provides a strong basis for assuming monotonicity of the treatment effect.

Our analysis confirmed significant overlap in covariate distributions across the different treatment groups, thereby supporting the positivity assumption for the propensity scores. Additionally, employing a flexible classifier, such as a random forest, allowed us to estimate response and survival probabilities that were consistently bounded away from zero. These empirical findings indicate that positivity is upheld, at least probabilistically.

Given the unknown true models in this real-world data, evaluating model fit was limited. We chose flexible models (random forest and generalized additive model) to reduce the risk of misspecification and underfitting. A comparison between our MR estimator and the principal Q-learning estimator (the plug-in version of Equation 1) on both training and testing sets consistently resulted in close values. This is implied when the outcome regression models are correctly specified, the scenario which guarantees the consistency of the proposed estimator.

\tred{
\subsubsection{Sensitivity analysis of the MIMIC-III application to violations of principal ignorability}
}
Assumption \ref{A4} is a strong yet untestable assumption, necessitating an analysis to evaluate the sensitivity of our results to its violation. We propose and present the results of this sensitivity analysis in this section.

We define sensitivity parameters
\[
\rho_{u}^{\bar a}(X_1) := \frac{\mathbb{E}[Y^{\bar a} \mathbf{1}\{\pi(\bar X)=\bar a\}|X_1,U=u]}{\mathbb{E}[Y^{\bar a}\mathbf{1}\{\pi(\bar X)=\bar a\}|X_1,U=1111]}, 
\quad \bar a\in \{0,1\}^2, u\in\{0,1\}^4
\]
following a tilting model, and $\lambda(X_1) = \mathbb{P}(U=0011|X_1,A_1=1,C_1=0,S_1=1) - \mathbb{P}(U=0101|X_1,A_1=1,C_1=0,S_1=1)$ that additionally controls strata assignment probability.

\begin{theorem}
Under Assumptions \ref{A1}, \ref{A2}, \ref{A:S2}, \ref{A3}-\ref{A5}, the always-survivor value function can be written as
\begin{align}
	V(\pi) = \frac{\E[ p_1^0(X_1)m_{p_2}^{00}(X_1) m_{\nu}^\pi(X_1) ]}{\E[ p_1^0(X_1)m_{p_2}^{00}(X_1) ]},  \label{Vsens}	
\end{align}
where $m_{\nu}^{a_1a_2}(X_1) = \E[\nu^{a_1a_2}(X_1) \mathbf{1}\{A_2\textrm{ = }a_2\} | X_1,A_1=a_1,C_1=0,S_1=1]$, ${\nu_2}^{a_1a_2}(X_1) = {\mu_2}^{a_1a_2}(X_1) / \omega_{a_1a_2}(X_1)$, and
\begin{align*}
	\omega_{01}(X_1) &= \frac{ m_{p_2}^{00}(X_1) }{ m_{p_2}^{01}(X_1) } + \rho_{0101}^{01}(X_1) \left\{ 1 - \frac{ m_{p_2}^{10}(X_1) }{ m_{p_2}^{01}(X_1) }\right\} + \rho_{0111}^{01}(X_1) \frac{m_{p_2}^{10}(X_1) - m_{p_2}^{00}(X_1)}{m_{p_2}^{01}(X_1)}, \\
	\omega_{10}(X_1) &= \frac{m_{p_2}^{00}(X_1)}{m_{p_2}^{10}(X_1)} + \rho_{011}^{10}(X_1) \left\{ 1 - \frac{m_{p_2}^{01}(X_1)}{m_{p_2}^{10}(X_1)}\right\} + \rho_{0111}^{10}(X_1) \frac{m_{p_2}^{01}(X_1) - m_{p_2}^{00}(X_1)}{m_{p_2}^{10}(X_1)}, \\
	\omega_{11}(X_1) &= \frac{m_{p_2}^{00}(X_1)}{m_{p_2}^{11}(X_1)} + \rho_{0101}^{11}(X_1) \frac{m_{p_2}^{01}(X_1) - m_{p_2}^{10}(X_1)}{m_{p_2}^{11}(X_1)} \\
	    &\quad+ \rho_{0011}^{11}(X_1) \frac{m_{p_2}^{10}(X_1) - m_{p_2}^{01}(X_1)}{m_{p_2}^{11}(X_1)} + \rho_{0001}^{11}(X_1) \left\{1 - \frac{m_{p_2}^{01}(X_1) + \lambda(X_1)}{m_{p_2}^{11}(X_1)} \right\}.
\end{align*}
\end{theorem}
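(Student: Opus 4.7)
The backbone of the proof is to reuse Lemma \ref{lemma:1}, which only invokes Assumptions \ref{A1}, \ref{A2}, \ref{A3}, \ref{A5} and not principal ignorability, to write $V(\pi) = \E[p_1^0(X_1) m_{p_2}^{00}(X_1) \E(Y^\pi \mid X_1, U=1111)] / \E[p_1^0(X_1) m_{p_2}^{00}(X_1)]$. The remaining task is therefore to identify $\E(Y^\pi \mid X_1, U=1111)$ without Assumption \ref{A4}. My plan is to split by terminal treatment assignment, $\E(Y^\pi \mid X_1, U=1111) = \sum_{\bar a \in \{0,1\}^2} \E(Y^{\bar a} \mathbf 1\{\pi(\bar X)=\bar a\} \mid X_1, U=1111)$, and handle each of the four summands separately; the target $m_\nu^\pi$ signals that each summand should collapse to a $\mu_2^{\bar a}/\omega_{\bar a}$-weighted integral against the observed conditional distribution of $(X_2,A_2)$ given $(X_1, A_1=a_1, C_1=0, S_1=1)$. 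The case $\bar a=(0,0)$ is immediate because Assumption \ref{A2} gives $\{S_2^{00}=1\}=\{U=1111\}$, so $\omega_{00}\equiv 1$.

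For each non-trivial $\bar a \in \{(0,1),(1,0),(1,1)\}$, I would decompose the observed outcomes under $(\bar A,\bar C,S_2)=(\bar a,\bar 0,1)$ as a mixture across the principal strata compatible with $S_2^{\bar a}=1$: namely $\{0101,0111,1111\}$, $\{0011,0111,1111\}$, and $\{0001,0011,0101,0111,1111\}$ respectively, under monotonicity. Using the tilting model with $\rho_{1111}^{\bar a}\equiv 1$, I would write
\[
\E(Y^{\bar a} \mathbf 1\{\pi(\bar X)=\bar a\}\mid X_1, S_2^{\bar a}=1)\, \pr(S_2^{\bar a}=1\mid X_1) = \E(Y^{\bar a}\mathbf 1\{\pi(\bar X)=\bar a\}\mid X_1, U=1111)\, \sum_u \rho_u^{\bar a}(X_1)\, \pr(U=u\mid X_1),
\]
and solve for the always-survivor term. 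The observed left-hand side reduces, via Assumptions \ref{A3} and \ref{A5}, to $\mu_2^{\bar a}(\bar X)\mathbf 1\{\pi(\bar X)=\bar a\}$ integrated against the observed conditional distribution, with denominator $\pr(S_2^{\bar a}=1\mid X_1) = p_1^{a_1}(X_1) m_{p_2}^{\bar a}(X_1)$ obtained by an argument parallel to Lemma \ref{lemma:1}.

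To close the identification, the mixing probabilities $\pr(U=u\mid X_1)$ must be expressed through the four $m_{p_2}^{\bar a}$'s. The marginal survival probabilities $\pr(S_2^{\bar a}=1\mid X_1)$ give four linear constraints on the stratum probabilities, together with the usual sum-to-one. For $\bar a\in\{(0,1),(1,0)\}$ the relevant three-stratum subsystems are fully determined, and the $p_1^{a_1}$ factors cancel across the numerator--denominator of each term, producing the ratios that define $\omega_{01}$ and $\omega_{10}$. The main technical obstacle is the $\bar a=(1,1)$ case: with five compatible strata but only four identifiable marginals, one direction of the simplex is underdetermined, and this is precisely where $\lambda(X_1)$ enters as an auxiliary sensitivity parameter pinning down $\pr(U=0011\mid X_1,\cdot) - \pr(U=0101\mid X_1,\cdot)$. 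Substituting the identified probabilities back, defining $\nu^{\bar a}=\mu_2^{\bar a}/\omega_{\bar a}$, and summing over $\bar a$ recovers $m_\nu^\pi(X_1)$ and (\ref{Vsens}); careful bookkeeping of the signed differences $m_{p_2}^{01}-m_{p_2}^{10}$ and $m_{p_2}^{01}+\lambda$ will be required to match the precise form of $\omega_{11}$.
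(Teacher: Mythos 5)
Your skeleton---reduce via Lemma \ref{lemma:1} to identifying $\E(Y^\pi\mid X_1,U=1111)$, split by treatment arm, mix over the monotonicity-compatible strata, impose the tilting ratios $\rho_u^{\bar a}$ with $\rho_{1111}^{\bar a}\equiv 1$, and solve for the always-survivor term, with $\lambda$ absorbing the unidentified direction---is exactly the paper's route, and your treatment of $\bar a=(0,0)$ is right. But two steps of your execution would not deliver the stated $\omega$'s. First, the claimed cancellation of the $p_1^{a_1}$ factors fails: with your normalization the always-survivor weight for arm $\bar a$ is $\pr(U=1111\mid X_1)/\pr(S_2^{\bar a}=1\mid X_1)=p_1^0(X_1)m_{p_2}^{00}(X_1)/\{p_1^{a_1}(X_1)m_{p_2}^{\bar a}(X_1)\}$, so a residual $p_1^0/p_1^1$ survives whenever $a_1=1$, and the remaining stratum probabilities $\pr(U=u\mid X_1)$ are signed combinations of $p_1^0 m_{p_2}^{0a_2}$ and $p_1^1 m_{p_2}^{1a_2}$, not of the $m_{p_2}$'s alone. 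The paper avoids this by evaluating every mixing weight within the first-stage survivor population, i.e.\ $\pr(U=u\mid X_1,A_1=a_1,C_1=0,\bar S^{\bar a}={\bf 1}_2)$, which is what produces the pure $m_{p_2}$-ratio form of $\omega_{01},\omega_{10},\omega_{11}$; to match the statement you must carry out the mixture at that conditional level (or supply an explicit argument for discarding the $p_1$ factors, which your sketch does not contain).

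Second, your claim that the three-stratum subsystems for $\bar a=(0,1)$ and $(1,0)$ are ``fully determined'' by the marginals is not correct: the four marginals pin down $\pr(U=1111\mid X_1)$ and the sums $\pr(U=0101)+\pr(U=0111)$ and $\pr(U=0011)+\pr(U=0111)$, but splitting either sum requires $\pr(U=0011)$ (equivalently, the same unidentified direction that $\lambda$ parameterizes in the $(1,1)$ arm). That is precisely why $\omega_{01}$ and $\omega_{10}$ in the statement carry two sensitivity ratios each, with weights such as $1-m_{p_2}^{10}/m_{p_2}^{01}$ and $(m_{p_2}^{10}-m_{p_2}^{00})/m_{p_2}^{01}$ reflecting a particular resolution of that indeterminacy rather than an identified subsystem. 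Relatedly, your five-stratum enumeration for $\bar a=(1,1)$ does not match the paper's $\mathcal U_{11}=\{0001,0011,0101,1111\}$, and the stated $\omega_{11}$ contains no $\rho_{0111}^{11}$ term, so your bookkeeping would also produce an extra term there. In short, the deferred ``careful bookkeeping'' is where the substance lies: as planned, your weights would retain $p_1$ factors, an unidentified $\pr(U=0011)$, and an additional stratum term, and hence would not reduce to the displayed $\omega_{01},\omega_{10},\omega_{11}$.
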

\begin{proof}
Let $\mathcal{U}_{11} = \{0001,0011,0101,1111\}$
\begin{align*}
	m_{\mu_2}^{11}(X_1) 
	&= \E[\mu_2^{11}(\bar X) \mathbf{1}\{\pi(\bar X)=1\} | X_1, A_1=1, C_1=0, S_1=1] \\
	&= \E[ \E\{Y^{11}\mathbf{1}\{\pi(\bar X^1)=1\}|\bar X^1, \bar A={\bf 1}_2, \bar C^{11}={\bf 0}_2, \bar S^{11}={\bf 1}_2\} | X_1, A_1=1, C_1^1=0, \bar S^{11}={\bf 1}_2] \\
	&= \E[ \E\{Y^{11}\mathbf{1}\{\pi(\bar X^1)=1\}|\bar X^1, A_1=1, C^{1}=0, \bar S^{11}={\bf 1}_2\} | X_1, A_1=1, C_1^1=0, \bar S^{11}={\bf 1}_2] \\
	&= \E[ Y^{11}\mathbf{1}\{\pi(\bar X^1)=1\}| X_1, A_1=1, C_1^1=0, \bar S^{11}={\bf 1}_2] \\
	&= \sum_{u\in \mathcal{U}_{11}} \E[ Y^{11}\mathbf{1}\{\pi(\bar X^1)=1\}| X_1, U=u] \pr(U=u | X_1,A_1=1,C_1=0,\bar S^{11}={\bf 1}_2)
\end{align*}
The second equality follows from Assumptions \ref{A1} and \ref{A:S2}. The third and fourth equality follows from Assumptions \ref{A3} and \ref{A5}. We show that the probability in the last equation can be written as nuisance models and sensitivity parameters. First, we have
\begin{align*}
	&\pr(U=0101|X_1,A_1=1,C_1=0,\bar S^{11}={\bf 1}_2) \\
	&= \pr(S_2^{01}=1, S_2^{10}=0|X_1,A_1=1,C_1=0,\bar S^{11}={\bf 1}_2) \\
	&= \pr(S_2^{01}=1|X_1,A_1=1,C_1=0,\bar S^{11}={\bf 1}_2) - \pr(S_2^{10}=1|X_1,A_1=1,C_1=0,\bar S^{11}={\bf 1}_2) \\
	&= \frac{\pr(S_2^{01}=1|X_1,A_1=0,C_1=0,S_1=1) - \pr(S_2^{10}=1|X_1,A_1=1,C_1=0,S_1=1)} {\pr(S_2^{11}=1 | X_1,A_1=1,C_1=0,S_1=1)} \\
	&= \{m_{p_2}^{01}(X_1) - m_{p_2}^{10}(X_1)\} / m_{p_2}^{11}(X_1)
\end{align*}
The second and the third inequality is from Assumption \ref{A2}. Similarly, 
\begin{align*}
\pr(U=0011|X_1,A_1=1,C_1=0,\bar S^{11}={\bf 1}_2) 
= \{m_{p_2}^{10}(X_1) - m_{p_2}^{01}(X_1)\} / m_{p_2}^{11}(X_1), \\
\pr(U=0001|X_1,A_1=1,C_1=0,\bar S^{11}={\bf 1}_2) 
= 1 - \{m_{p_2}^{01}(X_1) + \lambda(X_1)\} / m_{p_2}^{11}(X_1).
\end{align*}

Thus, 
\begin{align*}
m_{\mu_2}^{11}(X_1) &= \E[ Y^{11}\mathbf{1}\{\pi(\bar X^1)=1\}| X_1, U=1111] \times \\
&\quad \bigg\{ 
  \frac{m_{p_2}^{00}(X_1)}{m_{p_2}^{11}(X_1)} 
  + \rho_{0101}^{11}(X_1)\frac{m_{p_2}^{01}(X_1) - m_{p_2}^{10}(X_1)}{m_{p_2}^{11}(X_1)} 
  + \rho_{0011}^{11}(X_1) \frac{m_{p_2}^{10}(X_1) - m_{p_2}^{01}(X_1)}{m_{p_2}^{11}(X_1)} \\
 &\;\;\quad+ \rho_{0001}^{11}(X_1)\Big(1 - \frac{m_{p_2}^{01}(X_1) + \lambda(X_1)}{m_{p_2}^{11}(X_1)} \Big)
    \bigg\} \\
 &= \omega_{11}(X_1) \E[ Y^{11}\mathbf{1}\{\pi(\bar X^1)=1\}| X_1, U=1111]
\end{align*}
By the similar process, we can show that
\begin{align*}
	\omega_{01}(X_1) \E[ Y^{01}\mathbf{1}\{\pi(\bar X^0)=1\}| X_1, U=1111] = m_{\mu_2}^{01}(X_1), \\
	\omega_{10}(X_1) \E[ Y^{10}\mathbf{1}\{\pi(\bar X^1)=0\}| X_1, U=1111] = m_{\mu_2}^{10}(X_1).
\end{align*}
Hence, the desired result follows.
\end{proof}

To simplify our analysis, we treated the sensitivity parameters as unknown constants, setting $\rho_{u}^{\bar a}(X_1) = \rho$, $\lambda(X_1) = \lambda$. We varied $\rho$ from $0.8$ to $1.25$ and $\lambda$ from $-0.2$ to $0$. 
The negative range for $\lambda$ is motivated by recent evidence indicating that earlier mechanical ventilation may lead to better survival outcomes than later intervention \citep{kim2024impact}.

We evaluated the policy using the plug-in version of Equation \eqref{Vsens} under varying values of sensitivity parameters. The maximum observed relative error of our estimates, when compared to the previously proposed estimates, was $0.12$. This finding suggests that the MR estimator is not sensitive to the violation of principal ignorability.

\subsubsection{Computing resources}
\label{A:compres}

The off-policy learning simulation ran on an internal cluster, with each iteration on a single core, 8 GB RAM instance. 
Other experiments and the MIMIC-III application used a CPU machine with 16 GB RAM. 
Average single iteration run times are reported in Table \ref{table:runtime1} and Table \ref{table:runtime2}. 

For the first experiment, the total run times were 55 minutes for off-policy evaluation and 135 minutes for off-policy learning.
For the additional experiment, off-policy evaluation took a similar amount of time, while off-policy learning required 223.3 minutes.
For the MIMIC-III application, a total of 50 iterations of off-policy learning required 3.2 minutes for the principal DTR \eqref{Vid}, 1918.3 minutes for the multiply robust principal DTR \eqref{Vmr}, and 2.2 minutes for the doubly robust DTR \eqref{Vaipw}.

Full research project, including preliminary experiments, required more compute than the experiments reported.

\begin{minipage}{\textwidth}

\

%\begin{figure}[hbt]
\centering
  \includegraphics{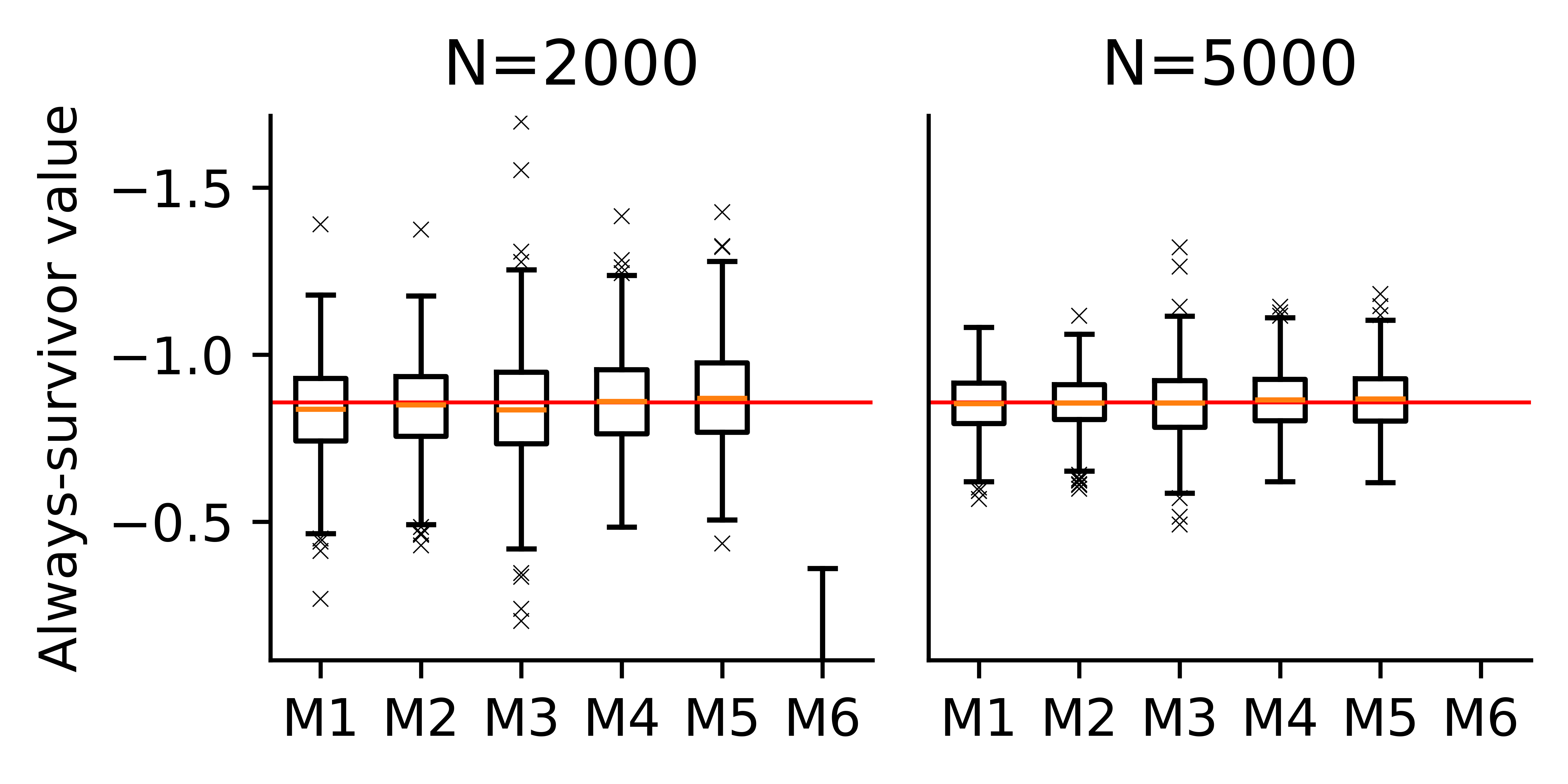}
  \captionof{figure}{Value estimates under a fixed policy across scenarios M1-M6. The red horizontal line is drawn at the true always-survivor value.}
  \label{fig:mr2}
%\end{figure}
\

\
\end{minipage}

\begin{minipage}{\textwidth}
\

%\begin{figure}[hbt]
\centering
  \includegraphics{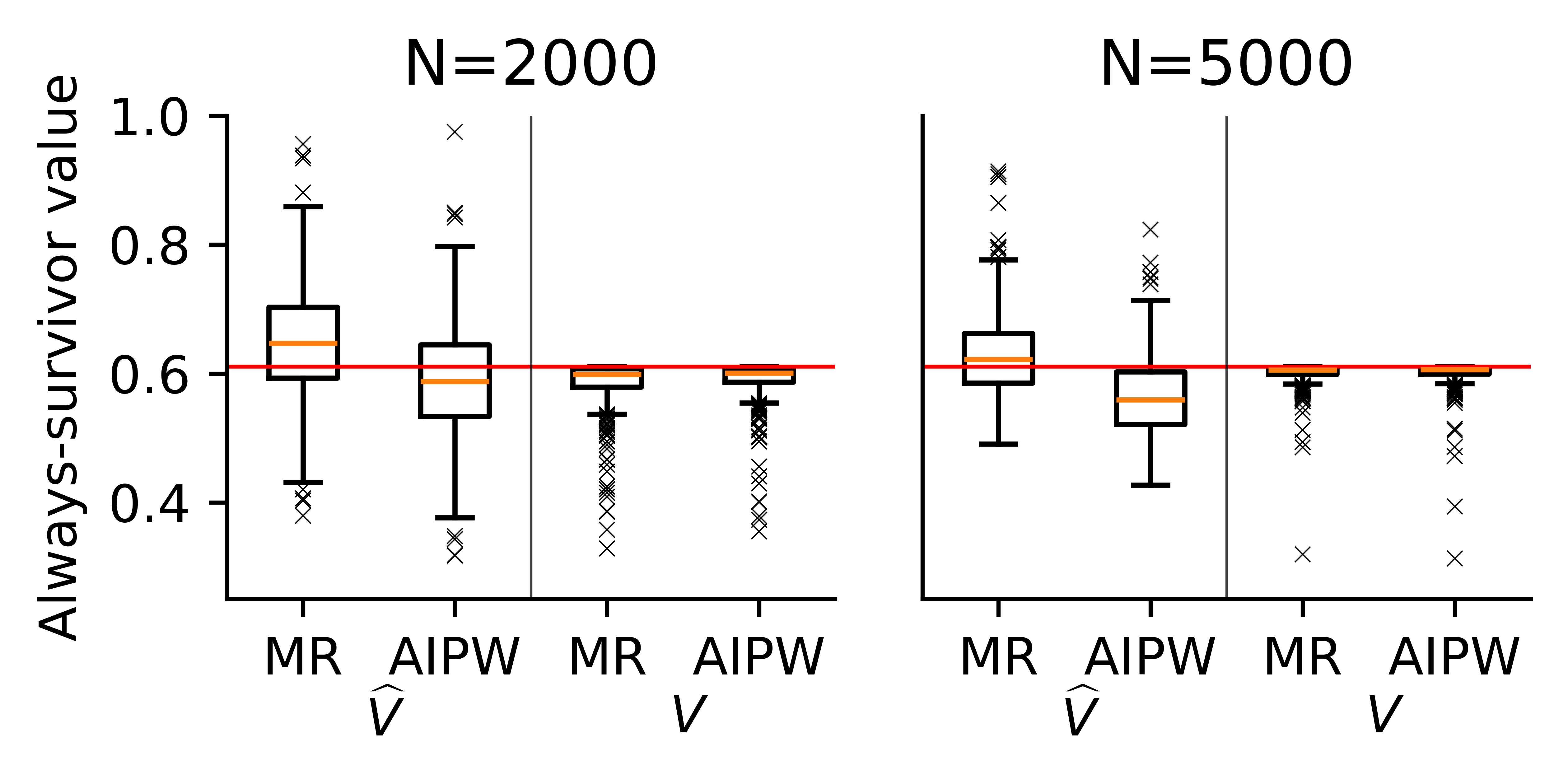}
  \captionof{figure}{Value estimates from 500 independent simulated off-policy learning runs. Left panels of each plot show $\wh V_\text{MR}(\hat\beta_\text{MR})$ and $\wh V_\text{AIPW}(\hat\beta_\text{AIPW})$, while right panels show $V(\hat\beta_\text{MR})$ and $V(\hat\beta_\text{AIPW})$. The red horizontal line indicates the true optimal $V(\beta^*)$.}
  \label{fig:opl2}
\
%\end{figure}
\end{minipage}

\end{document}